\newtheorem{proposition}{Proposition}
\newtheorem{proof}{Proof}
\newtheorem{remark}{Remark}
\def\mub{\boldsymbol{\mu}}
\def\Sig{\boldsymbol{\Sigma}}
\def\xb{\mathbf{x}}
\def\Ab{\mathbf{A}}
\def\ub{\mathbf{u}}
\def\Iden{\mathbf{I}}
\newcommand\Ind{1\!\!1}
\def\mub{\boldsymbol{\mu}}
\def\Sig{\boldsymbol{\Sigma}}
\def\xb{\mathbf{x}}
\def\Ab{\mathbf{A}}
\def\ub{\mathbf{u}}
\def\Iden{\mathbf{I}}
\definecolor{highlight1}{RGB}{204, 52, 41}
\definecolor{highlight2}{RGB}{41, 193, 204}
\begin{document}
\title{A Flexible EM-like Clustering Algorithm for Noisy Data}

\author[1,2]{Violeta Roizman}
\author[2,3]{Matthieu Jonckheere}
\author[1]{Frédéric Pascal}
\affil[1]{Laboratoire des Signaux et Systèmes (L2S), CentraleSupélec-CNRS-Université Paris-Sud, Université Paris-Saclay, 3, rue Joliot Curie, 91192, Gif-sur-Yvette, France}
 \affil[2]{Instituto de Cálculo,
Universidad de Buenos Aires, Intendente G\"{u}iraldes 2160, Ciudad Universitaria-Pabellón II, Buenos Aires, Argentina} 
  \affil[3]{IMAS-CONICET, Buenos Aires, Argentina}

%\author{\name Violeta Roizman \email violeta.roizman@centralesupelec.fr \\
%       \addr L2S-CentraleSupélec-Université Paris-Saclay. 3, rue Joliot Curie, 91192, Gif-sur-Yvette, France.
%       \AND
%       \name Matthieu Jonckheere \email mjonckhe@dm.uba.ar  \\
%       \addr Instituto de Cálculo,
%Universidad de Buenos Aires, \\ Intendente G\"{u}iraldes 2160, Ciudad Universitaria-Pabellón II, Buenos Aires, Argentina.
%       \AND
%       \name Frédéric Pascal \email frederic.pascal@centralesupelec.fr \\
%       \addr L2S-CentraleSupélec-Université Paris-Saclay. 3, rue Joliot Curie, 91192, Gif-sur-Yvette, France.}

%\editor{}

\maketitle

\begin{abstract}%
Though very popular, it is well known that the EM for GMM algorithm suffers from non-Gaussian distribution shapes, outliers and high-dimensionality. In this paper, we design a new robust clustering algorithm that can efficiently deal with noise and outliers in diverse data sets. As an EM-like algorithm, it is based on both estimations of clusters centers and covariances. In addition, using a semi-parametric paradigm, the method estimates an unknown scale parameter per data-point. This allows the algorithm 
%to leverage high-dimensionality and 
to accommodate for heavier tails distributions and outliers without significantly loosing efficiency in various classical scenarios. 
We first derive and analyze the proposed algorithm in the context of elliptical distributions, showing in particular important insensitivity properties to the underlying data distributions.
We then study the convergence and accuracy of the algorithm by considering first synthetic data. Then, we show that the proposed algorithm outperforms other classical unsupervised methods of the literature such as $k$-means, the EM for Gaussian mixture models and its recent modifications or spectral clustering when applied to real data sets as MNIST, NORB and \textit{20newsgroups}.
\end{abstract}

%\begin{keywords}
%clustering, robust estimation, mixture models, semi-parametric model, high-dimensional data
%\end{keywords}

\textbf{Keywords}\\
clustering, robust estimation, mixture models, semi-parametric model, high-dimensional data.

%\matt{
%2) convergence of estimators (cite Maronna and book Fred)
%3) quote the geodesic convex papers??
%}

\section{Introduction}
\label{sec:intro}
The clustering task consists in arranging a set of elements into groups with homogeneous properties/features that capture some important structure of the whole set. As other unsupervised learning tasks, clustering has become of great interest due to the considerable increase in the amount of unlabeled data in the recent years. 
%However, the data comes in many different shapes and forms, and as a consequence there are many different clustering algorithms.
As the characteristics of real-life data---in geometrical and statistical terms---are very diverse, an intensive research effort has been dedicated to define various clustering algorithms which adapt to some particular features and structural properties. We refer to \cite{henning} and  \textcolor{black}{the clustering review by \cite{sklearn}}, 
%[V: I like to cite this but IMO it looks weird with this citation style]}
for discussions on the different methods and on how to choose one depending on the settings.
Among the different types of clustering algorithms, 
%we focus on the so-called model-based family.
{the  Expectation-Maximization (EM) procedure to estimate the parameters of an underlying Gaussian Mixture Model (GMM) \citep[see  for instance the review work by][]{MCLACHLAN1982199} is a very popular method as its model-based nature typically allows other algorithms to be outperformed when the data is low dimensional and the clusters have elliptical shapes.
This model represents the distribution of the data as a random variable given by a mixture of Gaussian distributions. The corresponding clustering criterion is simple: all points drawn from a given normal distribution are considered to belong to the same cluster. The  Expectation-Maximization algorithm (EM) \citep{EMalgo} is a general statistical method used to estimate the parameters of a probabilistic model, based on the maximization of the likelihood. It is an iterative algorithm with two main steps: the expectation part and the maximization part.} In particular for the GMM case, closed-form expressions exist to obtain parameters estimations at the maximization step.\\

However, its performance decreases significantly in various scenarios of particular interest for machine learning applications:
\begin{itemize}

\item
When the data distribution has heavier (or lighter) tails than the Gaussian one and/or in presence of outliers or noise as in Figure \ref{badnoise} \citep[see for instance][]{badperf}. This phenomenon can be simply explained by the non-robustness of the estimators that are computed by the algorithm: means and sample covariance matrices \citep{maronna}.

\item
The presence of different scales in the data might
complicate the global ordering of the observations around their closest
centers (for instance through Mahalanobis distances).
The usual normalization procedure for the estimation of covariance matrices might be too rigid
to get satisfactory clustering results in the presence of significant variability intra and inter-clusters \citep{Tclust}.

\item
When the dimension increases (even in the Gaussian case), the estimation of the covariance matrix is crucially affected by the high-dimensionality as it has been shown by \cite{BOUVEYRON201452}. Some solutions in that direction include regularization and parsimonious models that restrict the shape of the covariance matrix in order to decrease the number of parameters to be estimated \citep{CELEUX1995781}.

\end{itemize}
In order to improve the performance of the GMM-EM clustering algorithm in the context of noisy and diverse data, two main strategies were contemplated. One consists in modifying the model to take into account the noise and the other one is to keep the original model and replace the estimators by others that are able to deal with outliers \citep{nongaussian2}.
In that line of research, several variations of the Gaussian mixture model have been developed. 
%We will mainly be interested in 
In particular, some variations target the problem of mixtures of more general distributions, which allow to model a wider range of data, and possibly allowing for the presence of noise and outliers. 
Regarding the use of non-Gaussian distributions, \cite{Peel2000} proposed an important model defined as a mixture of multivariate $t$-distributions. In this work, the authors suggested an algorithm ($t$-EM or EMMIX in the literature) to estimate the parameters of the mixture with known and unknown degrees of freedom by maximizing the likelihood and addressed the clustering task. More recently, \cite{hyper, hyper2, hyper3} considered hyperbolic and skew $t$-distributions. \\

\begin{figure}[tb]
    \centering
    \includegraphics[width = 0.8\textwidth]{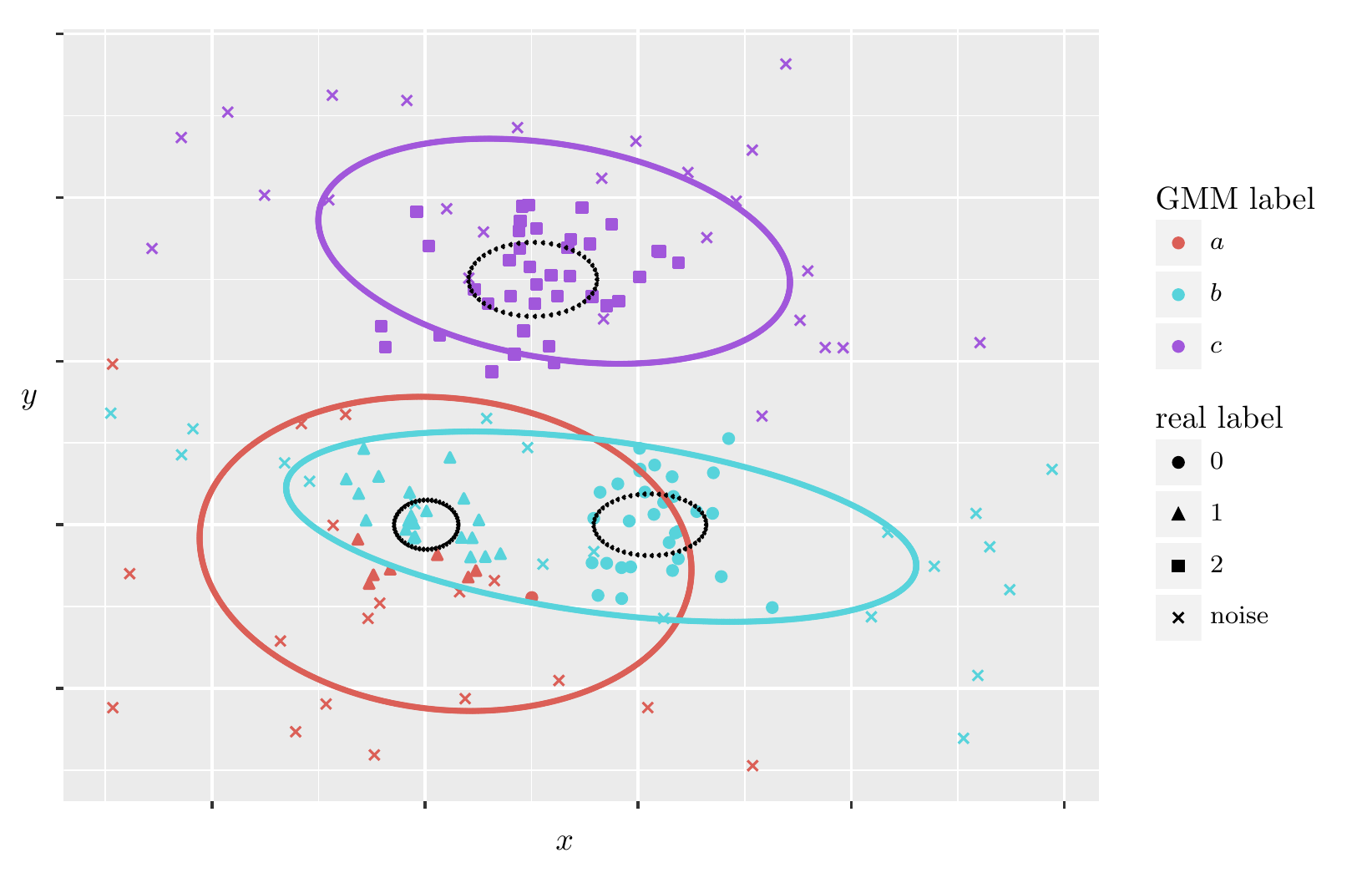}  
    \caption{Clustering result of applying the classic EM for GMM in the presence of uniform noise. The shape of each observation represents the real label (the cross represents the noise). The color of each point represents the assigned label. The dashed ellipses represent the real clusters and the solid  ellipses represent the contours of the estimated distributions.}
    \label{badnoise}
\end{figure}

%The preferred choice is in general highly dependent on the nature of the data to be analyzed. 
%In this work, we assume that each cluster can be well  represented  as a set of points dispersed around a prototype. Each prototype is a summary of the location of each group and could be one of its data points. With this representation assumption, we leave out of our scope toy examples like concentric circles where other families of algorithms have good performance (see for instance \citep{spectral}).

%\textcolor{magenta}{[[PX-EM is a modification of EM for faster convergence, not robustness, not clustering]]: \cite{10.2307/2337481} proposed a modification to EM in order to to accelerate the algorithm by extending the set of parameter. }
Other robust clustering approaches worth mentioning are models which add an extra term to the usual Gaussian likelihood and algorithms with modifications inspired by usual robust techniques as robust point estimators, robust scales, weights for observations and trimming techniques. For instance, \cite{unif} considered the presence of a uniform noise as background while \cite{RIMLE} proposed RIMLE, a pseudo-likelihood based algorithm that filters the low density areas.  \cite{Spatial-EM} replaced the usual mean and sample covariance by the spatial median and the rank covariance matrix (RCM).  \cite{ktau} introduced a robust scale that is used to define a $k$-means-like algorithm that can deal with outliers. Furthermore, \cite{Gonzalezmezcla} proposes a robust mixture of distributions estimation based on robust functionals. Moreover, in the work of \cite{weights1}, \cite{weights2} and \cite{weighted} different weights for the observations were proposed where small weights correspond, as usual in the robust literature, to observations that are far from the cluster centers. Finally, trimming algorithms such as TCLUST \citep{Tclust} leave out a proportion of data points that are far from all the means in order to better estimate the parameters in the M-step.\\

This article aims at defining an algorithm that can both outperform traditional ones under an assumption of diverse data and be adaptive to a large class of underlying distributions. Following the path of robust statistical approaches, we propose to complement it by using a semi-parametric setting, allowing us to reach an 
%to partially benefit from the high-dimensionality while allowing almost total
 important flexibility for the data distributions.
% We focus in the present paper on addressing the robustness issue of GMM by proposing a model-based algorithm that provides good clustering performance in the case of more general distribution mixtures. 
Our method is also inspired by the robust applications of the Elliptical Symmetric (ES) distributions \citep{boente, ollila2012complex}.
%extensively described in the review article
Of course, elliptical distributions have been widely used in many applications where Gaussian distributions were not able to approximate the underlying data distribution because of presence of heavy tails or outliers \citep{CFAR, radar}. This general family includes, among others, the class of compound-Gaussian distributions that contains  Gaussian, $t-$ and $k-$ distributions \citep{CG, CG2, c1} as well as the class of Multivariate Generalized Gaussian Distributions \citep{pascal2013parameter}.\\ %\textcolor{magenta}{\st{The proposed method mimics indeed a mixture of distributions within the latter very general class. Thus, our model is intended to be, by construction, more general than both the classic GMM and the model proposed by}} \cite{Peel2000}.\\

%\matt{Should this paragraph be before the  presentation of our work? We could also claim why what we do might be advantegeous in some cases}

In this paper, we present 
\begin{itemize}
    \item A general mixture model, involving one scale parameter per data point, leading to an important flexibility in the resulting clustering algorithm. While not being parameters of interest for the clustering task, those parameters are estimated and their estimators are fully analyzed since they play an indirect role in the clustering algorithm; 
    \item A clustering algorithm with the following characteristics: 1/ it follows the two steps expectation and maximization of EM algorithms, 2/ at the E-step, it provides estimated conditional probabilities, robust in the sense that the expected conditional log-likelihood leads to estimators independent of the distributions shapes 3/ at the M-step, it derives estimations of clusters centers and covariance matrices which turn out to be robust.
\end{itemize}

There are hence two types of estimations.  On the one hand, as all EM-like algorithms, we perform an estimation of the parameters of interest: clusters proportions, means and covariance matrices.
On the  other hand, we use the  estimation of scale (or nuisance) parameters, (which are not of direct interest) to improve the estimations of the parameters of interest as well as robustify the estimation of the probability for an observation to belong to a given cluster. More precisely, we show in this paper that, under mild assumptions, those probabilities estimates do not depend on the shape of data distributions, making the algorithm generic, simple and robust. It can be noticed that the scale/nuisance parameters could also be used for classification and
%for clustering, they are implicitely used for clustering here... 
outlier detection purposes by discriminating data and helping data assignment \citep{mahalanobiseusip} .\\ 

A key feature of the proposed algorithm is to be self-contained in the sense that no \textit{extra-parameters} need to be tuned as it is the case for aforementioned approaches (\textit{e.g.,} penalty parameters, rejection thresholds,\textcolor{black}{ and} other distribution parameters such as shapes or the degrees of freedom).\\

In the sequel, we include practical and theoretical studies that provide evidence about the algorithm performance.
In particular, we theoretically justify the efficiency of our algorithm using various arguments:
\begin{enumerate}
    \item  When the underlying model belongs to the class of elliptical distributions, with different means and dispersion matrix per cluster but with cluster-independent density generators (even different ones within clusters)
    %(\textit{i.e.,} each observation can be drawn from one distribution, possibly different from other observations ones, even in the same cluster)}, 
    then the estimation of membership probabilities does not depend on each specific density function. This is a consequence of the fact that those probabilities estimations do not depend on the scale factors of the covariance matrix but only on the scatter/dispersion matrices. Hence, the algorithm makes no mismatch error when the density generator is unknown, whenever this assumption is fulfilled. This is shown in Proposition \ref{prop.pik}
    
    \item
Even when the density function is different for every cluster, there are regimes, where the mismatch error can be controlled. We give an example using  $t-$distributions with various degrees of freedom. See Proposition \ref{prop.student}.
    
\item
    Finally, though the estimation of covariance matrix becomes clearly challenging in high-dimensional settings, estimations of the nuisance parameters get typically more accurate and faster when the dimension grows large, using a simple law of large numbers in the dimension. See Proposition \ref{prop.hd}.

\end{enumerate}

%\textcolor{magenta}{\st{As will be shown later on, though the number of parameters gets of the same order as the amount of data, the algorithm does not loose much in efficiency, even for instance when the data is close to be Gaussian-distributed.  On the contrary, in most cases it is more efficient for noisy data; because of its flexibility to accommodate for larger tails and outliers. Besides and more importantly, estimations of the parameters of interest is typically more accurate and faster when the dimension grows large. A theoretical explanation is given in Section }\ref{HD}.}\\

%robust model-based algorithm  that models each cluster as an approximation of  a compound Gaussian distribution.
 %As underlined previously, this proposal generalizes the GMM and the model considered in \citep{c2}. A compound Gaussian distribution is identified by position and dispersion parameters plus a scale random variable that affects the dispersion. Each scale random variable observation is estimated as if it were a parameter in order to recover equations similar to GMM derivations. The proposed algorithm has an EM approach to estimate the parameters and the label assignments are based on these estimations. 
 
 From a practical perspective,
 the induced clustering performance is largely improved compared to k-means, the EM algorithm for GMM and HDBSCAN \citep{HDBSCAN1, HDBSCAN2, mcinnes2017hdbscan} 
%[we give HDBSCAN PERFORMANCE ???, ANS: we say   it is really bad, close to 0] 
  when applied to real data sets such as MNIST variations \citep{MNIST}, NORB \citep{NORB} and \textit{20newsgroups} \citep{20newsgroup}. In agreement with the proposed results, previous works on classification of the MNIST dataset suggest the non-gaussianity of the clusters \citep{Liao2017ALD}. Compared to spectral clustering and $t$-EM, TCLUST and RIMLE our algorithm performs similarly in classic cases and much better in others. Furthermore, the proposed algorithm is able to provide accurate estimations of location and dispersion parameters even in the presence of heavy tailed distributions or additive noise as proved in simulations where our algorithm beats the other compared models. \\

The rest of the paper is organized as follows. 
%In section 2, we briefly review the relevant background and comment on some particular related models of interest. 
In Section 2, after introducing in details the models of interest, we present the clustering algorithm and discuss some of its important aspects, notably by proving convergence results on the parameters estimation. Section 3 is devoted to the experimental results, which allow us to show the improved performance of the proposed method for different synthetic and real data sets in comparison with other commonly used methods. Finally, conclusions and perspectives are stated in Section 4. \\

%\textit{Notation:} Vectors (resp. matrices) are denoted by boldfaced lowercase letters (resp. uppercase letters). $\mathbf{A}^T$ represents the transpose of $\mathbf{A}$, $|\mathbf{A}|$ represents the determinant of $\mathbf{A}$ and $\text{tr}(\mathbf{A})$ represents the trace of $\mathbf{A}$. $i.i.d.$ stands for ``independent and identically distributed,'' w.r.t stands for ``with respect to'' and $\sim$ means ``is distributed as.'' $\mathcal{N}(\mub, \Sig)$ represents a Gaussian distribution, with mean $\mub$ and covariance matrix $\Sig$.

%There are many works considering more general mixture models as for example \citep{c2}[] that derivate EM algorithms for mixture of Student's t-distributions. In this case there is no closed-form equation but .

\section{Model and Theoretical Justifications}\label{sec.themodel}

In this section, we present a detailed description of the underlying theoretical model and the proposed clustering algorithm. Given $\{\mathbf{x}_i\}_{i=1}^n$ a set of $n$ data points in $\mathbb{R}^m$, let us start by considering them as independent samples drawn from a mixture of distributions with the following probability density function (pdf):

\begin{equation}
f_{i}(\textbf{\textit{x}}_i)=\sum_{k=1}^K \pi_k f_{i,\theta_k}(\textbf{\textit{x}}_i) \ \ \text{with} \ \ \sum_{k=1}^{K}\pi_k=1,
\label{model}
\end{equation}
where $\pi_k$ represents the proportion of the $i^{\mathrm{\underline{th}}}$ distribution associated with some parameters $\theta_k$ in the mixture.
The notation $f_{i,\theta_k}$ is used for simplicity and stands for a pdf $f_{i,k,\theta_k}$ that may depend in principle on cluster $k$ and in general on some ``cluster parameters'' grouped in $\theta_k$ as well as on some extra nuisance parameter $\tau_{ik}$. We remark that the subscript $i$ is used in $f_{i,\theta_k}$ to stress that distributions can be different from one observation to another.

\begin{remark}
Let us underline the level of generality of the model: the $K$ clusters are only characterized by parameters $\theta_k$ while the shape of the distributions can change from one observation to another. We explain the relevance of such a general structure in the next paragraph, where we fix a set of distributions for the $f_{i,\theta_k}$.
\end{remark}

In the sequel, we consider a very large class of distributions in order to generalize the classical Gaussian mixture model: the Elliptically Symmetric (ES) distributions. The pdf of an $m$-dimensional random vector $\mathbf{x}_i$ that is ES-distributed with mean $\mub_k$ and covariance matrix $\tau_{ik}\Sig_k$ can be written as
\begin{equation}
\label{pdf}
f_{i,\theta_k}(\textbf{\textit{x}}_i)= A_{ik} |\Sig_k |^{-1/2}\tau_{ik}^{-m/2}\,g_{i,k}\left(\frac{
(\textbf{\textit{x}}_i-\mub_k)^T \Sig_k^{-1}(\textbf{\textit{x}}_i-\mub_k)}{\tau_{ik}}\right) 
\end{equation} 
where $A_{ik}$ is a normalization constant,
$g_{i,k}:[0,\infty)\rightarrow [0,\infty)$ is any function (called the density generator) such that \eqref{pdf} defines a pdf. The matrix $\Sig_k$ reflects the structure of the covariance matrix of $\textbf{\textit{x}}_i$.  Note that the covariance matrix is equal to $\Sig_k$ up to a scale factor if the distribution has a finite second-order moment  \citep[see for details][]{ollila2012complex}. This is denoted $ES(\mub_k, \tau_{ik}\Sig_k, g_{i,k}(.))$.
Note that the clustering parameter is $\theta_k= (\pi_k, \mub_k,\Sig_k) $ while the nuisance parameter is $\tau_{ik}$. For convenience, we denote by $\theta=(\theta_1, \ldots,\theta_K)$ the set of all clustering parameters.\\

At this stage, some comments have to be mentioned:
\begin{enumerate}
\item
When $g_{i,k}=g, \ \forall i,k$, then we retrieve a classic paradigm for clustering modelling. All the points follow the same ES distribution but each class has a different mean and covariance matrix. However, note that this model is much more general than a Gaussian mixtures model as the class of ES distributions is much wider and includes in particular lighter and heavier tails than Gaussian ones. An important aspect of our results is that our algorithm is in that case {\bf insensitive} to the function $g$, and hence allows
to treat efficiently real data sets where $g$ is not known.
%(and might be difficult to estimate).
\item
When $g_{i,k}=g_i, \ \forall i,k$, we obtain a much more general model than the previous one, where, even if the distribution of the points do not depend on the classes except for their mean and covariances, the data within a class might follow e.g., a mixture of ES distributions. It has a practical importance since many data sets are compiled from different sources of data with different characteristics. In that case again, our algorithm is {\bf still insensitive} to the functions $g_i$ giving a lot of modelling flexibility and mismatch robustness.
\item When $g_{i,k}=g_k, \ \forall i,k$, then we consider one different ES distribution per class of data. For this non-standard settings, the clustering results {\bf do depend on $g_k$ which can be a practical obstacle to get sound results}. However, we show that our method can alleviate this dependence leading to good performance in some regimes.
\end{enumerate}

Elliptical distributions have been used in many applications where  one has to deal with the presence of heavy tails or outliers \citep{CFAR, radar}. This general family includes Gaussian, $t-$ and $k-$ distributions, among others \citep{CG, CG2, c1} .  Such modelling admits a Stochastic Representation Theorem.  A vector $\mathbf{x}_i \sim ES(\mub_k, \tau_{ik}\Sig_k, g_{i,k}(.))$ if and only if it admits the following stochastic representation \citep{Yao73}
\begin{equation}\label{rep-thm}
\mathbf{x}_i \overset{d}{=}\mub_k + \sqrt{Q_{ik}} \sqrt{\tau_{ik}}\Ab_k \ub_i,
\end{equation}
where the non-negative real random variable $Q_{ik}$, called the modular variate, is independent of the random vector $\ub_i$ that is uniformly distributed on the unit $m$-sphere and $\Ab_k\Ab_k^T$ is a factorization of $\Sig_k$ while  $\tau_{ik}$ is a deterministic but unknown nuisance parameter. \\

Note that, in this work, one considers that $\mathbf C_{ik} = \tau_{ik}\, \Sig_k$ can also depend on the $i^{\mathrm{\underline{th}}}$ observation, through the nuisance parameter. Now, for identifiability purposes, we assume that the distributions at hand have a second-order moment and that $\mathbf C_{ik}$ is the covariance matrix.
This assumption implies the particular normalization on $Q_{ik}$, that is 
$$E[Q_{ik}] = \text{rank}(\mathbf C_{ik}) \left( = \text{rank}(\Sig_k)\right) = m, \text{ when $\Sig_k$ is full rank},$$ 
following for instance \cite{ollilatyler}. In the sequel, we hence call $\mathbf C_{ik}$ the covariance matrix and $\Sig_k$ the scatter matrix.\\

Finally, an ambiguity remains in the scatter matrix $\Sig_k$. Indeed, for any positive real number $c$, $\left(\tau_{ik}, \Sig_k\right)$ and $\left(\tau_{ik}/c, c\,\Sig_k\right)$ lead to the same covariance matrix $\mathbf C_{ik}$. In this work, we choose to fix the trace of $\Sig_k$ to $m$. Other normalizations could have been chosen instead as for instance imposing a unit-determinant for $\Sig_k$ without affecting the clustering results.\\

%\matt{Paragraph for later??}

\cite{ollilatyler} showed in the complex case that, given random sample from $\mathbf{x}_i \sim CES(\mathbf{0}_m, \tau_{ik}\Sig_k, g_{i,k}(.))$, the estimation of $\tau_{ik}$ using Maximum Likelihood Estimation (MLE) is decoupled from the estimation of $\Sig_k$.
Furthermore, the authors proved that the maximum likelihood estimator for $\Sig_k$ is the Tyler's estimator, regardless the functions $g_i$.
This is a remarkable result, underlying the universal character of the Tyler estimator in this class of distributions.
We will build on this distribution-free property of the Tyler's estimator which turns out to be central for our results.

\subsection{The M-step: Parameter Estimation for the Mixture Model}\label{model2}
 
%For the clustering task, we start with $K$  Elliptical Symmetric distributions, but, in general, the specific distribution for each cluster is not supposed to be known. 
Similarly to the EM for GMM, we extend the model with $n$ discrete variables $Z_i$ (with $i=1 \dots n$), that are not observed (corresponding to the so-called latent variables), representing the cluster label of each observation $\mathbf{x}_i$. We compute the label for each observation and cluster in the E-step, while in the M-step we estimate the parameters of interest 
$\theta=(\theta_k)_{k=1}^K$.\\
%with $\theta_k=\left(\pi_{k}, \mub_k, \Sig_k\right)$. \\

Given a sample $\textbf{\textit{x}}=(\textbf{\textit{x}}_1^T,...,\textbf{\textit{x}}_n^T)^T$, a set of parameters $\theta$, and the latent variables $Z=(Z_1,...,Z_n)^T$.  The expected conditional log-likelihood of the model is
 \begin{eqnarray}
&& E_{Z|\textbf{\textit{x}},\theta^{*}}[l(Z, \textbf{\textit{x}};\theta)] = \ \ \sum_{i=1}^n \sum_{k=1}^K P_{i, \theta*}(Z_i=k|\mathbf{x}_i=\textbf{\textit{x}}_i)\log(\pi_k f_{i,\theta_k}(\textbf{\textit{x}}_i)) 
 \label{exp-log-likelihood} \\
 && = \sum_{i=1}^n\sum_{k=1}^K  p_{ik} \left[\log(\pi_k) + \log(A_{ik})
+ \log\left(|\textbf{C}_{ik}|^{-1/2}g_i((\textbf{\textit{x}}_i-\mub_k)^T\textbf{C}_{ik}^{-1}(\textbf{\textit{x}}_i-\mub_k))\right)\right],    \nonumber
%\label{likelihood}
\end{eqnarray}
where, $p_{ik}=P_{i, \theta^*}(Z_i=k|\mathbf{x}_i=\textbf{\textit{x}}_i)$ with $\displaystyle \sum_{k=1}^K p_{ik} = 1$ and $\textbf{C}_{ik}=\tau_{ik} \Sig_k$. 

We now include two propositions that summarize the derivation of the estimators for all the parameters of the model. As underlined previously, a key step using the ideas in \cite{ollilatyler}, consists in factorizing the likelihood into two factors which further allows to describe fundamental properties of the estimators in the E and M steps. 
In Proposition \ref{proptaus}, we derive the estimator for the $\tau$ parameters. Then, in Proposition \ref{propmusigma}, we derive the rest of the parameters of the model.

\begin{proposition} \label{prop:est_tau}
Suppose $\mathbf x_1,...,\mathbf x_n$ an independent sample with $\mathbf x_i \sim ES(\mub_k, \tau_{ik}\Sig_k, g_{i,k}(.))$ for some $k \in \{1,...,K\}$. 
Suppose $\int t^{m/2}g_{i,k}(t) dt < \infty, \ \forall i,k$.
Then, the derivation of the maximum likelihood estimation of the $\tau_{ik}$ parameters is decoupled from the one of the rest of the estimators. For fixed parameters $\Sig_k$ and $\mub_k$, the $\tau_{ik}$'s estimators are computed as 
\begin{equation}
\widehat{\tau}_{ik} = \frac{(\mathbf{{x}}_i-\mub_k)^T\Sig_{k}^{-1}(\mathbf{{x}}_i-\mub_k)}{a_{ik}}, \   \forall 1 \leq i \leq n \text{ \ and \ } \forall 1\leq k \leq K, 
\label{eq.taua}
\end{equation}
\label{proptaus}
with $a_{ik} = \underset{t}{\arg\sup}\{t^{m/2}g_{i,k}(t)\}$.
\end{proposition}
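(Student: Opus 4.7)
The plan is to treat the proposition as a profile-likelihood computation: fix $\mub_k$ and $\Sig_k$, notice that the expected conditional log-likelihood \eqref{exp-log-likelihood} is \emph{separable} in the nuisance parameters $\tau_{ik}$, and then solve each one-dimensional optimization in closed form by a change of variable.

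First I would isolate the $\tau_{ik}$-dependent pieces in \eqref{exp-log-likelihood}. Using the explicit pdf \eqref{pdf}, the only terms involving $\tau_{ik}$ for a given pair $(i,k)$ are
\begin{equation*}
p_{ik}\left[-\frac{m}{2}\log\tau_{ik} + \log g_{i,k}\!\left(\frac{s_{ik}}{\tau_{ik}}\right)\right], \qquad s_{ik} := (\xb_i-\mub_k)^T \Sig_k^{-1}(\xb_i-\mub_k).
\end{equation*}
Since distinct $(i,k)$ pairs do not share $\tau_{ik}$, and since $p_{ik}\ge 0$ is treated as fixed in the M-step, maximizing over the whole collection $\{\tau_{ik}\}$ reduces to $n\cdot K$ independent scalar problems. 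This is the ``decoupling'' announced in the statement: once $(\mub_k,\Sig_k)$ is fixed, each $\widehat\tau_{ik}$ is determined by the one-dimensional sub-problem above, independently of any other parameter of the model (including $\pi_k$ and the normalizing constant $A_{ik}$, which do not involve $\tau_{ik}$).

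Next, for fixed $(i,k)$ with $s_{ik}>0$, I would perform the bijective change of variable $t=s_{ik}/\tau_{ik}$, which maps $\tau_{ik}\in(0,\infty)$ onto $t\in(0,\infty)$. The objective becomes
\begin{equation*}
-\frac{m}{2}\log\tau_{ik} + \log g_{i,k}\!\left(\frac{s_{ik}}{\tau_{ik}}\right) \;=\; -\frac{m}{2}\log s_{ik} + \log\!\left(t^{m/2}\,g_{i,k}(t)\right),
\end{equation*}
so up to a constant (in $\tau_{ik}$) we are maximizing $\log\!\left(t^{m/2}g_{i,k}(t)\right)$, equivalently $t\mapsto t^{m/2}g_{i,k}(t)$. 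By definition this maximum is attained at $t=a_{ik}$, which then yields $\widehat\tau_{ik}=s_{ik}/a_{ik}$, exactly \eqref{eq.taua}.

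The one point that requires real care---and is essentially the only obstacle---is to argue that $a_{ik}$ exists and is finite. Here the assumption $\int t^{m/2}g_{i,k}(t)\,dt<\infty$ enters: it is equivalent to the integrability of the radial part of the pdf \eqref{pdf} after the standard polar change of variables, and it forces $t^{m/2}g_{i,k}(t)\to 0$ as $t\to\infty$; together with continuity and non-negativity of $g_{i,k}$ (and the fact that $t^{m/2}g_{i,k}(t)\to 0$ as $t\to 0^+$ for $m\ge 1$), this guarantees that the supremum of $t\mapsto t^{m/2}g_{i,k}(t)$ is attained on a compact subset of $(0,\infty)$, so $a_{ik}=\arg\sup_t\{t^{m/2}g_{i,k}(t)\}$ is well defined. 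Uniqueness is not needed for the statement; any maximizer $a_{ik}$ yields a valid MLE. I would finally mention the degenerate case $s_{ik}=0$ (i.e.\ $\xb_i=\mub_k$), where the profile objective becomes $-\tfrac{m}{2}\log\tau_{ik}$, so the likelihood is maximized in the limit $\tau_{ik}\to\infty$; this is a measure-zero event that does not affect the generic formula.
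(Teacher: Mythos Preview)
Your argument is essentially the paper's: isolate the $\tau_{ik}$-dependent part of \eqref{exp-log-likelihood}, change variables to $t=s_{ik}/\tau_{ik}$, and read off $\widehat\tau_{ik}=s_{ik}/a_{ik}$, with the integrability hypothesis forcing $t^{m/2}g_{i,k}(t)\to 0$ so that $a_{ik}$ is finite. Two small remarks: (i) the paper interprets ``decoupling'' more strongly than you do---it explicitly splits the objective as $\sum_k l_{0k}(\pi_k,\mub_k,\Sig_k)+\sum_{i,k} l_{ik}(\cdot,\tau_{ik})$ and then observes that, once $\widehat\tau_{ik}$ is plugged in, the second sum equals $\sum_{i,k}p_{ik}\log\big(a_{ik}^{m/2}g_{i,k}(a_{ik})\big)$, which is \emph{constant} in $(\pi_k,\mub_k,\Sig_k)$; this is what makes the subsequent M-step for $(\mub_k,\Sig_k)$ distribution-free and is the point you should make explicit; (ii) in your degenerate case $s_{ik}=0$ the objective $-\tfrac{m}{2}\log\tau_{ik}$ is maximized as $\tau_{ik}\to 0^+$, not $\tau_{ik}\to\infty$.
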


\begin{proof}
See Appendix \ref{app-proof-prop1}.
\end{proof}

%\begin{remark}
%\begin{itemize}
    %\item 
    %\matt{take this remark out}
    %First, notice that, as expected, $\widehat{\tau}_{ik}$ depends on  $k$. Indeed, it is also the case for the conditional probabilities $p_{ik}$. Here, the difference is that, once $\mathbf{x}_i$ is correctly assigned to a class, there is only one estimator of $\tau_{ik}$.
    
 %   \item 

%\matt{In the case $g_{i,k}=g_i, \ \forall i, k$, i.e., the shapes do not depend on the clusters,} 
%We remark that the $\tau_{ik}$'s parameters are derived from only one term of the likelihood whose maximum is $\max \{t^{m/2}g_{i}(t)\}$. In other words, the maximum does not depend on the rest of the parameters but only on the density generator $g_{i}$. On the other hand, the other parameters $\pi$, $\mub$ and $\Sig$ are derived from the other part of the likelihood which does not depend on the density generator function $g_{i}$. These estimators that are derived independently of the density generator are the most important for the clustering task. We derive them in the following proposition.
 %   \end{itemize}
    
%\end{remark}

We now describe the ML estimators for $\theta$.

\begin{proposition}\label{propmusigma}
Given an independent random sample $\mathbf{x}_1,...,\mathbf{x}_n$, the latent variables $Z_i$, and expected conditional log-likelihood of the model stated before, the maximization w.r.t. $\theta_k$ for $k=1,\hdots, K$, leads to  the following equations that the estimators have to fulfill. The closed equations

% \begin{equation}
% \widehat{\pi}_k = \,\, \underset{\pi_k}{\arg\max}(E[l(\{Z_i, \textbf{\textit{x}}_i\};\theta_k)]) = \frac{1}{n}\sum_{i=1}^n p_{ik}
%  \label{pij}
% \end{equation}
\begin{equation}
\widehat{\pi}_k =  \frac{1}{n}\sum_{i=1}^n p_{ik}
 \label{pij}
\end{equation}
for the proportion of each distribution, 

% \begin{equation}
% \widehat{\mub}_k = \,\, \underset{\mub_k}{\arg\max}(E[l(\{Z_i, \textbf{\textit{x}}_i\};\theta_k)]) = \sum\limits_{i=1}^n  c_{ik} \,\mathbf{x}_i   %\label{muj}
% \ \ \text{with} \ \ 
% \displaystyle c_{ik}=\cfrac{\cfrac{p_{ik}}{(\mathbf{x}_i-\mub_k)^T\Sig_{k}^{-1}(\mathbf{x}_i-\mub_k)}}{\displaystyle\sum\limits_{l=1}^n\cfrac{p_{lk}}{(\mathbf{x}_l-\mub_k)^T\Sig_{k}^{-1}(\mathbf{x}_l-\mub_k)}},
% \label{cij}
% \end{equation}
\begin{equation}
\widehat{\mub}_k =  \sum\limits_{i=1}^n  c_{ik} \,\mathbf{x}_i   %\label{muj}
\ \ \text{with} \ \ 
\displaystyle c_{ik}=\cfrac{\cfrac{p_{ik}}{(\mathbf{x}_i-\widehat{\mub}_k)^T\widehat{\Sig}_{k}^{-1}(\mathbf{x}_i-\widehat{\mub}_k)}}{\displaystyle\sum\limits_{l=1}^n\cfrac{p_{lk}}{(\mathbf{x}_l-\widehat{\mub}_k)^T\widehat{\Sig}_{k}^{-1}(\mathbf{x}_l-\widehat{\mub}_k)}},
\label{cij}
\end{equation}
for the mean of each distribution, and

% \begin{equation}
% \displaystyle\widehat{\Sig}_k = \,\, \underset{\Sig_k}{\arg\max}(E[l(\{Z_i, \textbf{\textit{x}}_i\};\theta_k)]) = m \sum_{i=1}^n \frac{w_{ik}(\mathbf{x}_i-{\mub_k})(\mathbf{x}_i-{\mub_k})^T}{(\mathbf{{x}}_i-\mub_k)^T\Sig_{k}^{-1}(\mathbf{{x}}_i-\mub_k)}     
% \ \ \text{with} \ \ 
% \displaystyle w_{ik}=\cfrac{p_{ik}}{\displaystyle\sum_{l=1}^n p_{lk}},  
% \label{wij}
% \end{equation}
\begin{equation}
\displaystyle\widehat{\Sig}_k = m \sum_{i=1}^n \frac{w_{ik}(\mathbf{x}_i-{\widehat{\mub}_k})(\mathbf{x}_i-{\widehat{\mub}_k})^T}{(\mathbf{{x}}_i-\widehat{\mub}_k)^T\widehat{\Sig}_{k}^{-1}(\mathbf{{x}}_i-\widehat{\mub}_k)}     
\ \ \text{with} \ \ 
\displaystyle w_{ik}=\cfrac{p_{ik}}{\displaystyle\sum_{l=1}^n p_{lk}},  
\label{wij}
\end{equation}
for the scatter matrices.

\end{proposition}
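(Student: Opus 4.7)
The plan is to substitute the $\widehat{\tau}_{ik}$ obtained in Proposition \ref{proptaus} back into the expected conditional log-likelihood \eqref{exp-log-likelihood}, producing a concentrated objective in the remaining parameters $(\pi_k,\mub_k,\Sig_k)$, and then maximize it coordinate-wise. Writing $Q_{ik}(\mub_k,\Sig_k) = (\mathbf{x}_i-\mub_k)^T\Sig_k^{-1}(\mathbf{x}_i-\mub_k)$ for the Mahalanobis-type form, the substitution $\widehat{\tau}_{ik}=Q_{ik}/a_{ik}$ forces the argument of $g_{i,k}$ to equal the constant $a_{ik}$, so the $g_{i,k}$ contribution collapses to an additive constant $\log\!\left(a_{ik}^{m/2} g_{i,k}(a_{ik})\right)$. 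All remaining $(\mub_k,\Sig_k)$ dependence reduces to
\begin{equation*}
\sum_{i=1}^n\sum_{k=1}^K p_{ik}\left[\log\pi_k - \tfrac{1}{2}\log|\Sig_k| - \tfrac{m}{2}\log Q_{ik}(\mub_k,\Sig_k)\right] + \mathrm{const}.
\end{equation*}
This decoupling is the crucial point: the M-step stationarity equations will be identical for every admissible family $\{g_{i,k}\}$, which is exactly the mechanism behind the generator-insensitivity emphasized earlier.

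For $\widehat{\pi}_k$, I would introduce a Lagrange multiplier for the constraint $\sum_k\pi_k=1$; differentiating $\sum_{i,k} p_{ik}\log\pi_k$ and using $\sum_k p_{ik}=1$ yields the standard formula \eqref{pij}. For $\widehat{\mub}_k$, I would use $\partial Q_{ik}/\partial\mub_k = -2\Sig_k^{-1}(\mathbf{x}_i-\mub_k)$, so setting the gradient of the concentrated objective w.r.t. $\mub_k$ to zero gives $\sum_{i=1}^n \frac{p_{ik}}{Q_{ik}}(\mathbf{x}_i-\mub_k)=0$. Solving for $\mub_k$ and normalizing the scalar weights to sum to one yields the expression \eqref{cij} with the coefficients $c_{ik}$.

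For $\widehat{\Sig}_k$, I would apply the matrix identities $\partial \log|\Sig_k|/\partial\Sig_k = \Sig_k^{-1}$ and $\partial Q_{ik}/\partial\Sig_k = -\Sig_k^{-1}(\mathbf{x}_i-\mub_k)(\mathbf{x}_i-\mub_k)^T\Sig_k^{-1}$, giving a stationarity condition which, after left- and right-multiplication by $\Sig_k$, reads
\begin{equation*}
\tfrac{1}{2}\,\Sig_k \sum_{i=1}^n p_{ik} \;=\; \tfrac{m}{2}\sum_{i=1}^n \frac{p_{ik}(\mathbf{x}_i-\mub_k)(\mathbf{x}_i-\mub_k)^T}{Q_{ik}(\mub_k,\Sig_k)}.
\end{equation*}
Dividing by $\tfrac{1}{2}\sum_l p_{lk}$ recovers \eqref{wij} with weights $w_{ik}=p_{ik}/\sum_l p_{lk}$. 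The main subtlety is not algebraic but structural: both \eqref{cij} and \eqref{wij} are implicit, since $\widehat{\mub}_k$ and $\widehat{\Sig}_k$ appear on both sides through $Q_{ik}$, so these should be read as fixed-point equations. Existence, uniqueness up to the trace normalization $\mathrm{tr}(\Sig_k)=m$, and convergence of the natural iterative scheme follow from the Tyler-type theory invoked via \cite{ollilatyler}, which is the same property that drives the $g$-insensitivity of the M-step.
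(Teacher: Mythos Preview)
Your argument is correct and mirrors the paper's proof: both profile out $\tau_{ik}$ via Proposition~\ref{proptaus} to obtain the concentrated objective $\sum_{i,k} p_{ik}\bigl[\log\pi_k - \tfrac{1}{2}\log|\Sig_k| - \tfrac{m}{2}\log Q_{ik}\bigr]$ (the paper's $l_{0k}$), then differentiate with a Lagrange multiplier for $\pi_k$ and directly for $\mub_k$ and $\Sig_k$; the only cosmetic difference is that the paper differentiates with respect to $\Sig_k^{-1}$ rather than $\Sig_k$.

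One caveat on your final sentence: you assert that existence, uniqueness up to trace normalization, and convergence of the iterative scheme ``follow from the Tyler-type theory invoked via \cite{ollilatyler}''. The paper is careful \emph{not} to claim this. It points out that Maronna's conditions on the $u$-function are not satisfied here, that Kent's reduction trick does not apply, and that joint convergence of the coupled mean--scatter fixed-point equations remains an open problem even in the single-population (non-mixture) case. The proposition is only about the form of the stationarity equations, so your derivation stands; but the closing remark overreaches and should be softened or dropped.
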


\begin{proof}
See Appendix \ref{app-proof-prop2}.
\end{proof}
\label{testi}

%\medskip

It follows from the derivation of Proposition \ref{propmusigma} that there is a system of two fixed-point equations given by

\begin{equation}
\widehat{\mub}_k = \cfrac{\sum\limits_{i=1}^n \cfrac{p_{ik} \mathbf{x}_i}{(\mathbf{x}_i-\widehat{\mub}_k)^T\widehat{\Sig}_k^{-1}(\mathbf{x}_i-\widehat{\mub}_k)}}{\sum\limits_{i=1}^n \cfrac{p_{ik}}{(\mathbf{x}_i-\widehat{\mub}_k)^T\widehat{\Sig}_k^{-1}(\mathbf{x}_i-\widehat{\mub}_k)}} 
\label{fix1}
\end{equation}
and 

\begin{equation}
\widehat{\Sig}_k= m \sum_{i=1}^n \frac{w_{ik}(\mathbf{x}_i-\widehat{\mub}_k)(\mathbf{x}_i-\widehat{\mub}_k)^T}{(\mathbf{x}_i-\widehat{\mub}_k)^T\widehat{\Sig}_k^{-1}(\mathbf{x}_i-\widehat{\mub}_k)},    
\label{fix2}
\end{equation}
that hold for the estimators of $\mub_k$ and $\Sig_k$, 
with $w_{ik}$ defined in Eq.~\eqref{wij}. In order to obtain these linked estimators, this system is iteratively solved as explained in Section \ref{sec.implement}.\\

We can now prove a fundamental property of the algorithm which is the monotonicity of the likelihood of the model. We later illustrate this property with simulations in Section \ref{sec.implement}. To establish more precise guarantees of convergence we would need a data-driven approach as it was developed for instance in \citep{wu2016convergence}. We leave this analysis for future work.

\begin{proposition}\label{prop.mono}
Given the expected log-likelihood in \eqref{exp-log-likelihood}, the observed likelihood 
$$l(\textbf{x};\theta) = \sum_{i=1}^n \log \sum_{k=1}^K \pi_k f_{i,\theta_k}(\textbf{\textit{x}}_i),$$ 
and assuming the convergence of the fixed-point equations system derived in Proposition \ref{propmusigma}, the steps defined by the estimator updates from Propositions \ref{proptaus} and \ref{propmusigma} lead to a succession $\{\theta^{t}\}_{t=1}^N$ with an  increasing likelihood.
\end{proposition}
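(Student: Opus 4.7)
The plan is to adapt the standard Dempster--Laird--Rubin monotonicity argument to this semi-parametric setting. Write the observed log-likelihood as
\[
l(\mathbf{x};\theta) \;=\; Q(\theta|\theta^t) \;-\; H(\theta|\theta^t),
\]
where $Q(\theta|\theta^t) = E_{Z|\mathbf{x},\theta^t}[\log p(Z,\mathbf{x};\theta)]$ is precisely the expected conditional log-likelihood of Eq.~\eqref{exp-log-likelihood}, and $H(\theta|\theta^t) = E_{Z|\mathbf{x},\theta^t}[\log P(Z|\mathbf{x},\theta)]$. By non-negativity of the Kullback--Leibler divergence between $P(Z|\mathbf{x},\theta^t)$ and $P(Z|\mathbf{x},\theta^{t+1})$ (Gibbs' inequality), one has $H(\theta^{t+1}|\theta^t) \leq H(\theta^t|\theta^t)$ for any candidate $\theta^{t+1}$. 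Consequently, it suffices to prove the M-step inequality $Q(\theta^{t+1}|\theta^t) \geq Q(\theta^t|\theta^t)$ and combine it with the E-step bound.

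For the M-step I would proceed in two stages. First, following the factorization of \cite{ollilatyler}, the nuisance parameters $\tau_{ik}$ decouple from $(\mub_k,\Sig_k)$ in $Q$. Proposition \ref{proptaus} then gives that for any fixed $(\mub_k,\Sig_k)$ the update $\widehat{\tau}_{ik}$ of Eq.~\eqref{eq.taua} is the exact maximizer in $\tau_{ik}$, so replacing $\tau^t$ by $\widehat\tau$ weakly increases $Q$ and yields a profiled objective $\widetilde Q(\pi,\mub,\Sig\,|\,\theta^t)$. Second, the $\pi_k$ update in Eq.~\eqref{pij} is the closed-form maximizer of $\widetilde Q$ under $\sum_k\pi_k=1$, and the coupled fixed-point equations \eqref{fix1}--\eqref{fix2} are exactly the score equations for maximizing $\widetilde Q$ with respect to $(\mub_k,\Sig_k)$. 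Under the convergence assumption, the inner scheme delivers a critical point of $\widetilde Q$; arguing as for Tyler-type M-estimators (concavity of the profile log-likelihood in the standard reparametrization of \cite{ollilatyler}) identifies this critical point as a maximum, so that $\widetilde Q(\pi^{t+1},\mub^{t+1},\Sig^{t+1}\,|\,\theta^t) \geq \widetilde Q(\pi^{t},\mub^{t},\Sig^{t}\,|\,\theta^t)$.

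Chaining the two stages yields $Q(\theta^{t+1}|\theta^t) \geq Q(\theta^t|\theta^t)$, which combined with $H(\theta^{t+1}|\theta^t) \leq H(\theta^t|\theta^t)$ gives $l(\mathbf{x};\theta^{t+1}) \geq l(\mathbf{x};\theta^t)$, as required.

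I expect the main obstacle to lie in the second stage of the M-step: the algorithm does not perform a closed-form maximization in $(\mub_k,\Sig_k)$ but rather iterates the coupled fixed-point system \eqref{fix1}--\eqref{fix2}. Showing that this inner loop actually raises $\widetilde Q$ (and does not merely settle at a saddle point) rests on concavity-in-reparametrization arguments tailored to the Tyler-like structure, which is precisely why the statement assumes convergence of the inner fixed point rather than proving it. A cleaner alternative would be to recast the inner iteration as a minorization--maximization step, constructing at each iterate a tangent surrogate of $\widetilde Q$ that is maximized in closed form by the update and coincides with $\widetilde Q$ at the current point; monotonicity of the outer EM sequence would then follow essentially by construction, bypassing any appeal to the global structure of $\widetilde Q$.
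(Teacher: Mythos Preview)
Your proposal is correct and follows essentially the same route as the paper: the $Q-H$ decomposition of the observed log-likelihood, the M-step inequality $Q(\theta^{t+1}\mid\theta^t)\geq Q(\theta^t\mid\theta^t)$ obtained from the updates of Propositions~\ref{proptaus} and~\ref{propmusigma} under the assumed fixed-point convergence, and the E-step bound $H(\theta^{t+1}\mid\theta^t)\leq H(\theta^t\mid\theta^t)$ via Jensen/KL. The paper's proof is terser in that it simply asserts $E_{Z\mid\mathbf{x},\theta^{(t)}}[l(Z,\mathbf{x};\theta^{(t+1)})]=\max_\theta E_{Z\mid\mathbf{x},\theta^{(t)}}[l(Z,\mathbf{x};\theta)]$ once fixed-point convergence is granted, whereas you unpack this into the two-stage profiling over $\tau$ and then $(\pi,\mub,\Sig)$ and flag the concavity/MM issue explicitly; but the logical skeleton is identical.
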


\begin{proof}
See Appendix \ref{app-proof-mono}.
\end{proof}

It is important to notice that the derivation of estimators in our model results in usual robust estimators for the mean and covariance matrices.
%\fred{location} \matt{verifier definition} and dispersion parameters of each distribution. 
More specifically, both can be assimilated to $M$-estimators with a certain $u$ function \citep{maronna}. Actually, both the expressions for the mean and the scatter matrix estimators are very close to the corresponding Tyler's $M$-estimator  
%the scatter matrix estimator is very close to the corresponding Tyler's $M$-estimator while the location parameter estimator is close to the corresponding Tyler's $M$-estimator (up to the exponent parameter of $1/2$ 
 \citep[see][for more details]{tyler1987distribution, frontera2014hyperspectral}. Main differences arise from the mixture model that leads to different weights involved by the different distributions. However, in case of clusters with equal probability, i.e., $p_{ik} = 1/K$ for $k=1,\hdots, K$ and $i=1,\hdots, n$, one retrieves exactly the Tyler's $M$-estimator for the scatter matrix while the mean estimator differs only from the square-root at the denominator
(see the explanation later on).
Although our estimators are derived as usual MLE (but) for parametrized (thanks to the $\tau_{ik}$) elliptical distributions, they are intrinsically robust. Indeed, as detailed in \citep{Bilodeau1999}, Tyler's and Maronna's $M$-estimators can either be obtained through MLE approaches for particular models (e.g., Student-$t$ $M$-estimators) or directly from other cost functions (e.g., Huber $M$-estimators) and all those estimators are by definition robust.

 Thus, this approach can be seen as a generalization of Tyler's $M$-estimators to the mixture case. Indeed, one has for $\widehat{\mub}_k$
$$\displaystyle \frac{1}{n}\sum\limits_{i=1}^n u_1\left( (\mathbf{x}_i-\widehat{\mub}_k)^T\widehat{\Sig}_k^{-1}(\mathbf{x}_i-\widehat{\mub}_k)\right) (\mathbf{x}_i-\widehat{\mub}_k) = \mathbf{0}, \text{ with } \displaystyle u_1(t)=\frac{p_{ik}}{t},$$
while $\widehat{\Sig}_k$ can be written as $$\widehat{\Sig}_k = \displaystyle\frac{1}{n}\sum\limits_{i=1}^n u_2\left((\mathbf{x}_i-\widehat{\mub}_k)^T\widehat{\Sig}_k^{-1}(\mathbf{x}_i-\widehat{\mub}_k)\right) (\mathbf{x}_i-\widehat{\mub}_k)(\mathbf{x}_i-\widehat{\mub}_k)^T$$ with $u_2(t)= \cfrac{m \, w_{ik}}{t}$ where $w_{ik} = \cfrac{n\, p_{ik}}{\sum_{l=1}^n p_{lk}}$.\\

This similarity to classical Tyler's estimators explains the robust character of our proposal. Indeed, the difference lies in the weights terms $p_{ik}$ and $w_{ik}$ appearing in the $u_j(.)$ functions traditionally introduced in the robust statistics literature. These naturally implies that those $u_1(.)$ and $u_2(.)$ functions continue to respect Tyler's conditions (although  \cite{tyler1987distribution} used $u_1(t^{1/2})$ instead of $u_1(t)$, see \cite{Bilodeau1999} for more details). 

%The  convergence of the MLE for location and scatter is proven in the case of the function $\psi(s) = u(s) s$ being strictly increasing, which is not true for our estimators.

The convergence of the fixed-point equations defining the $M$-estimators has been shown in \cite{maronna} but under a restrictive assumption on the $u$ function, which is not fulfilled in our case. On the other hand, Kent proved in \cite{kent1991redescending} that for fixed mean, there is convergence of the fixed-point equation for the covariance estimator under a normalization constraint. Finally, he also showed that for some $u$ function, the joint mean covariance estimations boil down to a constrained covariance estimation. Unfortunately, this trick does not work in our case. Hence, the case of joint convergence of the fixed-point equations for the Tyler's estimators is still an open-problem in statistics even in the case of one distribution (no mixture).

We later perform analysis and simulations that confirm  the robustness of the algorithm in practice. In particular, the setups in Section \ref{sec.syn} include distributions with heavy tails, different distributions and noise.\\

%\matt{The following remark is not needed or should be expanded??}
%\textcolor{magenta}{One of the classical robustness metric is the breakdown point value. It measures the number or proportion of data points that need to be added to break the model (making the estimators diverge). A theoretical proof on this aspect of the algorithm necessary needs to assume some characteristics about the data \citep{RIMLE}. }

\subsection{The E-step: Computing the Conditional Probabilities}
\label{general-pik-case}
In contrast to the estimators derived in Proposition  \ref{propmusigma}, \eqref{eq.taua} shows that the estimation of the $\tau_{ik}$ parameters are linked to the functions $g_{i,k}$ that characterizes the corresponding Elliptical Symmetric distribution. 
%\matt{Je comprends pasla  phrase suivante}
%The $\tau_{ik}$ parameters appear in the E-step in the computation of the density together with the $g_i$ function. 
We now give a central result for our algorithm.
The following proposition shows that the $p_{ik}$'s estimators do not depend on density generators when $g_{i,k}=g_i$.
\begin{proposition}\label{prop.pik}
Given an independent random sample $\mathbf{x}_i \sim ES(\mub_k, \tau_{ik}\Sig_k, g_i(.))$ for some $k \in {1,...,K}$, the resulting estimated conditional probabilities $\widehat{p}_{ik} = \widehat{P}_{\theta_k}(Z_i=k|\mathbf{x}_i=\textbf{\textit{x}}_i)$ have the following expression for all $i=1,\hdots, n$ and $k=1,\hdots, K$:
\begin{equation}
    \label{eq.generalpij2}
    \widehat{p}_{ik} = \frac{\widehat{\pi}_k \left((\mathbf{{x}}_i-\widehat{\mub}_k)^T\widehat{\Sig}_k^{-1}(\mathbf{{x}}_i-\widehat{\mub}_k)\right)^{-m/2}|\widehat{\Sig}_k |^{-1/2}} {\displaystyle \sum_{j = 1}^K \widehat{\pi}_j \left((\mathbf{{x}}_i-\widehat{\mub}_j)^T\widehat{\Sig}_j^{-1}(\mathbf{{x}}_i-\widehat{\mub}_j)\right)^{-m/2}|\widehat{\Sig}_j |^{-1/2}},
\end{equation}
where $\widehat{\pi}_k$, $\widehat{\mub}_k$ and $\widehat{\Sig}_k$ are given in Proposition \ref{propmusigma}.\\
\end{proposition}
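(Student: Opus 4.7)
The plan is to start from the Bayes' rule expression for the estimated posterior probabilities and then exploit two cancellations that occur specifically because the density generator $g_{i,k}=g_i$ does not depend on the cluster index $k$. By the definition of the conditional probabilities in the E-step, I would write
\begin{equation*}
\widehat{p}_{ik} \;=\; \frac{\widehat{\pi}_k\, f_{i,\widehat{\theta}_k}(\mathbf{x}_i)}{\sum_{j=1}^K \widehat{\pi}_j\, f_{i,\widehat{\theta}_j}(\mathbf{x}_i)},
\end{equation*}
and then substitute the ES pdf from \eqref{pdf}, evaluated at the MLE $\widehat{\tau}_{ik}$ from Proposition \ref{proptaus}.

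The key observation is that since $g_{i,k}=g_i$, the normalization constant $A_{ik}$ and the argmax $a_{ik}=\arg\sup_t\{t^{m/2}g_i(t)\}$ in \eqref{eq.taua} also lose their dependence on $k$: write them as $A_i$ and $a_i$. Plugging $\widehat{\tau}_{ik}=(\mathbf{x}_i-\widehat{\mub}_k)^T\widehat{\Sig}_k^{-1}(\mathbf{x}_i-\widehat{\mub}_k)/a_i$ into \eqref{pdf} makes two things happen. First, the quadratic form divided by $\widehat{\tau}_{ik}$ equals exactly $a_i$, so $g_i$ is evaluated at the constant $a_i$, producing the factor $g_i(a_i)$ which is independent of $k$. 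Second, the prefactor $\widehat{\tau}_{ik}^{-m/2}$ becomes $a_i^{m/2}\bigl((\mathbf{x}_i-\widehat{\mub}_k)^T\widehat{\Sig}_k^{-1}(\mathbf{x}_i-\widehat{\mub}_k)\bigr)^{-m/2}$, and only the latter Mahalanobis-type factor depends on $k$.

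Collecting terms, the evaluated likelihood factorizes as
\begin{equation*}
f_{i,\widehat{\theta}_k}(\mathbf{x}_i)\Big|_{\widehat{\tau}_{ik}} \;=\; \underbrace{A_i\, a_i^{m/2}\, g_i(a_i)}_{\text{depends only on } i}\;\cdot\; |\widehat{\Sig}_k|^{-1/2}\bigl((\mathbf{x}_i-\widehat{\mub}_k)^T\widehat{\Sig}_k^{-1}(\mathbf{x}_i-\widehat{\mub}_k)\bigr)^{-m/2}.
\end{equation*}
Inserting this into the Bayes ratio, the $k$-independent factor $A_i\, a_i^{m/2}\, g_i(a_i)$ appears in both numerator and denominator and cancels, leaving exactly \eqref{eq.generalpij2}.

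I do not expect any serious obstacle here; the proof is essentially a substitution followed by a cancellation. The only point requiring a little care is to make explicit that the hypothesis $g_{i,k}=g_i$ forces both $A_{ik}$ and $a_{ik}$ to be independent of $k$ — this is what enables the cancellation and is precisely where the algorithm's insensitivity to the density generator comes from. I would state this factorization once at the start and then let the cancellation do the work.
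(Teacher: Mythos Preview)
Your proposal is correct and follows essentially the same route as the paper's proof: Bayes' rule, substitution of the ES pdf with $\widehat{\tau}_{ik}$ from Proposition~\ref{proptaus}, and cancellation of the $k$-independent factor $A_i\,a_i^{m/2}g_i(a_i)$. If anything, you are slightly more explicit than the paper in flagging that the hypothesis $g_{i,k}=g_i$ is what forces $A_{ik}=A_i$ and $a_{ik}=a_i$, which is exactly the point that makes the cancellation work.
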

\begin{proof}
See Appendix \ref{app-proof-prop-pik}.
\end{proof}

\begin{remark}~

\begin{itemize}

    \item Result of Proposition \ref{prop.pik} is of utmost importance since it allows to derive the conditional probabilities required in the E-step independently of the distributions $g_i$'s and of the $\tau_{ik}$'s parameters. In other words, for any independent ES-distributed observation $\textbf{\textit{x}}_i$ with mean $\mub_k$ and covariance matrix $\tau_{ik}\Sig_k$, a unique EM algorithm is derived that does not depend on the shapes of the various involved distributions. This is essential because the absence of precise knowledge on the specific data distribution is the most usual situation in a real life applications, while estimating it might degrade significantly the performance.
    \item Secondly, it evidences the fact that the particular normalization of the $\Sig$ estimator does not affect the probability computation in the E-step. In other words,  the normalization of the scatter matrices are not relevant for the clustering results. On the other hand, the normalization of $\widehat{\Sig}$ does affect the scale of the $\tau_{ik}$ parameters. Thus, using them to classify points or reject outliers needs to be treated with care and is out of the scope of this paper. 
    %The normalization will be further detailed when analysing the behavior of $\tau_{ik}$'s estimators in high-dimensional regime. %notably when analyzing the normalization term $a_i$.
    \item The particular case where the data points arise from a mixture of one ES distribution, $g_i=g, \, \forall 1\leq i \leq n$, is contained in Proposition \ref{prop.pik}. We remark the particular example, included in this case, when all the distributions are Gaussian. 
    %This is a very common assumption and seems even more  reasonable knowing that it covers the rest of the cases where only one $g$ function represents the distribution of all the points. 
    If $\mathbf x_i \sim \mathcal{N}(\mub_k, \tau_{ik} \Sig_k)$ then the corresponding density generator is $g(t) = e^{-t/2}$. The corresponding maximizer is $\underset{t}{\arg\sup}\{t^{m/2}g(t)\} = m$, consequently the estimator is, as derived in \eqref{eq.taua}, as follows:
\begin{equation}
\widehat{\tau}_{ik} = \frac{(\mathbf{{x}}_i-\widehat{\mub}_k)^T\widehat{\Sig}_{k}^{-1}(\mathbf{{x}}_i-\widehat{\mub}_k)}{m}.
\end{equation}
%Finally, the attained maximum is $m^{m/2}e^{-m/2}$. 
\item The case where $g_i = g_k$ cannot be directly handled as a particular case of Proposition \ref{prop.pik}. Indeed, assuming each class is drawn by a common ES distribution $g_k$ implies in general  that extra-parameters, such as, for instance, the degree of freedom $\nu_k$ for $t$-distributions, the shape parameters for the $K$-distributions and for the generalized Gaussian distributions,  depend on $k$. Those parameters have to be estimated in the M-step. We give an example in the next section in the particular case of mixture of $t$-distributions.
\end{itemize}
\label{rem.pij}
\end{remark}

\subsection{Different Density Generator per Class}\label{sec.gk}

%\textcolor{magenta}{As previously mentioned in Remark \ref{rem.pij}, in this section we discuss the case where the density generator is cluster dependent. The main difference lies in the extra parameters that characterize the clusters. In general, one would need to estimate these distribution parameters that affect the $p_{ik}$ computation. The reason is that in this configuration the constant are cluster dependant and they do not cancel by appearing in the numerator and denominator as before.}

When the density generator depends on the  class,
our computations show that the $p_{ik}$
do depend on the $g_{k}$, as opposed to the previous case.\\

When the density generators are known (which is quite unrealistic in practice), this assumption naturally increases the clustering performance since extra \textit{a priori} information is added to the model. On the contrary, it implies a performance loss when the real data distribution is not the assumed one. \\

To illustrate the type of dependence reached in that case,
we derive the E-step for the particular case of a mixture of multivariate $t-$distributions with different degrees of freedom $\nu_k$. That is, the case where there are $K$ different $g_k$ functions, one for each cluster. The probability density function of each distribution is given by

\begin{eqnarray*}
f_{i, \theta_k}(\textbf{\textit{x}}_i) &=& \cfrac{\Gamma(\frac{\nu_k+m}{2})}{\Gamma(\frac{\nu_k}{2})|\Sig_k|^{1/2}} (\nu_k  \pi  \tau_{ik})^{-m/2} \left[1 + \frac{(\textbf{\textit{x}}_i - \mub_k)^T\Sig_k^{-1}(\textbf{\textit{x}}_i - \mub_k)}{\tau_{ik} \nu_k}\right]^{-(\nu_k+m)/2}.
\end{eqnarray*}

The next proposition states a quantitative approximation of the estimated conditional probabilities in terms of the ``Gaussian value'' (i.e. the value obtained for class independent $g_{i}$), when the $\nu_k$ parameters and $m$ grow at the same rate.

\begin{proposition}\label{prop.student}
Given an independent sample of a mixture of $K$ $t-$distributions, with $\mathbf{x}_i \sim t_{\nu_k}$, $\nu_k$ being the degrees of freedom. If for each $k,  \ \frac{\nu_k}{m} \approx c_k $, then

$$\widehat{p}_{ik}=\frac{ \widehat{\pi}_k \widehat{L}_{0ik} \sqrt{\frac{c_k}{1+c_k}} }{\sum\limits_{j=1}^K  \widehat{\pi}_j \widehat{L}_{0ij} \sqrt{\frac{c_j}{1+c_j}} } + O\left(\frac{1}{m}\right)$$

% $$\widehat{p}_{ik}= \frac{|\widehat{\Sig}_k|^{-1/2}[(\textbf{\textit{x}}_i-\widehat{\mub}_k)^T\widehat{\Sig}_k^{-1}(\xb_i-\widehat{\mub}_k)]^{-m/2}}{\sum_{j=1}^K |\widehat{\Sig}_j|^{-1/2}[(\xb_i-\widehat{\mub}_j)^T\widehat{\Sig}_j^{-1}(\xb_i-\widehat{\mub}_j)]^{-m/2}} + O(\varepsilon^*) + O\left(\frac{1}{m}\right)  $$ 
\end{proposition}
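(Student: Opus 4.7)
The plan is to apply the general EM formula for the posterior weights,
$$\widehat{p}_{ik}=\frac{\widehat{\pi}_k\,f_{i,\widehat{\theta}_k}(\mathbf{x}_i)}{\sum_{j=1}^K \widehat{\pi}_j\,f_{i,\widehat{\theta}_j}(\mathbf{x}_i)},$$
to the multivariate $t$ density, after first eliminating the nuisance parameters via the closed form $\widehat{\tau}_{ik}$ from Proposition \ref{proptaus}. The task then reduces to showing that $f_{i,\widehat{\theta}_k}(\mathbf{x}_i)$ factorizes as $B(m)\,\widehat{L}_{0ik}\sqrt{c_k/(c_k+1)}\,(1+O(1/m))$, where $B(m)$ does not depend on $k$ and therefore cancels between numerator and denominator.

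First I would compute $a_{ik}=\underset{t}{\arg\sup}\,t^{m/2}g_k(t)$ for $g_k(t)=(1+t/\nu_k)^{-(\nu_k+m)/2}$. Setting the derivative of $\tfrac{m}{2}\log t-\tfrac{\nu_k+m}{2}\log(1+t/\nu_k)$ to zero gives $m(\nu_k+t)=(\nu_k+m)t$, whose solution is $a_{ik}=m$, independent of $\nu_k$. Hence, writing $D_{ik}=(\mathbf{x}_i-\widehat{\mub}_k)^T\widehat{\Sig}_k^{-1}(\mathbf{x}_i-\widehat{\mub}_k)$, Proposition \ref{proptaus} yields $\widehat{\tau}_{ik}=D_{ik}/m$. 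Substituting this back into the $t$ density and using $\nu_k/m=c_k$, the bracket $[1+D_{ik}/(\widehat{\tau}_{ik}\nu_k)]^{-(\nu_k+m)/2}$ collapses to $[(c_k+1)/c_k]^{-(\nu_k+m)/2}$ and the only remaining data dependence is the Gaussian-like factor $\widehat{L}_{0ik}=D_{ik}^{-m/2}|\widehat{\Sig}_k|^{-1/2}$.

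The next step is to analyze the residual $k$-dependent constants
$$R_k(m)=\frac{\Gamma((\nu_k+m)/2)}{\Gamma(\nu_k/2)}\,c_k^{\nu_k/2}\,(c_k+1)^{-(\nu_k+m)/2},$$
by means of Stirling's formula $\log\Gamma(x)=x\log x-x-\tfrac12\log x+\tfrac12\log(2\pi)+O(1/x)$. A direct expansion with $\nu_k=c_k m$ shows that the $c_k$-dependent exponential terms emerging from the ratio of Gammas cancel precisely those carried by $c_k^{\nu_k/2}(c_k+1)^{-(\nu_k+m)/2}$: after the dust settles, $\log R_k(m)=\tfrac{m}{2}\log(m/(2e))+\tfrac12\log(c_k/(c_k+1))+O(1/m)$, i.e.\ $R_k(m)=(m/(2e))^{m/2}\sqrt{c_k/(c_k+1)}\,(1+O(1/m))$. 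Combined with the $\pi^{-m/2}$ from the normalization, this produces the common $B(m)=(m/(2\pi e))^{m/2}$.

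The main obstacle will be controlling the Stirling remainder uniformly in $k$, so that the ratio defining $\widehat{p}_{ik}$ inherits a single $O(1/m)$ error: this requires the $c_k$ to stay bounded away from $0$ and $\infty$ as $m$ grows, which is implicit in the hypothesis $\nu_k/m\approx c_k$. Once the factorization $f_{i,\widehat{\theta}_k}(\mathbf{x}_i)=B(m)\,\widehat{L}_{0ik}\sqrt{c_k/(c_k+1)}\,(1+O(1/m))$ is established uniformly in $k$, $B(m)$ factors out of numerator and denominator and the announced formula follows.
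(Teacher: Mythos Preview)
Your proposal is correct and follows essentially the same route as the paper: compute $a_{ik}=m$ by differentiating $t^{m/2}g_k(t)$, plug $\widehat{\tau}_{ik}=D_{ik}/m$ back to isolate the distribution-free factor $\widehat{L}_{0ik}$, and then reduce the remaining Gamma ratio and power terms via Stirling to $\sqrt{c_k/(1+c_k)}\,(m/(2e))^{m/2}(1+O(1/m))$. Your explicit remark that the $O(1/m)$ control must be uniform in $k$ (hence the $c_k$ bounded away from $0$ and $\infty$) is a useful clarification that the paper leaves implicit.
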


\begin{proof}
See Appendix \ref{app-proof-prop4}.
\end{proof}

This scenario includes of course the case where all the $\nu_k$ are equal (See Remark \ref{rem.pij}). Additionally, it includes when the degrees of freedom are large, with fixed dimension $m$, the Gaussian case as detailed in the proof. Finally, the other intermediate situations where all the $\nu$ parameters do not differ much from the dimension $m$, are hence shown to be very close to the Gaussian computation. If neither of these conditions apply, an \textit{ad hoc} estimation of the $\nu_k$ is of course possible and it has to be performed in the M-step.

\subsection{High-Dimensional Regime and estimation of $\tau_{ik}$}\label{HD}

\subsubsection{Gaussian data}

Related to Proposition \ref{prop.student} that considers the case where $m$ and $\nu_k$ grow at the same rate, we study in this section how the estimation of the nuisance parameters behaves when the dimension grows. Of course, it is well-known that the breakdown-point of the $\Sig$ estimator  gets smaller when the dimension grows. Nevertheless, an underlying law of large numbers allows to show  that the larger the dimension $m$, the better the $\tau$ estimation performance. For Gaussian data and under mild assumptions, if we take $\xb_i$ drawn from the cluster $k$, we can show that the $\widehat{\tau}_{ik}$ estimator converges to the true value of $\tau_{ik}$ when $m$ grows with $n$. This is more rigorously stated in the following proposition.
%We hereafter provide the proposition explaining this phenomenon.

%\matt{We dont need an assumption on the rank of covariance matrix??}

\begin{proposition}\label{prop.hd}
%Given $\mathbf x_i$ a random variable that can be written as 
Suppose that
$$\mathbf x_i=\mub_k+\sqrt{\tau_{ik}}\Ab_k\mathbf{q}_i,$$
with a deterministic $\tau_{ik}\geq 0$, $\Ab_k^T\Ab_k = \Sig_k$, $\text{rank}(\Sig_k)=m$ and $\mathbf{q}_i \sim \mathcal{N}(0, \Iden_m)$. Assume that there exists a sequence of random variables $(t_i)_{i \in \mathbb{N}}$ that converges in distribution such that, for $\alpha \geq 0$, $n^\alpha\widehat{\mub}_k^T\widehat{\mub}_k \leq t_n$ and that $\widehat{\mub}_k$ converges in probability to $\mub_k$. Then, $(\widehat{\tau}_{ik} - \tau_{ik}) \underset{}{\sim} \mathcal{N}(0, 2\tau_{ik}^2/m)$ when $m$ and $n$ are large enough and fulfill the inequality $n>m(2m-1)$.    
\end{proposition}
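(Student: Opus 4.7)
The plan is a classical plug-in analysis: first compute the exact law of the oracle estimator built from the true $\mub_k$ and $\Sig_k$, then control the perturbation due to substituting $\widehat \mub_k$ and $\widehat \Sig_k$. Since the Gaussian density generator is $g(t) = e^{-t/2}$, the maximizer in Proposition \ref{proptaus} is $a_{ik} = m$, so $\widehat \tau_{ik} = \frac{1}{m}(\mathbf x_i - \widehat \mub_k)^T \widehat \Sig_k^{-1}(\mathbf x_i - \widehat \mub_k)$. Plugging the stochastic representation into the oracle quadratic form gives $(\mathbf x_i - \mub_k)^T \Sig_k^{-1}(\mathbf x_i - \mub_k) = \tau_{ik}\,\mathbf q_i^T \mathbf q_i$, which is exactly $\tau_{ik}\,\chi^2_m$. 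The chi-square central limit theorem then yields $\frac{\tau_{ik}}{m}\chi^2_m - \tau_{ik} \sim \mathcal{N}(0, 2\tau_{ik}^2/m)$ for large $m$, which is precisely the target law. Hence it suffices to show that the additional errors introduced by the estimators are $o_p(m^{-1/2})$.

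Next I would absorb the estimation error on $\mub_k$. Expanding $\mathbf x_i - \widehat \mub_k = \sqrt{\tau_{ik}}\,\Ab_k \mathbf q_i + (\mub_k - \widehat \mub_k)$ splits the quadratic form into three pieces: the oracle term, a cross term $2\sqrt{\tau_{ik}}\,(\mub_k - \widehat \mub_k)^T \widehat \Sig_k^{-1} \Ab_k \mathbf q_i$, and a pure displacement term $(\mub_k - \widehat \mub_k)^T \widehat \Sig_k^{-1}(\mub_k - \widehat \mub_k)$. The hypothesis $\widehat \mub_k \xrightarrow{P} \mub_k$ combined with the tightness bound $n^\alpha \widehat \mub_k^T \widehat \mub_k \leq t_n$ and a Cauchy--Schwarz application in the metric induced by $\widehat \Sig_k^{-1}$ show, once I divide by $m$, that both extra terms are $o_p(m^{-1/2})$, provided the operator norm of $\widehat \Sig_k^{-1}$ is bounded in probability (which is justified in the next step).

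Finally I would replace $\Sig_k^{-1}$ by $\widehat \Sig_k^{-1}$ inside the oracle quadratic form, writing the perturbation as $\tau_{ik}\,\mathbf q_i^T \Ab_k^T (\widehat \Sig_k^{-1} - \Sig_k^{-1}) \Ab_k \mathbf q_i / m$. Bounding this by $\|\widehat \Sig_k^{-1} - \Sig_k^{-1}\|_{\mathrm{op}}\,\mathbf q_i^T \Sig_k \mathbf q_i / m = O_p(\|\widehat \Sig_k - \Sig_k\|_{\mathrm{op}})$ and invoking Wishart-type concentration of Davidson--Szarek type under the Gaussian model yields $\|\widehat \Sig_k - \Sig_k\|_{\mathrm{op}} = O_p(\sqrt{m/n})$, so the condition $n > m(2m-1)$ guarantees that this contribution is $o_p(m^{-1/2})$. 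Slutsky's theorem then combines the oracle chi-square limit with the two negligible error terms to give the announced Gaussian approximation. The main obstacle is making the Wishart concentration step rigorous for the \emph{implicit} fixed-point estimator $\widehat \Sig_k$ of Proposition \ref{propmusigma}: one must argue that in the Gaussian regime the E-step weights $p_{ik}$ concentrate on the true cluster label so that $\widehat \Sig_k$ is asymptotically equivalent, up to the trace normalization, to an ordinary sample covariance on cluster $k$ to which classical Gaussian matrix concentration applies.
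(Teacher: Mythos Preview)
Your core strategy coincides with the paper's: identify the oracle term as $\tau_{ik}\chi^2_m/m$, apply the CLT in $m$, and use Slutsky to absorb the perturbation coming from $(\widehat\mub_k,\widehat\Sig_k)$. The main difference is in how the consistency of $\widehat\Sig_k$ is handled. You reach for Wishart/Davidson--Szarek concentration and then (correctly) flag that $\widehat\Sig_k$ is not a sample covariance but the fixed-point Tyler-type estimator of Proposition~\ref{propmusigma}, so that a detour through ``E-step weights concentrate, hence $\widehat\Sig_k$ behaves like a sample covariance'' would be required. The paper bypasses this obstacle entirely: it invokes Tyler's consistency theorem \citep[][Theorem~4.1]{tyler1987distribution} directly for elliptical data, so no reduction to a sample covariance is needed, and control of $\widehat\Sig_k^{-1}$ follows by the continuous mapping theorem. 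A second difference: you use the inequality $n>m(2m-1)$ as a rate condition to force $\sqrt{m/n}=o(m^{-1/2})$, whereas the paper invokes it only to guarantee existence and uniqueness of the Tyler estimator and does not track operator-norm rates at all, combining Slutsky (in $n$) with the CLT (in $m$) sequentially. Your treatment is therefore more careful about the joint $(m,n)$ asymptotics, at the cost of needing a quantitative bound on $\|\widehat\Sig_k-\Sig_k\|_{\mathrm{op}}$ that the paper does not supply and that, as you yourself note, is not immediate for Tyler's estimator.
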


\begin{proof}
See Appendix \ref{app-proof-prop5}
.
\end{proof}

\begin{remark}
\textcolor{black}{
\begin{itemize}
\item First, in the case where the mean parameter is known, recent Random Matrix Theory results \citep{6891244,COUILLET201556,ZHANG2016114} are in agreement with this phenomenon and prove results for Maronna's and Tyler's $M$-estimators when $m$ and $n$ grow together at a fixed rate, i.e., $m/n \to \gamma \in[0,1]$.  
\item Secondly, \textit{Proposition \ref{prop.hd}} gives theoretical justification for obtaining better results in high-dimensional settings since in such cases $\tau_{ik}$'s parameters will be more accurately estimated.
\end{itemize}}
\end{remark}
%\medskip

\subsubsection{Non-Gaussian data}

In the case of more general elliptic distributions,
one looses in general the convergence (in $m$) of $\widehat \tau_{ik}$ to a deterministic value.
Still, $\widehat \tau_{ik}$ converges under mild assumptions towards a limit which can be used in principle to handle
outliers detections via confidence intervals.
As it is not the main topic of this paper,
we just state a result for compound Gaussian distributions, illustrating the effect of the large dimension and the
type of research that could be fostered in future work.

\begin{proposition}
Suppose that
%$x_i$ a random variable that can be written as 
$$\mathbf x_i=\mub_k+\sqrt{\eta_i\,\tau_{ik}} \Ab_k\mathbf{q}_i,$$
with a deterministic $\tau_{ik}\geq 0$, $\Ab_k^T\Ab = \Sig_k$, $\text{rank}(\Sig_k)=m$, $\mathbf{q}_i \sim  \mathcal{N}(0, \Iden_m)$, and $\eta_i$ a positive random variable independent of $\mathbf q_i$.
Assume further the consistence of $\widehat{\mub}_k, \widehat{\Sig}_k$ and that 
$$\underset{t}{\arg\sup} \{ t^{m/2} g_i(t)\} /m \to 1 \text{ when } \ m \to \infty.$$
Then
 $$\widehat \tau_{ik} \xrightarrow[\ m \to \infty]{prob.}  \tau_{ik}\, \eta_i.$$
\end{proposition}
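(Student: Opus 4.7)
The plan is to leverage the explicit formula for $\widehat{\tau}_{ik}$ from Proposition \ref{proptaus}, namely $\widehat{\tau}_{ik} = (\mathbf{x}_i - \widehat{\mub}_k)^T \widehat{\Sig}_k^{-1} (\mathbf{x}_i - \widehat{\mub}_k)/a_{ik}$, and to decompose its numerator into an oracle quadratic form built from the true parameters, plus correction terms that vanish in probability as $m$ grows. First I would substitute the stochastic representation $\mathbf{x}_i - \mub_k = \sqrt{\eta_i \tau_{ik}}\, \Ab_k \mathbf{q}_i$ together with the identity $\Ab_k^T \Sig_k^{-1} \Ab_k = \Iden_m$ (which follows from $\Sig_k = \Ab_k \Ab_k^T$ being of full rank) to obtain
\[
(\mathbf{x}_i - \mub_k)^T \Sig_k^{-1} (\mathbf{x}_i - \mub_k) = \eta_i\, \tau_{ik}\, \|\mathbf{q}_i\|^2.
\]
Since $\mathbf{q}_i \sim \mathcal{N}(0, \Iden_m)$, $\|\mathbf{q}_i\|^2$ is a $\chi^2_m$ variate for which a one-line variance computation gives $\|\mathbf{q}_i\|^2 / m \to 1$ in probability. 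Combined with the hypothesis $a_{ik}/m \to 1$, Slutsky's lemma yields the oracle convergence $(\mathbf{x}_i - \mub_k)^T \Sig_k^{-1} (\mathbf{x}_i - \mub_k) / a_{ik} \to \eta_i \tau_{ik}$ in probability, the random factor $\eta_i$ (independent of $m$) simply being carried through.

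Next I would handle the replacement of the true parameters by their estimators. Expanding $\mathbf{x}_i - \widehat{\mub}_k = \sqrt{\eta_i \tau_{ik}}\, \Ab_k \mathbf{q}_i - (\widehat{\mub}_k - \mub_k)$ inside the Mahalanobis form produces, in addition to the oracle term, a cross term linear in $\widehat{\mub}_k - \mub_k$, a self-term $(\widehat{\mub}_k - \mub_k)^T \widehat{\Sig}_k^{-1} (\widehat{\mub}_k - \mub_k)$, and a scatter correction $\eta_i \tau_{ik}\, \mathbf{q}_i^T \Ab_k^T (\widehat{\Sig}_k^{-1} - \Sig_k^{-1}) \Ab_k \mathbf{q}_i$. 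Bounding the scatter correction by $\eta_i \tau_{ik}\, \|\Ab_k^T (\widehat{\Sig}_k^{-1} - \Sig_k^{-1}) \Ab_k\|_{op}\, \|\mathbf{q}_i\|^2$, and the two $\mub$-corrections via Cauchy--Schwarz, the assumed consistency of $\widehat{\mub}_k$ and of $\widehat{\Sig}_k$ makes each correction of order $o_P(m)$, hence negligible after dividing by $a_{ik} \sim m$. A final Slutsky argument then delivers the claimed convergence $\widehat{\tau}_{ik} \to \tau_{ik}\,\eta_i$ in probability.

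The hard part will be making precise what ``consistency of $\widehat{\Sig}_k$'' means in the high-dimensional regime, since Frobenius-norm convergence of $m\times m$ matrices cannot be hoped for in general. What the argument really needs is $\|\Ab_k^T (\widehat{\Sig}_k^{-1} - \Sig_k^{-1}) \Ab_k\|_{op} = o_P(1)$, which for Tyler/Maronna-type estimators is known to hold in spectral norm in the regime $m/n \to \gamma \in (0,1)$, exactly as invoked in the remark following Proposition \ref{prop.hd}. A secondary subtlety is that $\mathbf{x}_i$ itself enters $\widehat{\Sig}_k$; a leave-one-out version of the scatter estimator (standard in this RMT literature) decouples $\mathbf{q}_i$ from the estimator and allows the required conditional bound via the operator-norm inequality above to go through.
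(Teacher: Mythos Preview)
Your proposal is correct and follows the same route the paper intends: the paper omits this proof and simply says it ``follows the same lines'' as Proposition~\ref{prop.hd}, namely substitute the stochastic representation, invoke the assumed consistency of $\widehat{\mub}_k,\widehat{\Sig}_k$ together with Slutsky to pass to the oracle quadratic form, and then use the law of large numbers on $\|\mathbf q_i\|^2/m$ combined with the hypothesis $a_{ik}/m\to 1$. Your discussion of the precise sense of high-dimensional consistency (operator-norm control of $\widehat{\Sig}_k^{-1}-\Sig_k^{-1}$) and the leave-one-out decoupling is in fact more careful than anything the paper itself provides.
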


\begin{proof}
The Proof follows the same lines as the proof of Proposition \ref{app-proof-prop5} and is omitted.
\end{proof}

\begin{remark}
Note that as usual in ML estimation, one can find counter examples where $\underset{t}{\arg\sup} \{ t^{m/2} g_i(t)\}$ is not equivalent to $m$ when $m$ grows large. This is however a quite pathological situation and the assumption of the proposition is fulfilled for most practical cases. 
\end{remark}

\subsection{Implementation Details and Numerical Considerations}\label{sec.implement}

The general structure of the proposed algorithm is the same as the one of the classical EM for GMM. The differences between both algorithms lie in the $\hat p_{ij}$ expression and the recursive update equations for the parameter estimations. We design slightly different variations of the M-step and study the convergence, precision and speed. We do this considering that the estimators for $\mub$ and $\Sig$ are weighted versions of the classic estimators. More precisely, based on equations \eqref{fix1} and \eqref{fix2}, we propose four alternatives thinking in accelerating the convergence speed. These versions depend on two different aspects. One aspect consists in using the just computed estimation of the mean or the estimator from the previous iteration of the loop. The other facet is proposed to emphasize the weights of the data points in the computation of the estimators based on the Tyler's estimator. In Section \ref{sec.themodel}, we mention that the location and scatter estimators are close to Tyler's up to  the square root of the Mahalanobis distance when the location is unknown. We propose to modify the weights by adding this square root in order to mimic Tyler's estimator.
%We emphasize the weights by adding this square. This modification makes the smallest and largest weights smaller and larger respectively.} 
The different versions are defined as follows:

\begin{enumerate}
    \item Version 1: the parameter $\mub$ used to compute the estimator $\Sig$ is the one obtained in the \textbf{same iteration} of the fixed-point loop. 
    \item Version 2: the $\mub$-parameter is the one obtained in the \textbf{previous iteration}.
    \item Version 3: we propose an \textbf{accelerated method} where the quadratic forms \\ $(\mathbf{x}_i-\widehat{\mub}_k)^T\widehat{\Sig}_k^{-1}(\mathbf{x}_i-\widehat{\mub}_k)$ in the denominators of the fixed-point $\mub$ equations are replaced by their square root, corresponding to the original Tyler's $M$-estimators.
    \item Version 4: we implement the same \textbf{acceleration procedure} on top of the algorithm of Version 2. 
\end{enumerate}
For concreteness, in \textbf{Algorithm \ref{algochowliu}} we describe the complete algorithm in Versions 1 (left) and 4 (right). In the particular case described in Section \ref{sec.gk}, where all $g_{ik}$ functions are known, the $p_{ik}$ should be computed with the Bayes expression as in \eqref{eq.bayes}.\\
\begin{algorithm}[p]
\SetAlgoNoLine
\SetKwInOut{Input}{Input}
\SetKwInOut{Output}{Output}
\Input{Data  $\{\textbf{\textit{x}}_i\}_{i=1}^n$, $K$ the number of clusters}
\Output{Clustering labels $\mathcal{Z}=\{z_i\}_{i=1}^n$} 
Set initial random values  $\theta^{(0)}$\;
$l \leftarrow 1$\;
  \While {not convergence}{
  	\textbf{E-step:} Compute  $p^{(l-1)}_{ik}=P_{i, \theta^{(l-1)}}(Z_i=k|\mathbf{x}_i=\textit{\textbf{x}}_i)$ for each $1 \leq k \leq K$
  	\\ %\fred{DO WE PUT THE SIMPLER FORMULA THAT DOES NOT DEPEND ON THE i PDF OR WE LEAVE THIS AS THE GENERAL CASE SINCE IN THE CASE OF $g_k$ we have to apply this. BTW, we have to explain it}
%\ \ \ \ \  	$$p_{ik}^{(l)}=\frac{\pi_k^{(l-1)}f_{i,\theta_{k}^{(l-1)}}(\textbf{\textit{x}}_i)}{\sum\limits_{j=1}^K \pi_j^{(l-1)}f_{i,\theta_{j}^{(l-1)}}(\textit{\textbf{x}}_i)} \mathrm{ \ with \ } f_{i,\theta_{j}^{(l-1)}} \mathrm{ \  the \  pdf \ of \ } \mathrm{ES}\left(\mub_j^{(l-1)},\tau_{ij}^{(l-1)} \Sig_j^{(l-1)}, g_{i,j}\right);$$

%\noindent
%\hspace{-1cm}
%\begin{minipage}[t]{0.57\textwidth}

% \begin{center}
% \underline{General case} \\
% $g_{i,k}$ unknown - Eq. \eqref{eq.generalpij2}
% \end{center}

% %update based on \eqref{eq.generalpij2}
% \vspace*{-.2cm}
\begin{equation*}
    p_{ik}^{(l)} = \frac{\pi^{(l-1)}_k \left((\textbf{\textit{x}}_i-\mub^{(l-1)}_k)^T(\Sig_k^{(l-1)})^{-1}(\textbf{\textit{x}}_i-\mub^{(l-1)}_k)\right)^{-m/2}|\Sig^{(l-1)}_k |^{-1/2}} {\displaystyle \sum_{j = 1}^K \pi^{(l-1)}_j \left((\textbf{\textit{x}}_i-\mub^{(l-1)}_j)^T(\Sig_j^{(l-1)})^{-1}(\textbf{\textit{x}}_i-\mub^{(l-1)}_j)\right)^{-m/2}|\Sig^{(l-1)}_j |^{-1/2}}
\end{equation*}
%\hspace{-.2cm}with $r_{ik}^{(l-1)} = (\textbf{\textit{x}}_i-\mub^{(l-1)}_k)^T(\Sig_k^{(l-1)})^{-1}(\textbf{\textit{x}}_i-\mub^{(l-1)}_k)$
% \end{minipage}
% \hspace{-0.3cm}
% \vrule
% \hspace{-1cm}
% \begin{minipage}[t]{0.43\textwidth}
% \vspace*{-.8cm}
% \hspace{0.1cm}
% \begin{center}
% \underline{Particular case}\\
% $g_{i,k}$ known
% \end{center}

% \begin{equation*}
% \hspace{0.1cm}
% p_{ik}^{(l)}=\frac{\pi_k^{(l-1)}f_{i,\theta_{k}^{(l-1)}}(\textbf{\textit{x}}_i)}{\sum\limits_{j=1}^K \pi_j^{(l-1)}f_{i,\theta_{j}^{(l-1)}}(\textit{\textbf{x}}_i)}
% \end{equation*}

% \end{minipage} 
% ~\\

  	\textbf{M-step:} \\
\ \ \ \ \  \textbf{For each} $1 \leq k \leq K$: \\
\ \ \ \ \ \ \ \ \ \  Update $\pi_k^{(l)}=\frac{1}{n}\sum_{i=1}^n p_{ik}^{(l)}$ and compute $w_{ik}^{(l)}= \frac{p_{ik}^{(l)}}{\sum_{l=1}^n p_{lk}^{(l)}}$\;
\ \ \ \ \ \ \ \ \ \  Set $\mub_k^{'}=\mub_k^{(l-1)}$ and $\Sig_k^{'}=\Sig_k^{(l-1)}$; 

\smallskip

\ \ \ \ \  \ \ \ \ \ \textbf{while} \textit{not convergence} \textbf{do}  	

\noindent
\begin{minipage}[t]{0.42\textwidth}
\begin{equation*}
\mub_k^{''} = \frac{\sum\limits_{i=1}^n \cfrac{p_{ik}^{(l)} \textit{\textbf{x}}_i}{(\textit{\textbf{x}}_i-\mub^{'}_k)^T({\Sig}_k^{'})^{-1}(\textit{\textbf{x}}_i-\mub^{'}_k)}}{\sum\limits_{i=1}^n \cfrac{p_{ik}^{(l)}}{(\textit{\textbf{x}}_i-\mub^{'}_k)^T(\Sig^{'}_k)^{-1}(\textit{\textbf{x}}_i-\mub^{'}_k)}}
\end{equation*}

\begin{equation*}
\hspace{-0.4cm}
\Sig^{''}_k= m \sum_{i=1}^n \frac{w_{ik}^{(l)}(\textit{\textbf{x}}_i-\mub^{{'}}_k)(\textbf{\textit{x}}_i-\mub^{{'}}_k)^T}{(\textit{\textbf{x}}_i-\mub^{'}_k)^T(\Sig_k^{'})^{-1}(\textit{\textbf{x}}_i-\mub^{{'}}_k)}    
\end{equation*}

\end{minipage}%
\hspace{0.3cm}
\vrule
\hspace{0.1cm}
\begin{minipage}[t]{0.42\textwidth}
\begin{equation*}
\mub_k^{''} = \frac{\sum\limits_{i=1}^n \cfrac{p_{ik}^{(l)} \textit{\textbf{x}}_i}{\textcolor{highlight1}{\Big[}(\textit{\textbf{x}}_i-\mub^{'}_k)^T(\Sig_k^{'})^{-1}(\textit{\textbf{x}}_i-\mub^{'}_k)\textcolor{highlight1}{\Big]}^{\textcolor{highlight1}{1/2}}}}{\sum\limits_{i=1}^n \cfrac{p_{ik}^{(l)}}{\textcolor{highlight1}{\Big[}(\textit{\textbf{x}}_i-\mub^{'}_k)^T(\Sig^{'}_k)^{-1}(\textit{\textbf{x}}_i-\mub^{'}_k)\textcolor{highlight1}{\Big]}^{\textcolor{highlight1}{1/2}}}}
\end{equation*}

\begin{equation*}
\Sig^{''}_k= m \sum_{i=1}^n \frac{w_{ik}^{(l)}(\textit{\textbf{x}}_i-\mub^{\textcolor{highlight1}{''}}_k)(\textit{\textbf{x}}_i-\mub^{\textcolor{highlight1}{''}}_k)^T}{(\textit{\textbf{x}}_i-\mub^{\textcolor{highlight1}{''}}_k)^T(\Sig_k^{'})^{-1}(\textit{\textbf{x}}_i-\mub^{\textcolor{highlight1}{''}}_k)}    
\end{equation*}

\end{minipage}

\hspace{0.2cm}

 \ \ \ \ \ \ \ \ \ \  \textbf{end}
 
 \ \ \ \ \ \ \ \ \ \ Update $\mub_k^{(l)} = \mub_k^{''}$ and $\Sig_k^{(l)} = \Sig_k^{''}$ and $\tau_{ik}^{(l)}$: 	

%$$\ \ \ \ \ \ \ \ \ \ \tau_{ik}^{(l)}= \frac{(\textit{\textbf{x}}_i-\mub_k^{(l)})^T(\Sig_k^{(l)})^{-1}(\textbf{\textit{x}}_i-\mub_k^{(l)})}{a_{ik}} \mathrm{ \ with \ } a_{ik} = \underset{t}{\arg\sup}\{t^{m/2}g_{i,k}(t)\};$$ 
  	$l \leftarrow l+1$\;
  	}
 Set $z_i$ as the index $k$ that has the maximum $p_{ik}$ value\;  
 \caption{Scheme of the F-EM algorithm. The Version 1 of the M-step is on the left and the Version 4 is on the right. The differences are highlighted in red in the Version 4.}
 \label{algochowliu}
\end{algorithm}

%\smallskip

The plots in Figure \ref{conv} show the convergence of the fixed-point equations for the estimation of $\mub$ and $\Sig$ for the different versions of the algorithm in two different setups with two distributions each. We separately study the two parameters in order to see if the variations differently affect each convergence. The first one is a simple case with two well-separated Gaussian distributions in dimension $m=10$ (means equal to $\mathbf{0}_m$ and 2*$\mathbf{1}_m$, and covariance matrices are the identity $\mathbf{\Iden}_m$ and a diagonal matrix with elements  0.25, 3.5, 0.25, 0.75, 1.5, 0.5, 1, 0.25, 1, 1). The second one is a mixture of two $t$-distributions with heavy tails and the same parameter ($\nu_1=\nu_2=3$). As one can see in both cases, the convergence is reached for all versions of the algorithm after approximately twenty iterations of the fixed-point loop. We see in Figure \ref{conv} that the speed of convergence is improved for Versions 3 and 4, as expected. On the other hand, we study in Figure~\ref{conv.like} the evolution of the log-likelihood in these two different scenarios. In the case of the multivariate $t$-distributions, we computed the likelihood with the true degrees of freedom ($\nu_k = 3$). This Figure shows an increasing likelihood in all cases and a faster convergence of the Version 1 of the model because the correct values for the mean/scatter estimators are reached faster even though its computation takes a bit longer than for Versions 3 and 4. The estimation accuracy of Versions 1 and 2 are by construction (ML-based) better than for Versions 3 and 4. Based on these figures and previous studies about fixed-point fast convergence \citep[see e.g.,][]{Fred}, Version 1 is kept since it follows the original proposal, and although slightly slower than Version 2 for the fixed-point loop, it is faster for the convergence of the algorithm. Furthermore, we fixed the number of iterations to 20 in all the experiments. Notice that increasing this number does not result in a significant increase in terms of clustering performance.  \\

\begin{figure}[!ht]
\centering
\subfigure{
\includegraphics[width=2.91in]{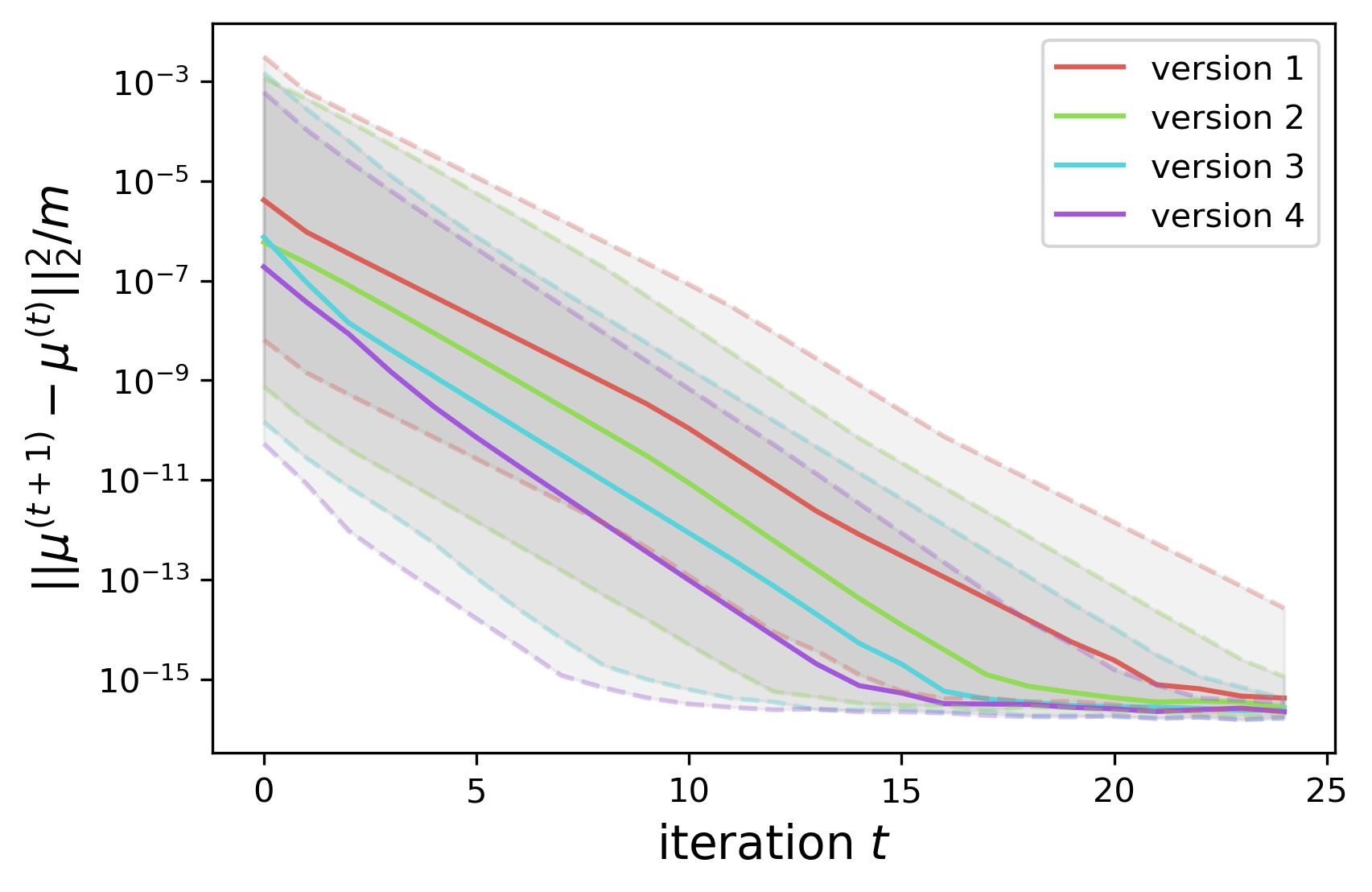}}
\subfigure{
\includegraphics[width=2.91in]{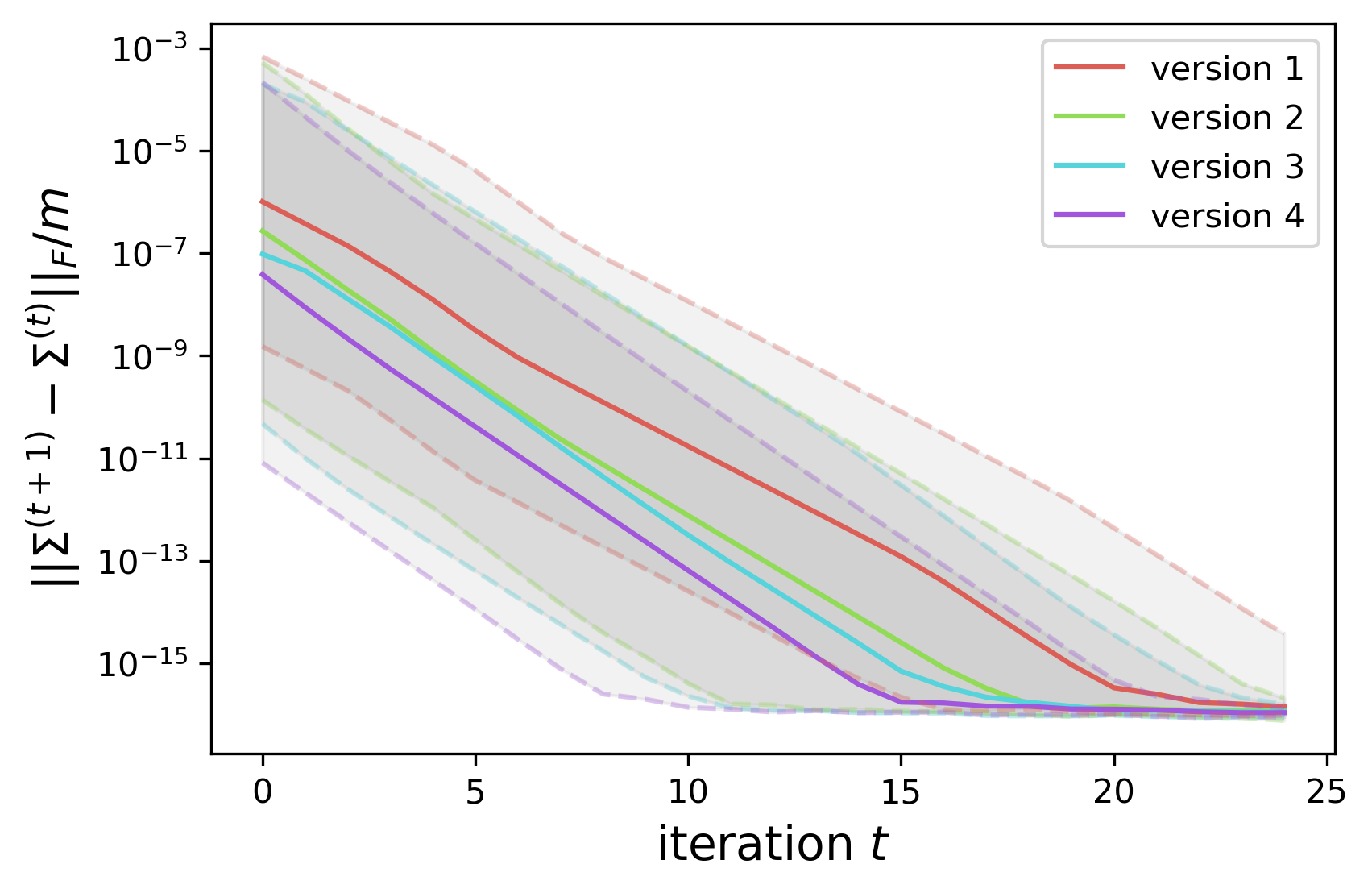}}

\subfigure{
\includegraphics[width=2.91in]{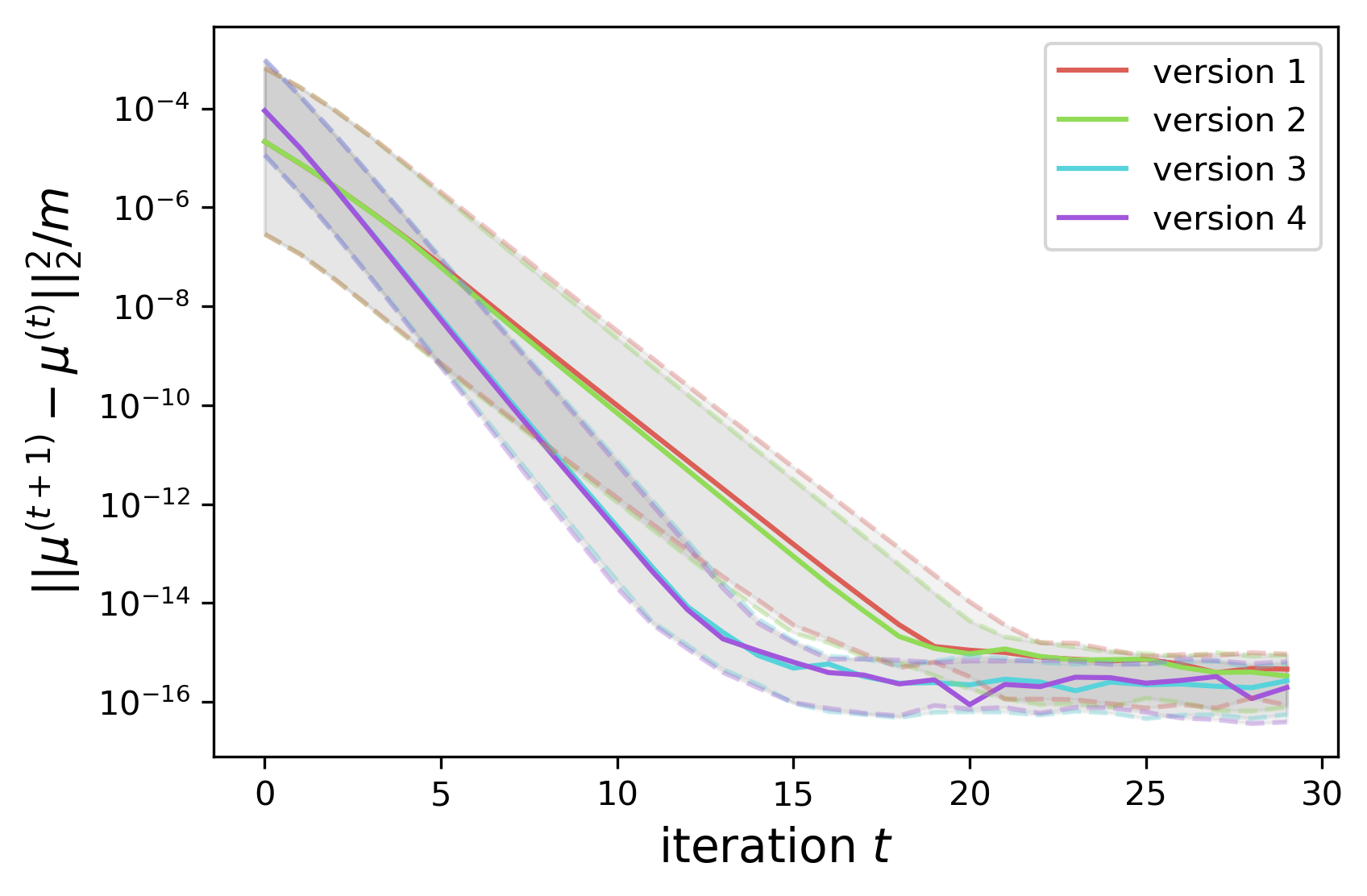}}
\subfigure{
\includegraphics[width=2.91in]{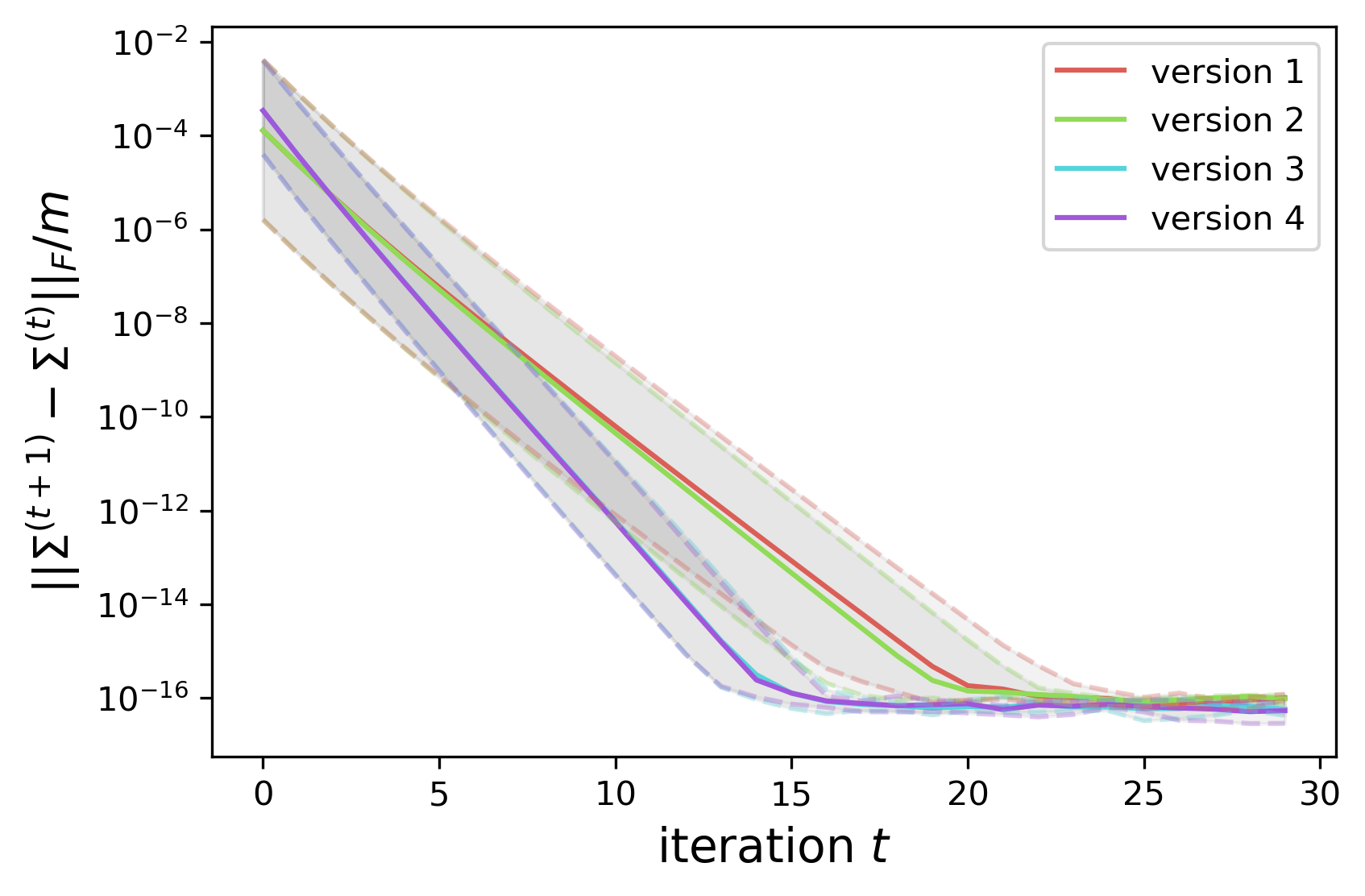}}
\centering
\caption{Convergence speed of the fixed-point equations for the estimation of $\mub$ (left) and $\Sig$ (right). The results in Gaussian case are plotted on the top and on the bottom and the ones for a mixture of  $t$-distributions are shown on the bottom. Each line represents the median of the values obtained on each iteration of the fixed-point iteration for all the iterations of the F-EM algorithms and for all clusters. The gray areas represent the quartile range of each iteration of each version.\vspace*{1cm}}
\label{conv}
\end{figure}  

\begin{figure}[!ht]
\centering
\includegraphics[width=0.48\textwidth]{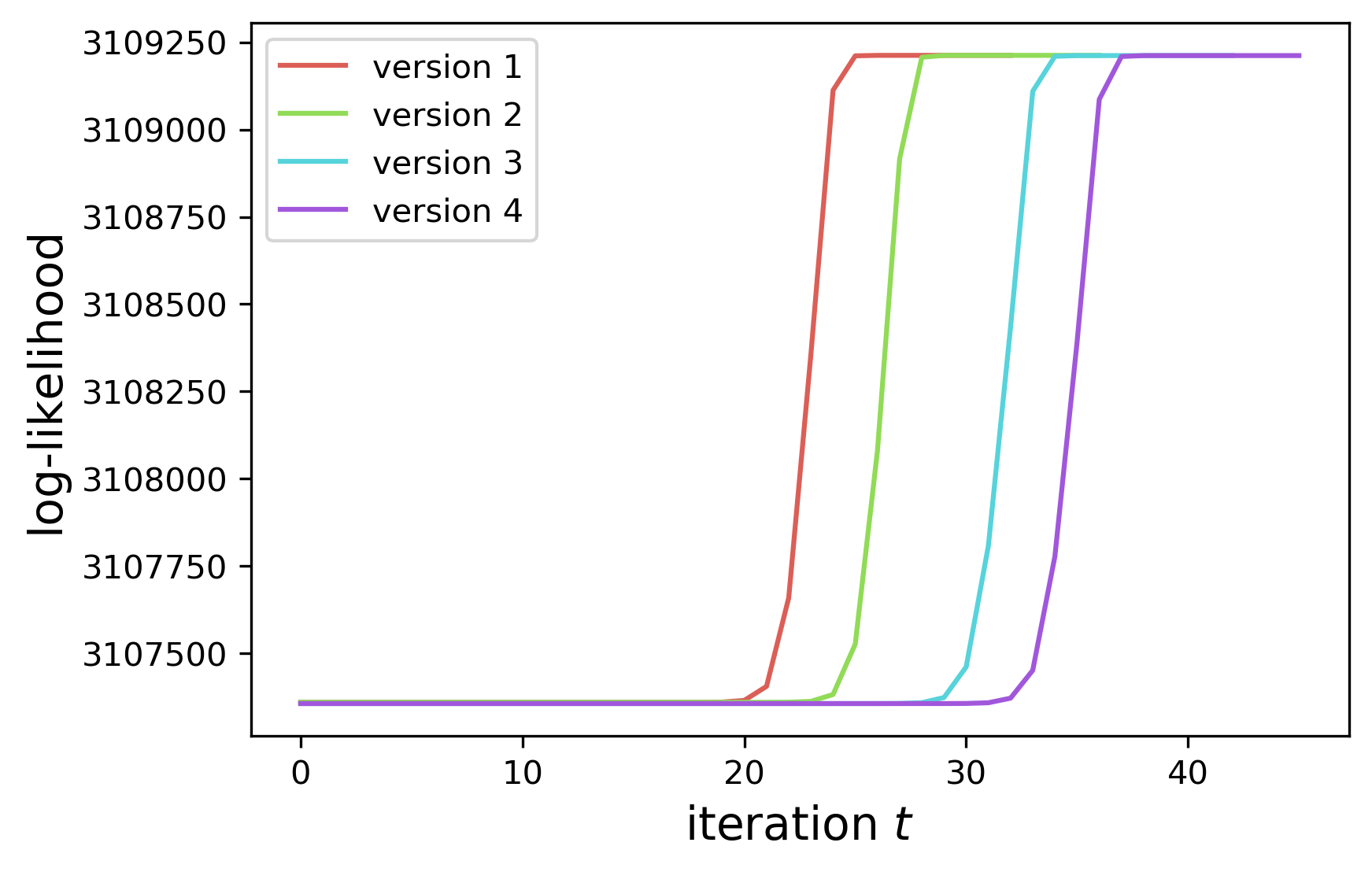}
\includegraphics[width=0.48\textwidth]{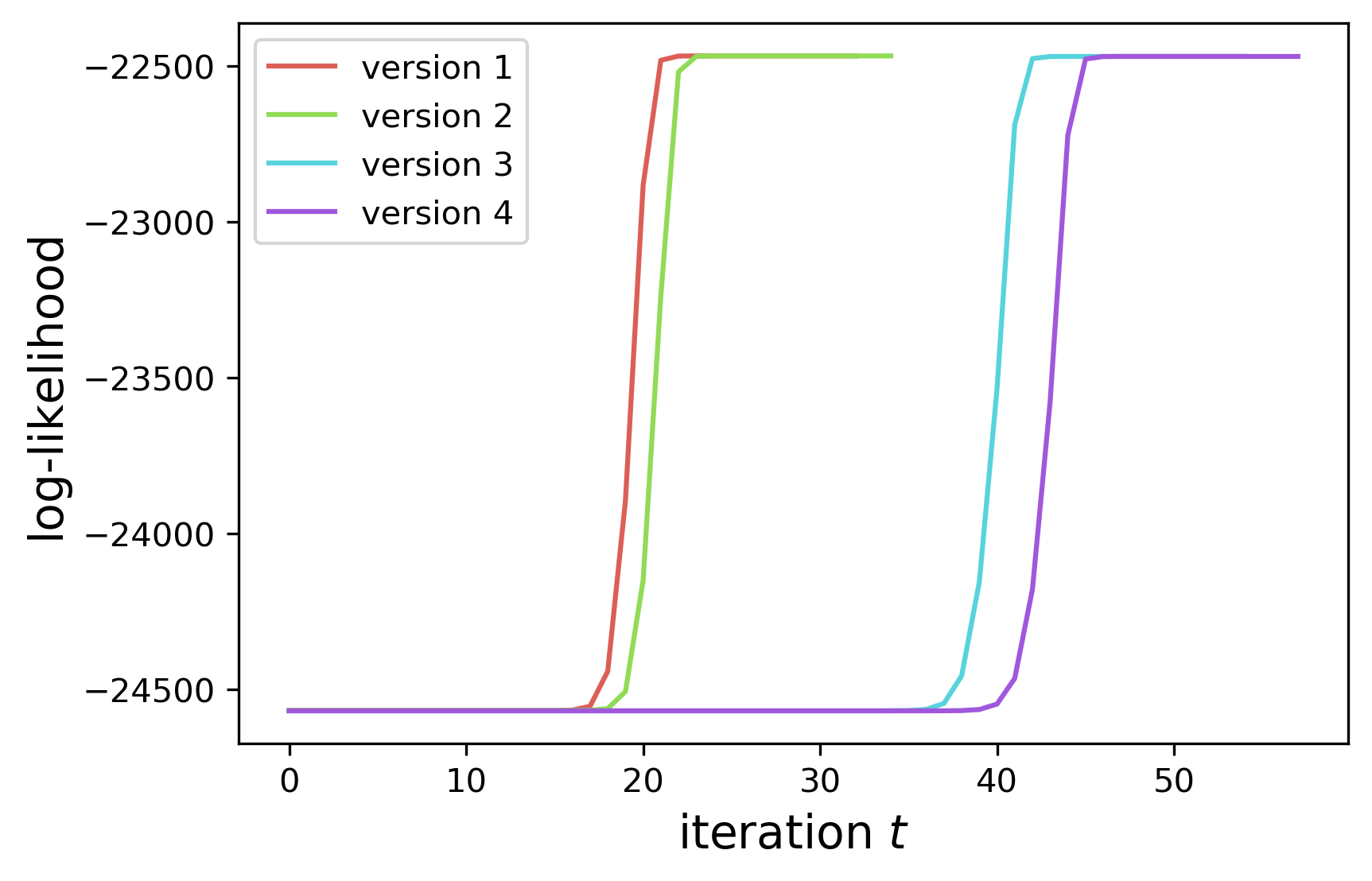}
\centering
\caption{Evolution of the log-likelihood of the models for the different versions. On the left, the results in the Gaussian case and on the right the results for a mixture of $t$-distributions.}
\label{conv.like}
\end{figure}

\textcolor{black}{
Let us now discuss initialization and thresholds used in the proposed algorithm. The mean parameters are initialized as the means resulting of the k-means algorithm. In the case where
k-means outputs clusters of only one point, we run k-means again leaving out the isolated points. Due to singularity problems, we take the initial scatter matrix as the identity matrix. We set the initial value of all $\tau$ parameters to one. For the convergence flag, we consider $10^{-6}$ for the threshold of the $l_2$-norm difference of consecutive estimators, and the maximum number of iterations of the fixed-point loop length is set to 20 based on previous discussion. Using the initialization described above, we obtain the same final clustering results for each run. In the low-dimensional case, we truncate the $\tau$ value in order to avoid numerical issues induced by points that are very close to the mean. That is, if $\tau$ is smaller than $10^{-12}$ we change its value to the selected threshold. The implementation in Python of the algorithm is available at the repository  \href{https://github.com/violetr/frem}{github.com/violetr/fem}}.\\

Furthermore, it is important to remark that, in our approach, the constraint on the trace of $\widehat{\Sig}$ ($\text{tr}(\widehat{\Sig})=m$) \textbf{does not} act as a regularization procedure, as it is usually the case in EM-like algorithms \citep{Tclust, RIMLE}. As mentioned in Remark \ref{rem.pij}, the trace constraint does not affect the clustering results. \\

\textcolor{black}{
Finally, regarding the complexity of the algorithm, it happens to be the same as the one of the classical EM algorithm for mixture of Gaussian distributions. The E-step has the same complexity of the usual algorithm. For the M-step, even though a nested loop is included to solve the fixed-point equations, the complexity is not increased since the number of iterations is constant and the main cost of each iteration corresponds to the scatter matrix inversion as in EM for GMM.}\\

\medskip

\section{Experimental Results}\label{experiments}

In this section, we present experiments obtained with both synthetic and real data. We study the convergence of the fixed point equations and the estimation error in the case of synthetic data (for which we know the true parameter values). We compare our results to the ones of the classical EM for GMM, EM for multivariate $t$-distributions, TCLUST \citep{Tclust} and RIMLE \citep{RIMLE}. Additionally, for the real data, we compare the clustering results with the ground truth labels for k-means, HDBSCAN and spectral clustering \citep{spectral}. The comparison between the former three and our algorithm is straight-forward because they all have in common only one main parameter (the number of clusters) that we fix and suppose known in our experiments. Regarding the implementations, we use Scikit-learn \citep{scikit-learn} for k-means and the Gaussian Mixture and the R package EMMIXskew \citep{emmix} for the mixture of $t-$distributions. Concerning TCLUST and the RIMLE  algorithms, we set the number of clusters and use the default values for the rest of the parameters. When possible, we avoided the artificial constraint on the TCLUST algorithm solution caused by the eigenvalue constraint threshold. We used the OTRIMLE version of RIMLE that selects the main parameter of the model with a data-driven approach \citep{otrimle}. For both of them, we use the R implementation provided by the authors. In the case of spectral clustering, we run the Scikit-learn implementation where it is necessary to tune an extra parameter in order to build the neighborhood graph. We set the number of neighbors in the graph equal to the number that maximizes the silhouette score \citep{silhouettes}. A fair comparison with HDBSCAN is even more difficult to set because the parameters to tune are completely different and less intuitive than those of the other algorithms. Once again, we select the best silhouette score pair of parameters by sweeping a grid of selected values. \\

We then quantify the differences of performance by using the usual metrics for the clustering task known as the adjusted mutual information (AMI) index and the adjusted rand (AR) index \citep{Vinh:2010:ITM:1756006.1953024}. For real datasets, one also provides the rate of correct classification when matching each clustering label with a classification label. In the case of real datasets, we also report the clustering classification rate as done in \cite{doi:10.1002/cyto.a.23030}. In some cases, we visualize the 2D embedding of the data obtained by the UMAP algorithm \citep{UMAP} colored with the resulting labels of the different clustering algorithms in order to better understand the nature of the data and the clustering results. This dimensional reduction algorithm has the same objective as t-SNE \citep{tSNE} but its implementation in Python is much faster.

\subsection{Synthetic Data}\label{sec.syn}

In order to compare the clustering performance of the different algorithms, data are simulated according to different distributions, different values for the $\tau_{ik}$'s and different parameters. The different setups are reported in Tables \ref{tab-param-1} and \ref{tab-param-2}. 
The setups 1 and 2 are mixtures of multivariate t-distributions.  Setup 3 is a mixture of $k-$distributions, $t-$distributions and Gaussian distributions. 
On the other hand, in Setup 4 we add  uniform noise background to three Gaussian distributions. This noise accounts for 10\% of the data.  
Finally, Setup 5 includes clusters that are a mixture of two distributions. In other words, all points from a given cluster are generated with the same parameters $\mub$ and $\Sig$ but we used different distributions. In this situation, we mixed generalized Gaussian distributions (noted $\mathcal{GN}$), $t-$distributions and Gaussian distributions (noted $\mathcal{N}$). 
In Table \ref{tab-param-2}, diag and diag$^\dagger$ are diagonal matrices with trace $m$ and diag$^*$ has trace $12$. Consequently, Setup 1 tests the performance of the algorithm in the presence of covariance matrices with different traces.\\

%\textcolor{magenta}{I HAVE AN example  WITH different $g_i$ BUT THE ESTIMATION IS NOT VERY STABLE, I PUT THE PLOTS HERE:} \fred{I THINK IT IS DUE TO THE VALUE OF $TAU_i$ you take... When changing the distribution, of course it should affect the (theoretical) values of $tau_i$. The simulations may be not totally correct...}\\

% \begin{figure}[h!]
%     \centering
%     \includegraphics[width = 0.3\textwidth]{error_sigma_S55.pdf}
%     \includegraphics[width = 0.3\textwidth]{error_mu_S55.pdf}
%     \includegraphics[width = 0.3\textwidth]{error_AMI_S55.pdf}
%     \caption{Caption}
%     \label{fig:my_label}
% \end{figure}

\begin{table}[ht]

\centering
\begin{tabular}{cccccc}
\hline
\textbf{Setup} & \textbf{$m$} & \textbf{$n$} & \textbf{distribution 1} & \textbf{distribution 2} & \textbf{distribution 3}        \\ \hline
\textbf{1}  & 8  & 1000 & $t$, $dof=3$       & $t$, $dof=3$       & $t$, $dof=3$              \\
\textbf{2}  & 8 & 1000  & $t$, $dof=10$      & $t$, $dof=10$      & $t$, $dof=10$   \\
\textbf{3}  &  40 & 1300 &     $K$, $dof=3$              &    $t$, $dof=6$    & $\mathcal{N}$                            \\ 
\textbf{4}  &  8 & 1200 &     $\mathcal{N}$              &    $\mathcal{N}$    & $\mathcal{N}$\\
\textbf{5}  &  6 & 1200 &     $0.7 \mathcal{N} + 0.3 \mathcal{GN}, s = 0.1$  &    $0.6 \mathcal{N} + 0.4 t, dof = 2.3$   & $\mathcal{N}$\\
\hline
\end{tabular}
\caption{\label{tab-param-1} Dimension and shape of the distributions in the five different setups. The distribution of each of the three clusters in each setup is specified. The distribution can be multivariate Gaussian ($\mathcal{N}$), Generalized Gaussian ($\mathcal{GN}$),  $t-$distribution or $k-$distribution. In the case of the latter three distributions the extra parameters ($dof$ or $s$) are indicated.}

\end{table}

\begin{table}[ht]

\centering
\begin{tabular}{ccccccc}
\hline
\textbf{Setup} & \textbf{$\mub_1$}     & \textbf{$\mub_2$} & \textbf{$\mub_3$}     & \textbf{$\Sig_1$} & \textbf{$\Sig_2$} & \textbf{$\Sig_3$} \\ \hline
\textbf{1}     & $\mathcal{U}_{(0,1)}$ & $6 * 1_m$           & $1.5 * 1_m+3 e_1$       & diag              & diag$^*$              & $\Iden_m/m*4$             \\
\textbf{2}     & $\mathcal{U}_{(0,1)}$ & $5 * 1_m$           & $1.5 * 1_m+\mathcal{N}(0,\varepsilon$) & diag              & diag$^\dagger$              & $\Iden_m$             \\
\textbf{3}     & $2 * 1_m$ & $6 * 1_m$           & $7 * 1_m$ & $toep(\rho=0.2)$              &  $\Iden_m$              & $toep(\rho=0.5)$  \\ 
\textbf{4}     & $5 * 1_m$ & $7 * 1_m$           & $9 * 1_m$ & $toep(\rho=0.2)$              &  $\Iden_m$              & $toep(\rho=0.5)$  \\  
\textbf{5}     & $\mathcal{U}_{(0,0.2)}$ & $2 * 1_m$           & $4 * 1_m+ 2 e_1$ & $toep(\rho=0.4)$              &         $\Iden_m$ & $toep(\rho=0.7)$ \\
\hline
\end{tabular}
\caption{\label{tab-param-2} Parameters of the distributions in the  different setups. The means are either deterministic vectors or stochastic random uniform or Gaussian vectors. The options for the scatter matrix are diagonal with different eigenvalues, the identity matrix or Toeplitz matrix where we specify the constant.}
\end{table}

\textcolor{black}{We repeat each experiment $nrep=200$ times and collect the mean and standard deviation of estimation errors. For the matrices, we compute the Frobenius norm of the difference between the real scatter matrix parameter and its estimation, divided by the matrix size. In order to make a fair comparison of the estimation performance, we take into account the estimations of $\Sig$'s up to a constant. In other words, we normalize all the $\Sig$ estimators to have the correct trace. The reported estimation error is computed as follows:}
$$\sqrt{\sum_{l=1}^m\sum_{o=1}^m \left((\Sig_k)_{lo} - \left(\frac{\widehat{\Sig}_k \mathrm{tr}(\Sig_k)}{\mathrm{tr}(\widehat{\Sig}_k)}\right)_{lo}\right)^2/ m^2}.$$
When estimating $\mub$, the $l_2$ norm of the error is computed.  
%Details about distributions and the parameters in the different setups are given in Tables \ref{tab-param-1} and \ref{tab-param-2}. 
%In every setup, $\Sig_1$ is a diagonal matrix with the same eigenvalues and $\Sig_3$ is set to $\Iden_m$.
%\textbf{Setup 1} presents a mixture of 3 Student's t-distribution in dimension $m=8$ with degrees of freedom equal to 3. The three clusters have the same proportion in the mixture. 
%We set the $\mu$ parameters in the following way: $\mu_1$ is drawn from a uniform random  vector, $\mu_2$ is the constant vector $2*\mathbf{1_m}$ and $\mu_3$ is $3*e_1+\mathbf{1_m}$. 
%Regarding the covariance matrices, 3 different $\Sigma$ matrices are fixed to generate each distribution. On the other side,  \textbf{Setup 2} presents a mixture of 3 Student's t-distribution in dimension $m=8$ with degree of freedom equal to 10. 
The $\mathbf{\pi}_k$ vectors, corresponding to the distribution proportions, are randomly chosen from a set of possibilities that avoid trivial and giant clusters. These cases are avoided due to the ill posed clustering problem that it implies.\\

In these experiments, we include all the considered algorithms that estimate parameters. Thus, we leave out of the comparison k-means, spectral clustering and HDBSCAN. Table \ref{tab.error.1}  shows the estimation error when estimating the main parameters of the model for all the setups. Furthermore, we report the clustering metrics in Table \ref{tab.metrics.1}.  Complementarily, figures \ref{fig.boxplots} and \ref{fig.boxplots2} visually summarize with boxplots the distribution of these measures. In most cases,  the EM for GMM (GMM-EM)  method has poor results and a high variance. \\ 

In setups 1 and 2, the distributions are multivariate $t-$Student and the difference between them is only in the degrees of freedom. In these setups, the proposed algorithm referred to as flexible EM algorithm (F-EM) and $t$-EM error values are smaller than GMM-EM values. This increase in the predictive performance can be simply explained by the robustness of the estimators in the case of heavy-tailed distributions or in the presence of outliers. It is interesting to confirm that, as in Setup 2, the considered distributions have larger degrees of freedom (tails are lighter), GMM-EM performs much better than in Setup 1. However, while TCLUST and RIMLE perform similarly in the Setup 1, RIMLE has a very big variance and worse estimation in the Setup 2. This phenomenon is due to the overestimation of point as noise/outliers. On the other hand, F-EM and $t$-EM perform very similarly in both settings, with a slight improvement of F-EM in the $\Sig$ estimation. 
%In Table \ref{tab.metrics.1} where the clustering metrics are reported, we can see that in average $t$-EM and F-EM are twice  better than classic GMM. 
As shown in both tables, even in the $t-$distributed case where the $t$-EM algorithm is completely adapted, our robust algorithm performs similarly in average. We remark that F-EM performs very good  as expected, even if in Setup 1 the traces are very different. Then, for Setups 3, in the case of mixture of three different distributions ($k$-distribution, $t$-distribution and Gaussian distribution), the F-EM algorithm outperforms the other algorithms in the majority of runs. As Figure \ref{fig.boxplots} shows, there are only  very few runs where F-EM had bad performance. Thus, it is important to notice that the model assumptions used to derive the F-EM algorithm, \textit{i.e.,} unknown $\tau_{ik}$'s and different distributions for each observation, is very general and it allows to successfully handle the case of mixtures of different distributions without additive computational cost, which appears to be an important contribution of this work.\\

Furthermore, Figure \ref{fig.boxplots2} shows the performance in Setup 4 and Setup 5. In Setup 4,  in which uniform background noise in the cube $[0, 14]^m$ is included, the best performances are the ones from TCLUST and RIMLE which appears reasonably since their design matches the data generation process. After them, F-EM has a very good performance taking into account that we do not reject outliers and as a consequence, that those are intrinsically misclassified. When we exclude the noise for the metric computation, the classification performance is equally good for these three algorithms, althougth the TCLUST algorithm is computed with the true proportion of outliers.  Besides, the parameter estimation is equally good for F-EM, RIMLE and TCLUST.  The performance analysis in this Setup (for which RIMLE and TCLUST are designed to provide the best performance) highlights the flexibility and the robustness of the proposed algorithm. Finally, Setup 5 displays a very good behaviour of F-EM and RIMLE compared to the rest of the algorithms. The performance of EM-GMM is really bad there because it cannot deal with outliers coming from heavy tails. The combination of two distributions for one cluster is difficult to fit for $t-$EM and TCLUST.  The model is too general for $t-$EM and TCLUST probably suffers from a noise rate that is not sufficient to avoid the heavy tails.\\ 

To conclude, the proposed algorithm shows by design very stable performance among a wide range of cases. Indeed, when the data perfectly follows a specific model such as \textit{e.g.,} a mixture of $t$-distributions, the best algorithm will be the ML-based one (in this case the $t$-EM algorithm). However, the F-EM algorithm does perform almost as good as the $t$-EM. But in various other scenarios (data drawn from different models, outliers in the data), the F-EM will clearly outperform traditional model-based algorithms that are not adaptive.\\

\begin{figure}
    \centering
    
    Setup 1
    
    \includegraphics[width=0.28\textwidth]{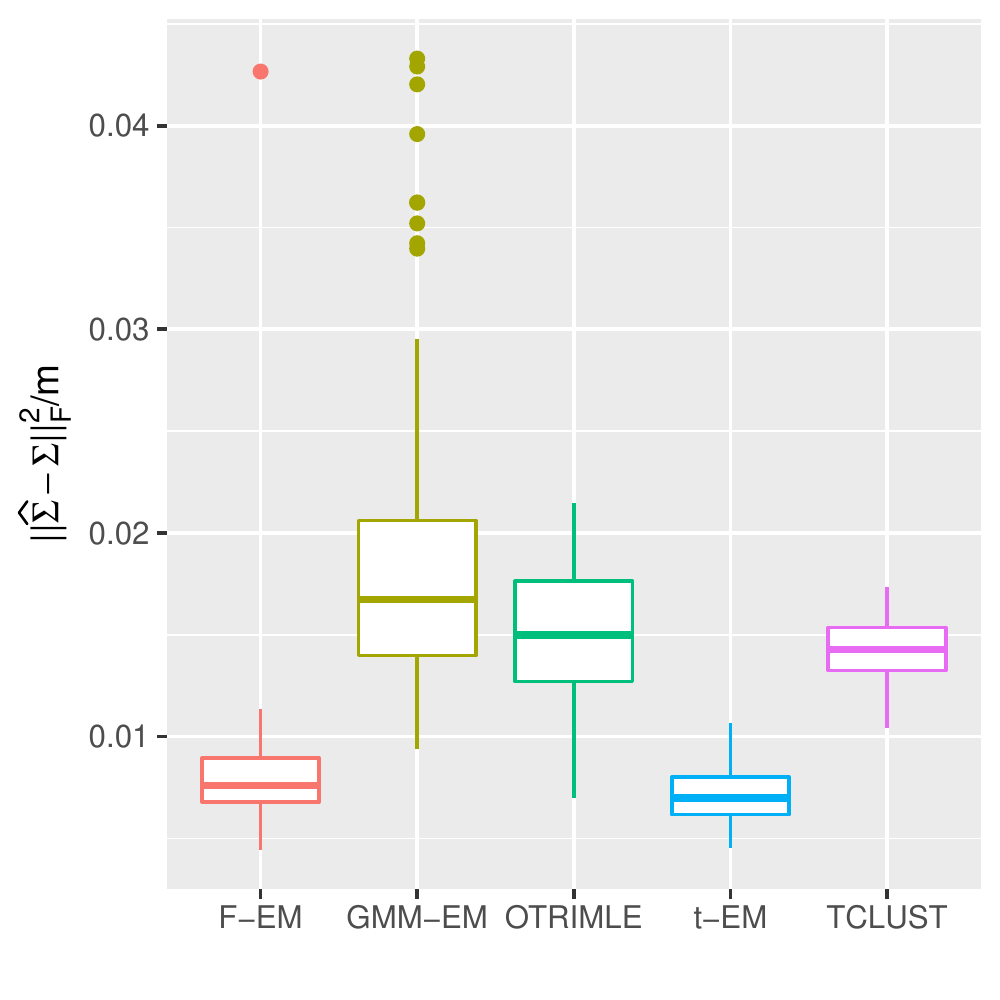}
    \includegraphics[width=0.28\textwidth]{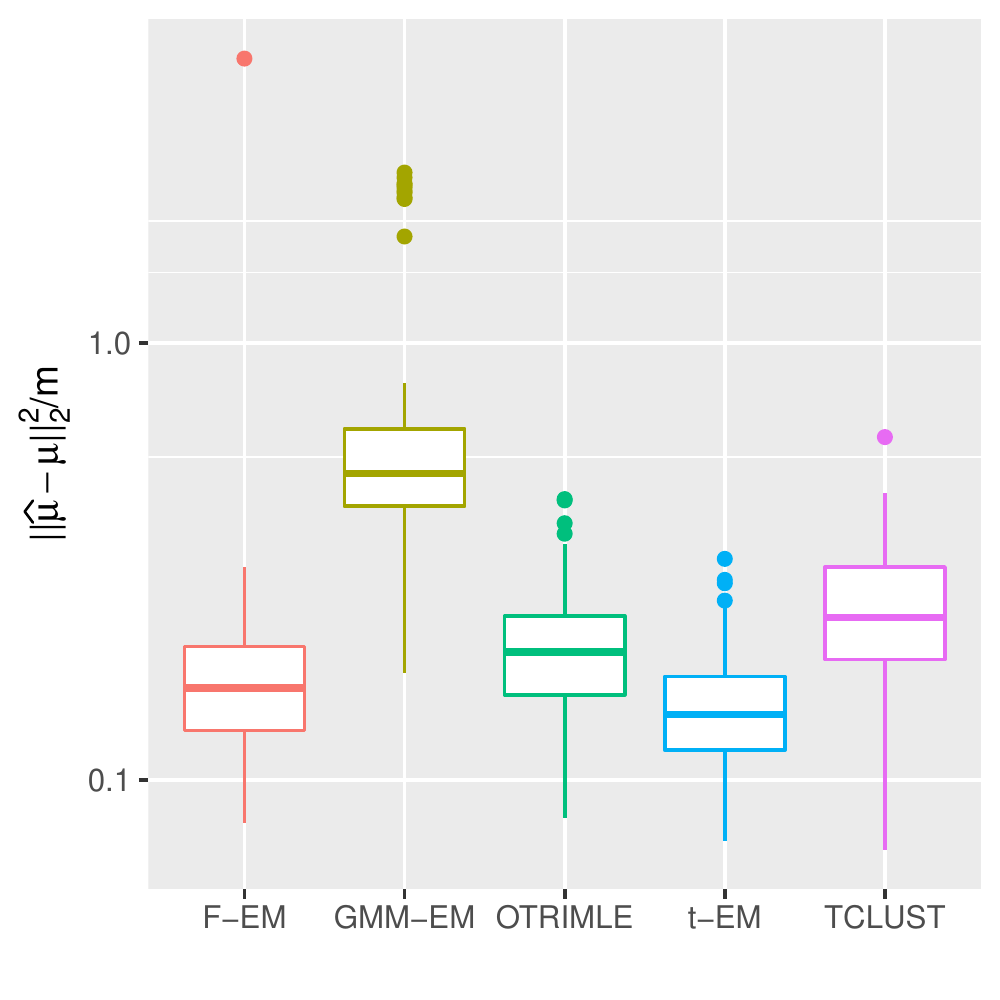}
    \includegraphics[width=0.28\textwidth]{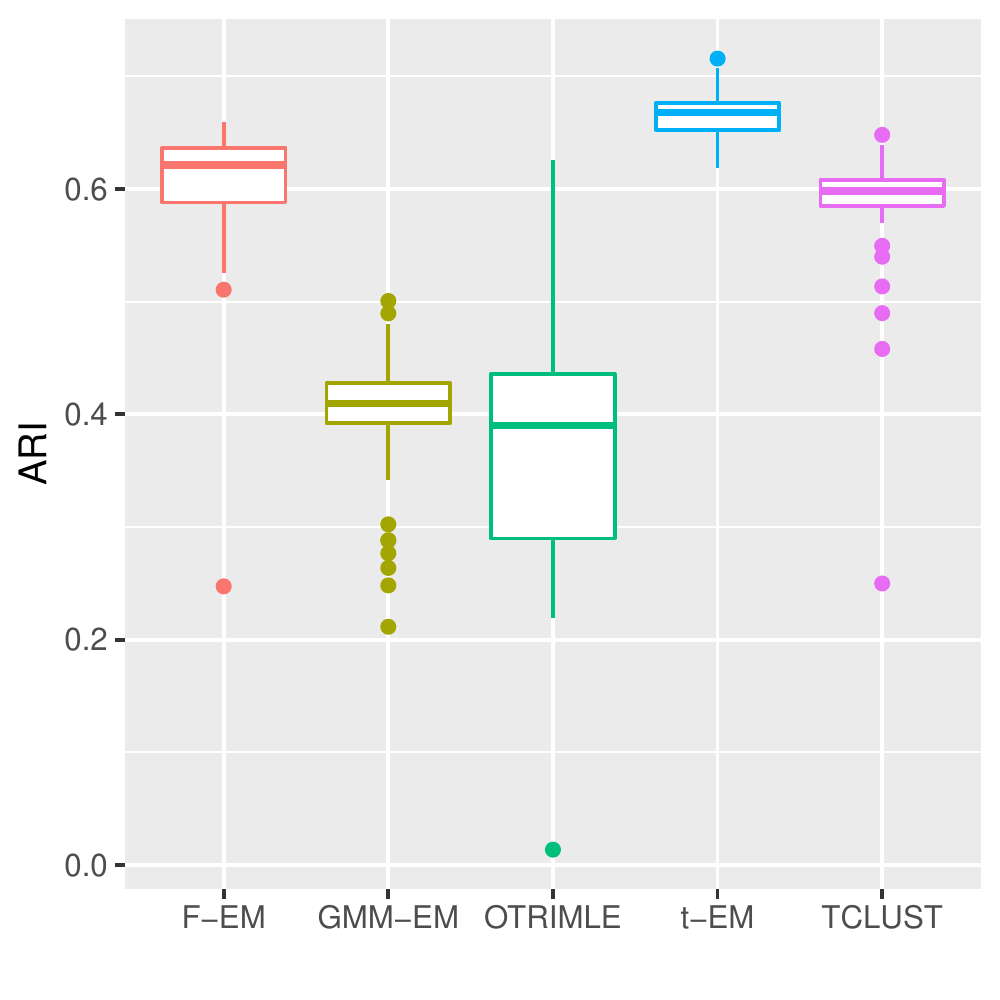}
    
    Setup 2
    
     \includegraphics[width=0.28\textwidth]{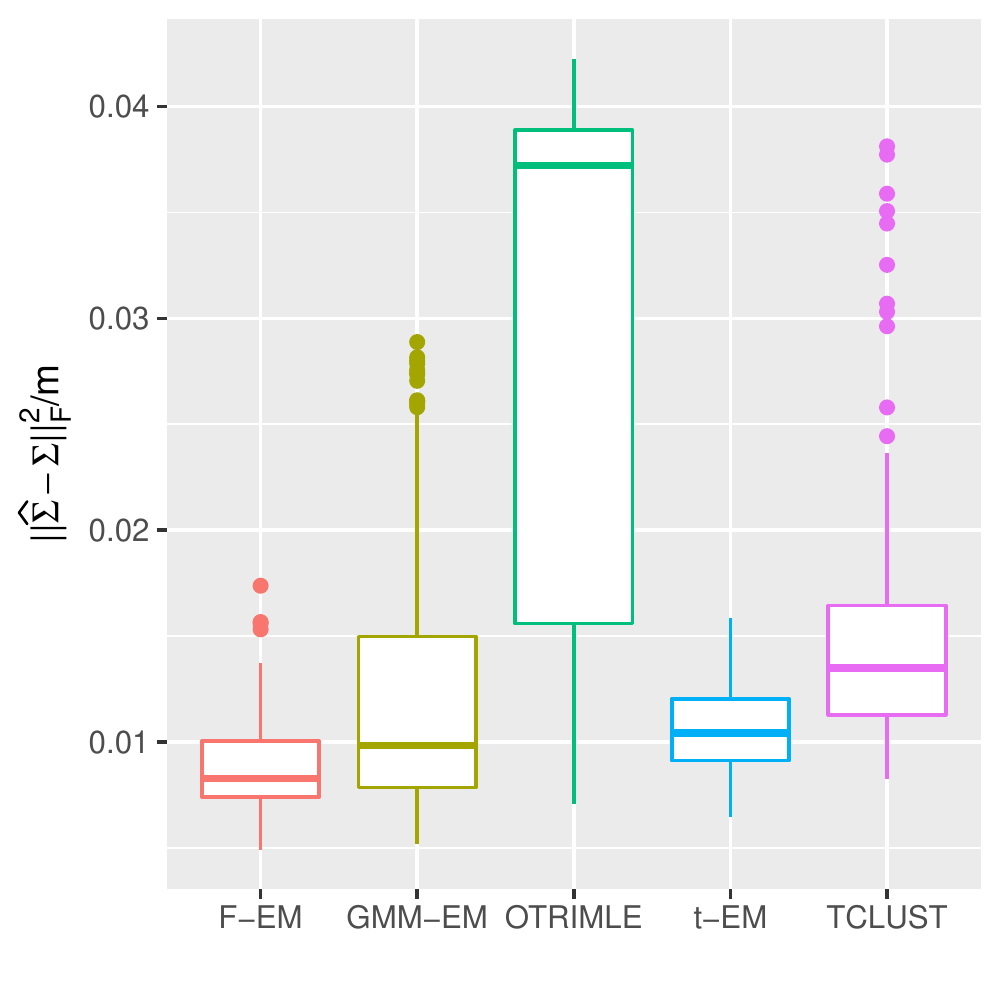}
    \includegraphics[width=0.28\textwidth]{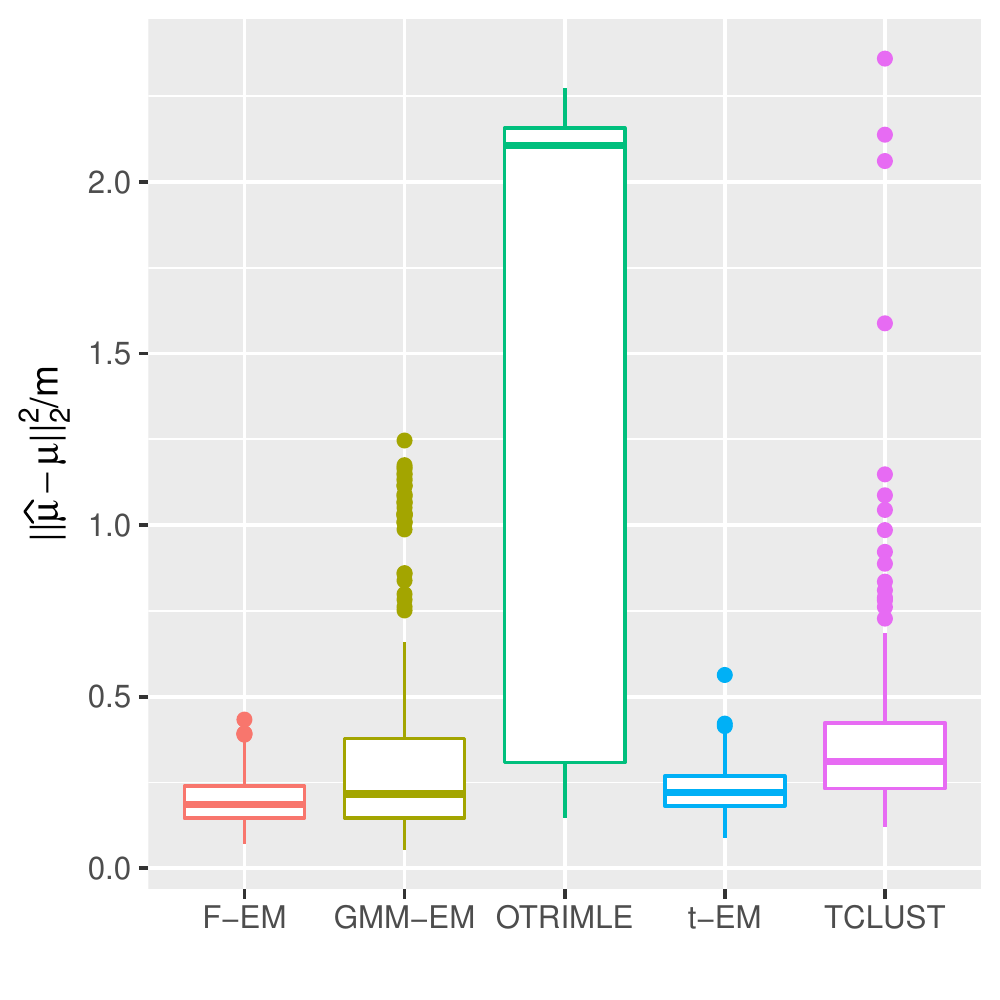}
    \includegraphics[width=0.28\textwidth]{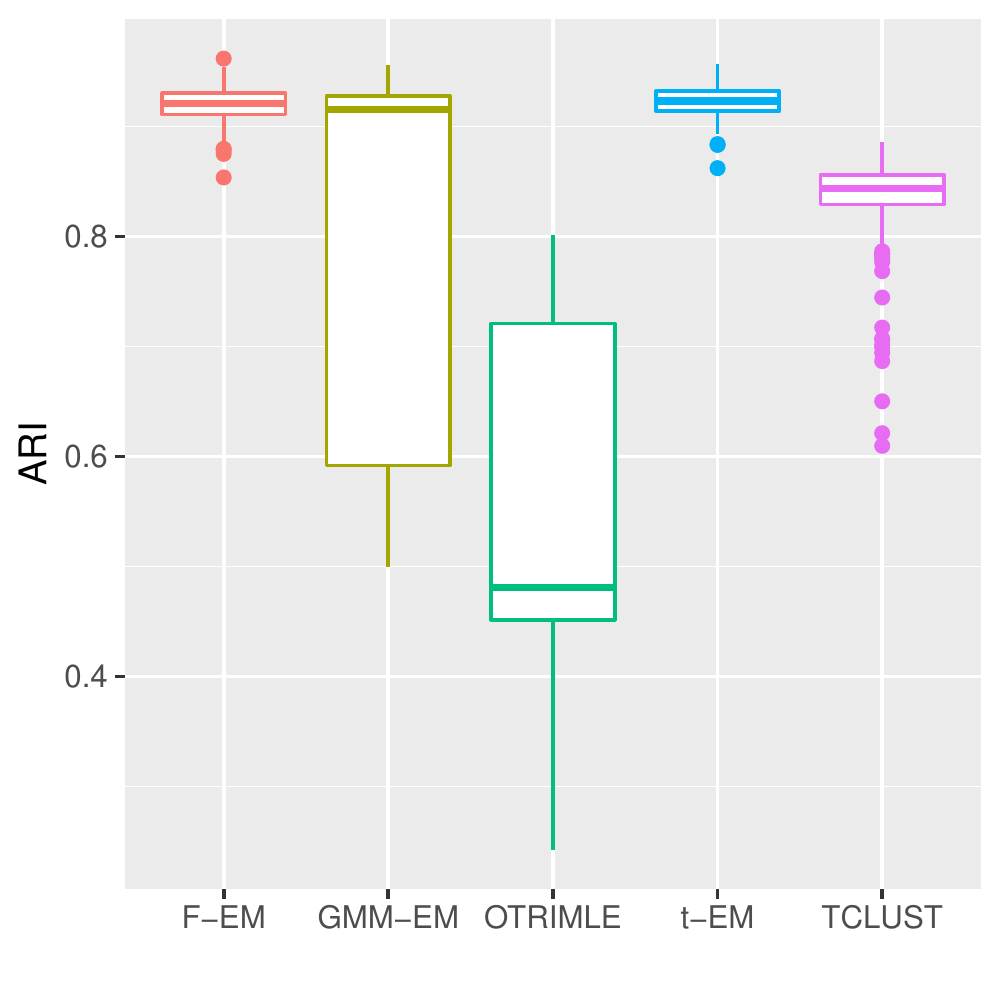}
    
    Setup 3
    
    \includegraphics[width=0.28\textwidth]{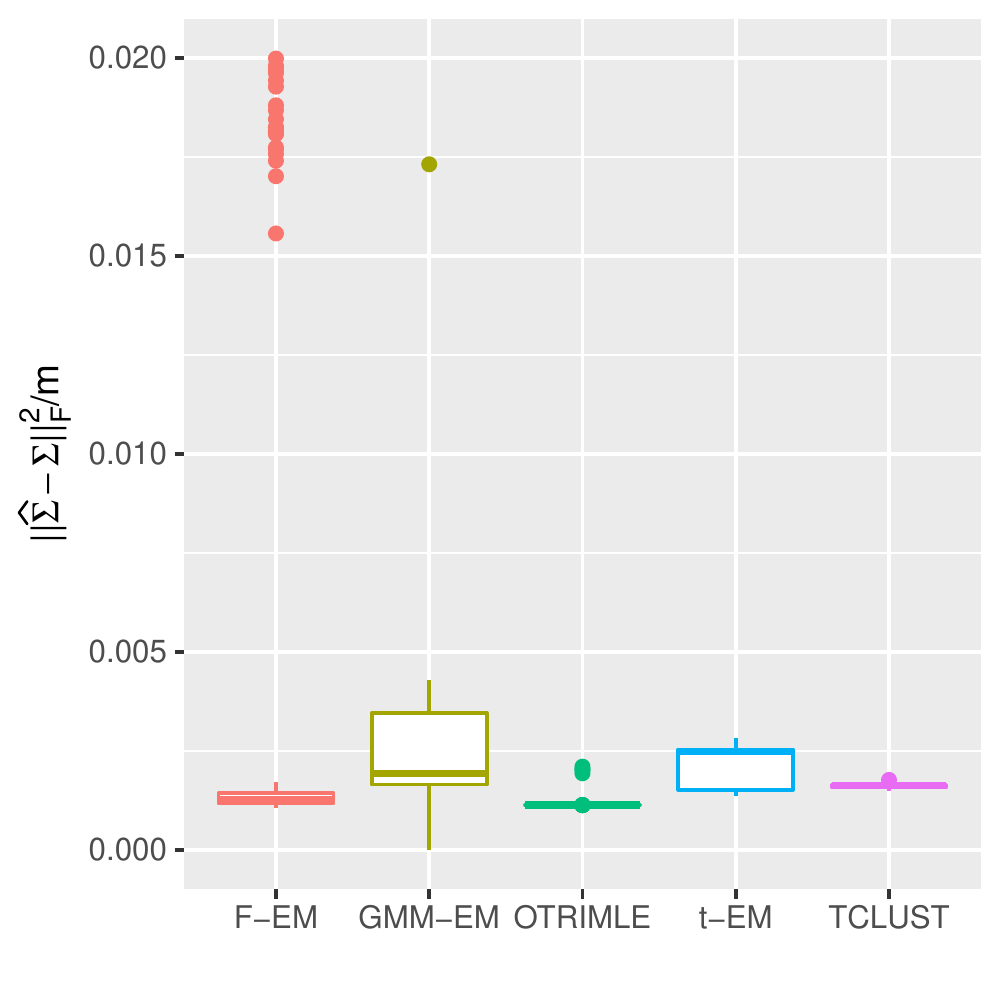}
    \includegraphics[width=0.28\textwidth]{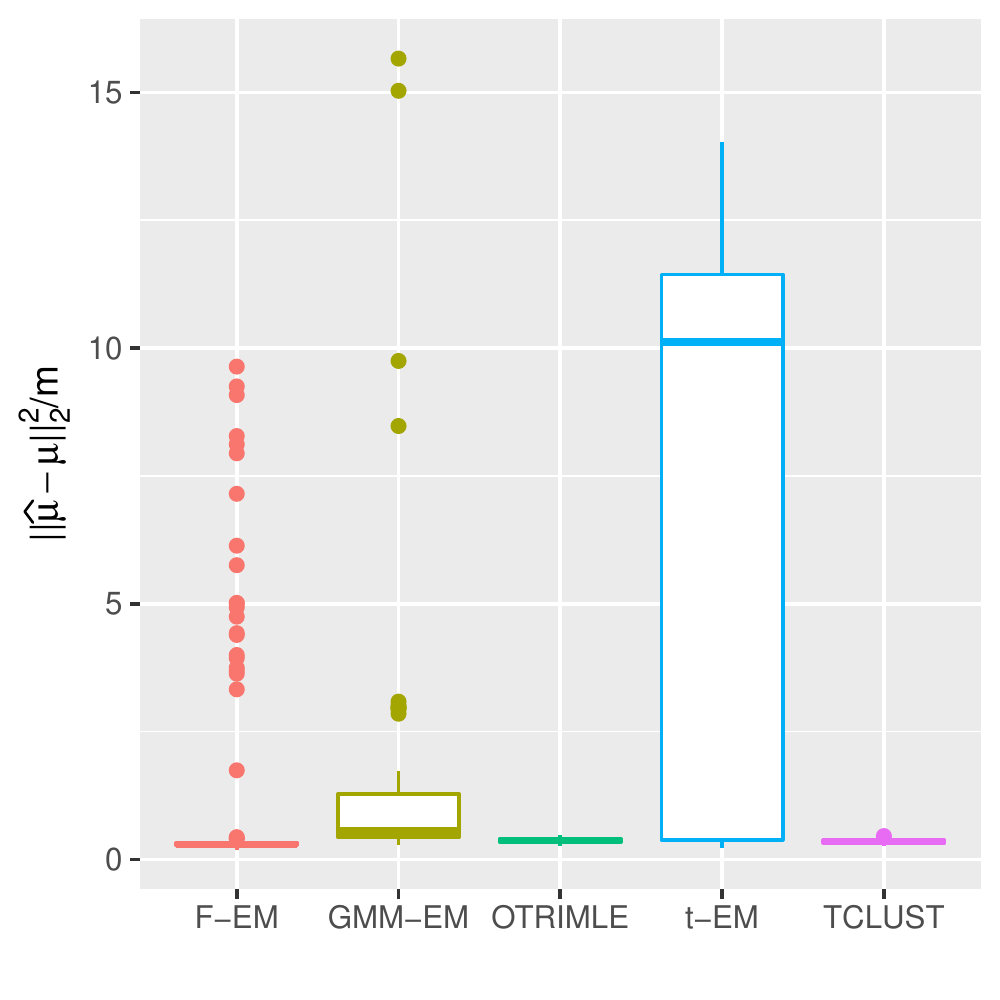}
    \includegraphics[width=0.28\textwidth]{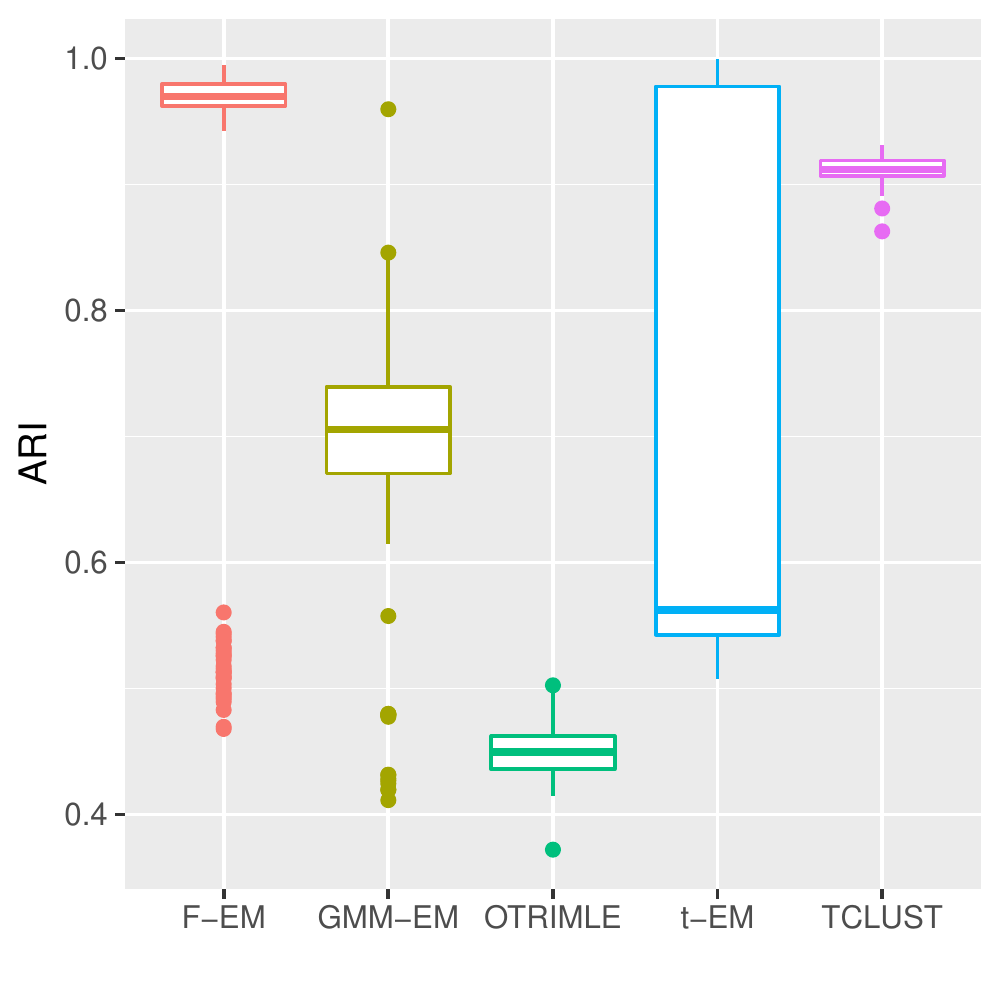}
    
    \caption{Boxplots representing the performance of the algorithms in the estimation and classification for the different setups. Each row represents one setup. From the left to the right, the Figure summarizes the estimation error of the scatter matrix up to a constant, the estimation error of the mean and the AR index of the classification when comparing to the ground truth.}
    \label{fig.boxplots}
\end{figure}

\begin{figure}
    \centering
    
    Setup 4
    
    \includegraphics[width=0.28\textwidth]{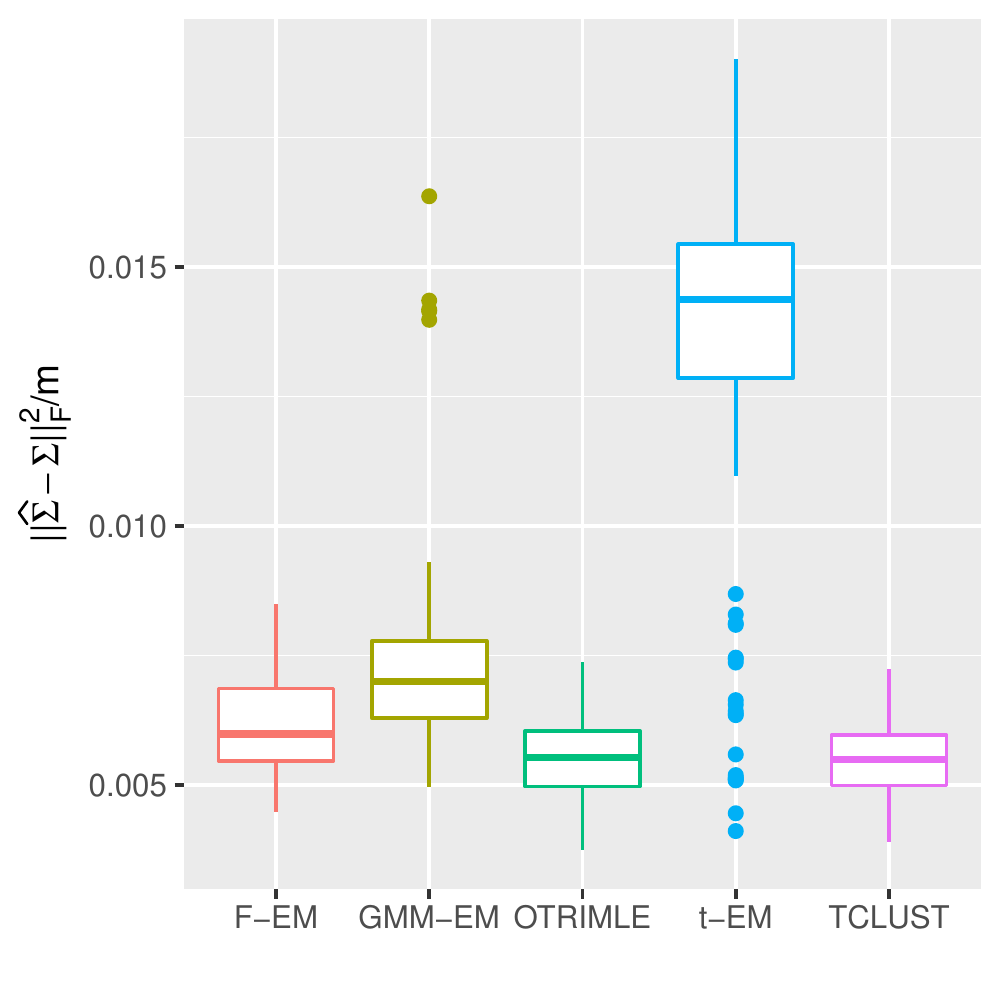}
    \includegraphics[width=0.28\textwidth]{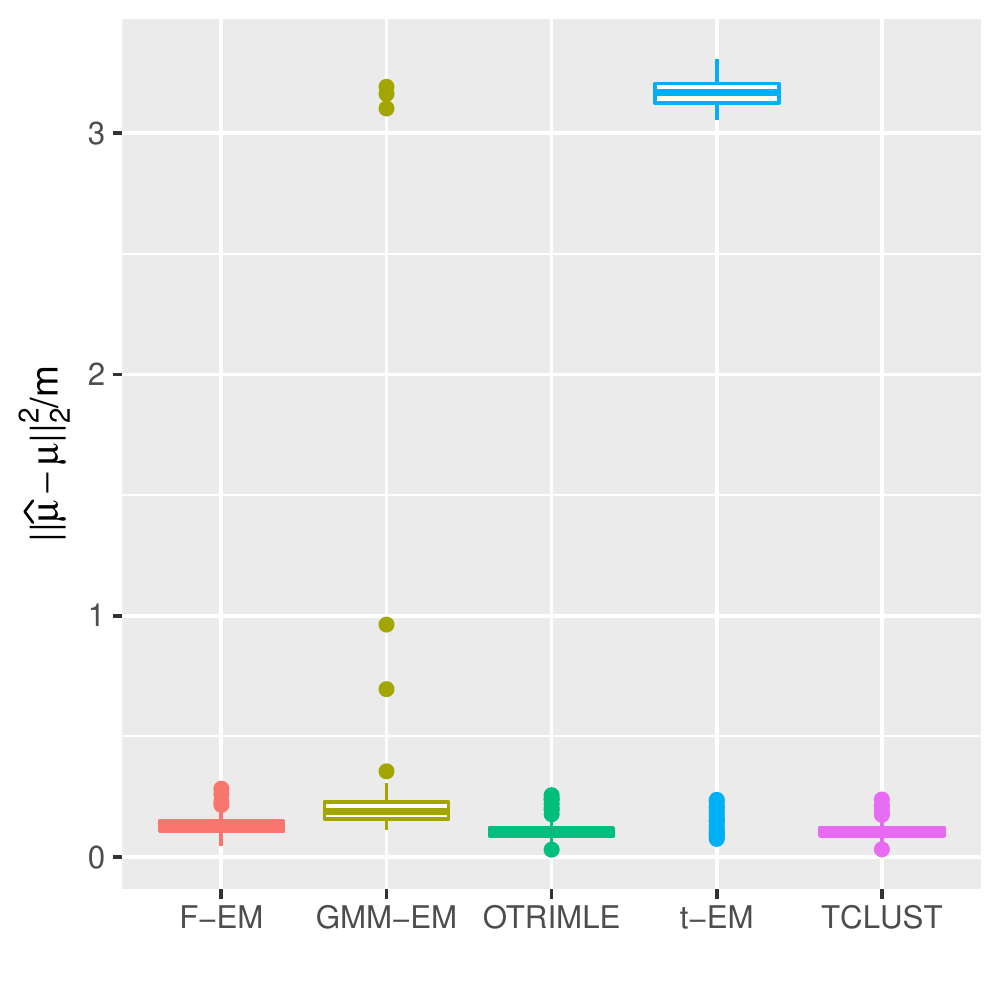}
    \includegraphics[width=0.28\textwidth]{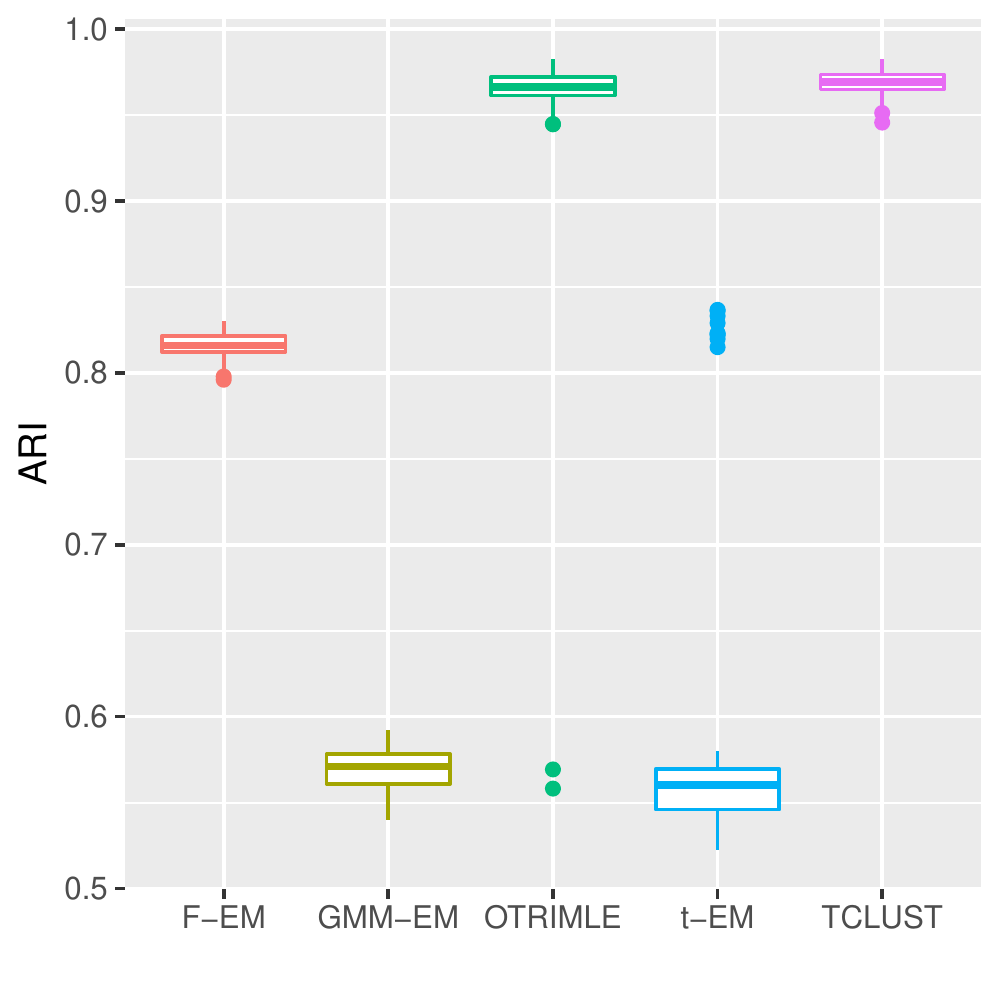}
    
    Setup 5
    
    \includegraphics[width=0.28\textwidth]{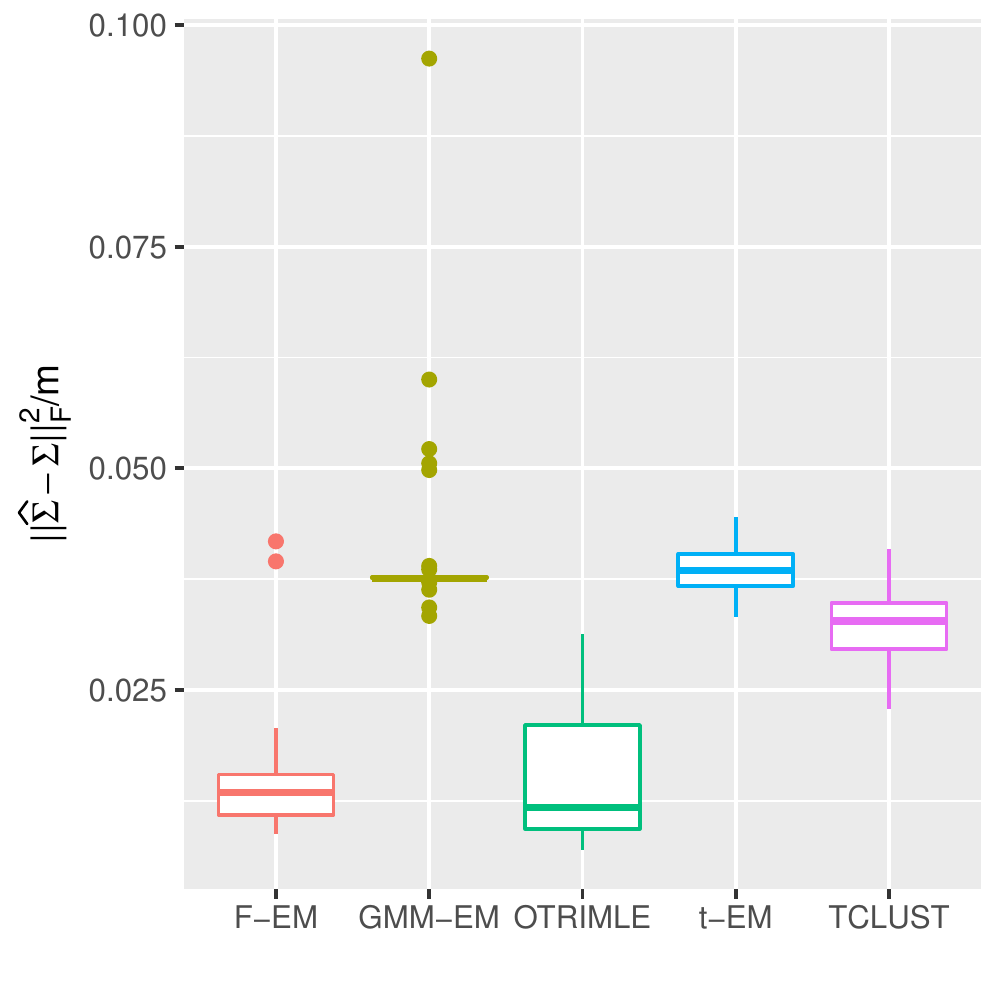}
    \includegraphics[width=0.28\textwidth]{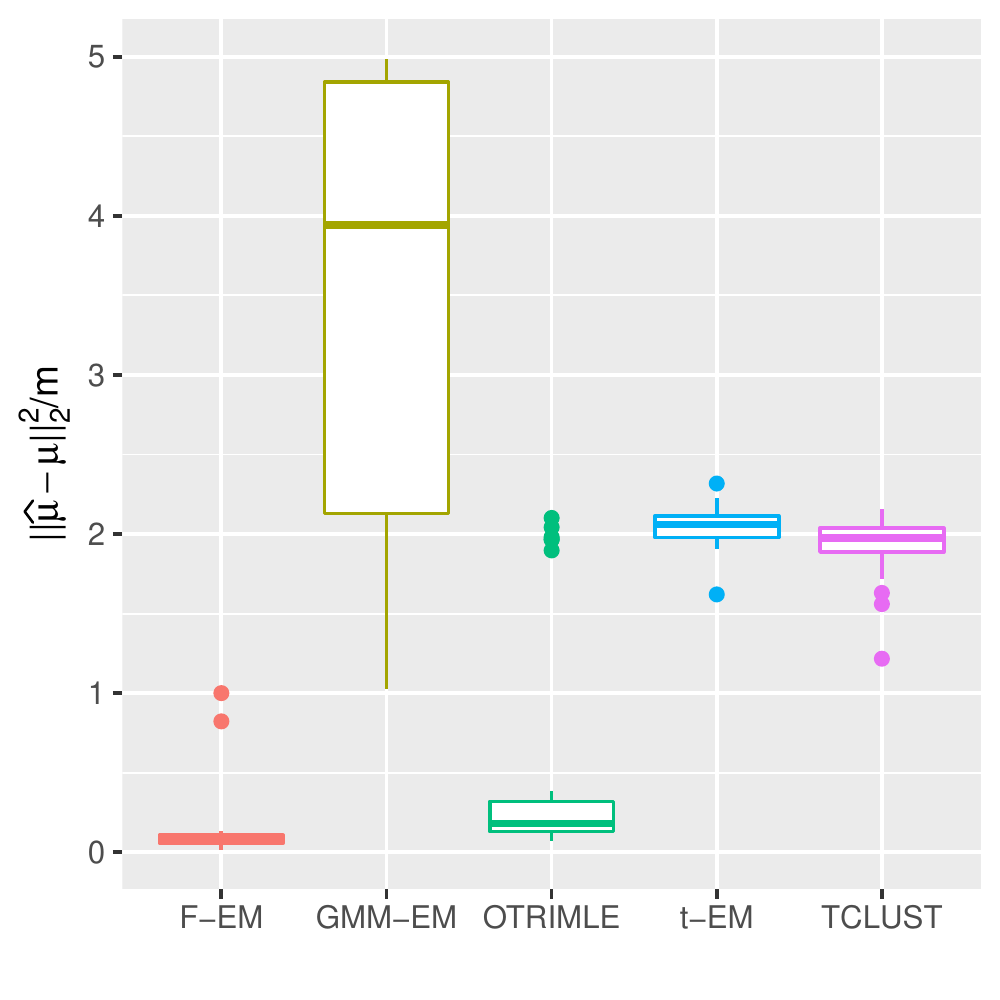}
    \includegraphics[width=0.28\textwidth]{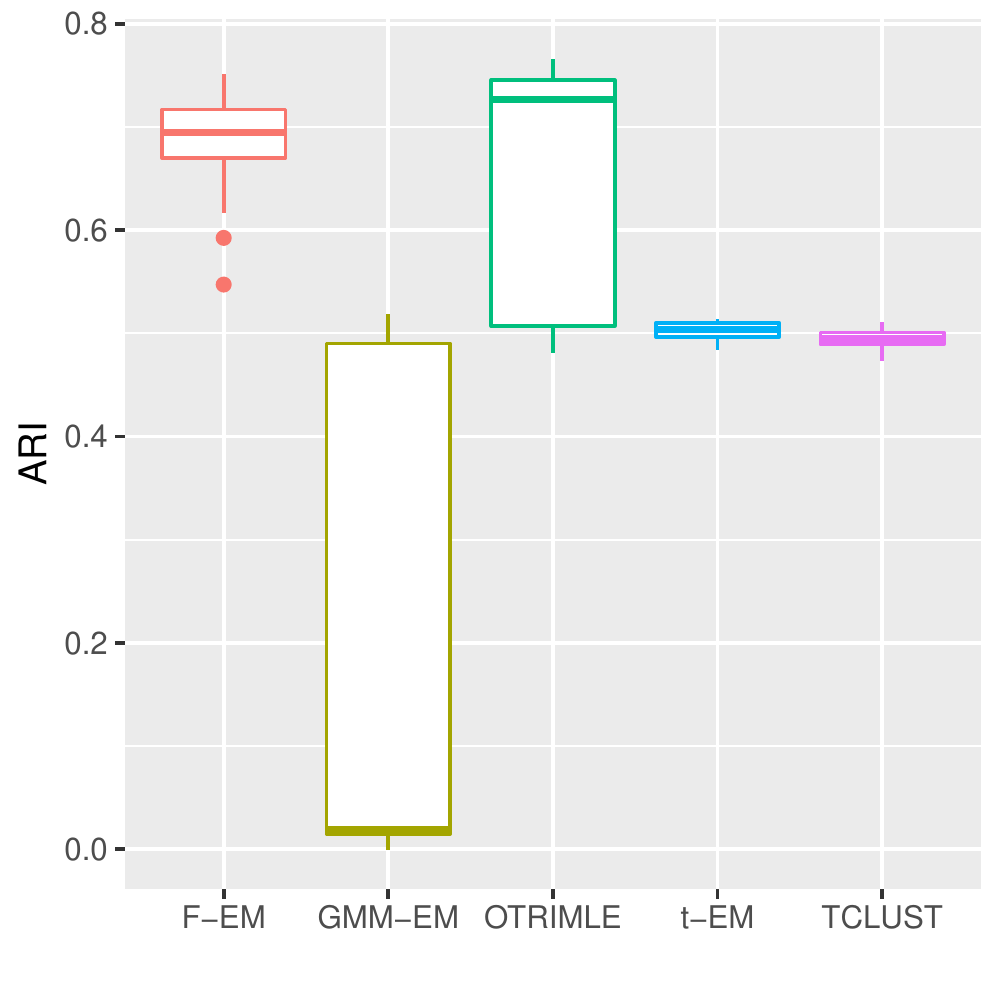}
    
    \caption{Boxplots representing the performance of the algorithms in the estimation and classification for the different setups. Each row represents one setup. From the left to the right, the Figure summarizes the estimation error of the scatter matrix up to a constant, the estimation error of the mean and the AR index of the classification when comparing to the ground truth.}
    \label{fig.boxplots2}
\end{figure}

\begin{table}[ht]
\centering
\begin{tabular}{ccccccc}
  \hline
Setup & Error & GMM-EM & t-EM & F-EM & TCLUST & OTRIMLE \\ 
  \hline
1 & $\Sig_1$ & 0.0094 & 0.0070 & 0.0073 & 0.0102 & 0.0101 \\ 
  1 & $\Sig_2$ & 0.0371 & 0.0132 & 0.0151 & 0.0371 & 0.0260 \\ 
  1 & $\Sig_3$ & 0.0128 & 0.0046 & 0.0046 & 0.0090 & 0.0062 \\ 
  1 & $\mub_1$ & 0.2110 & 0.1628 & 0.1670 & 0.2304 & 0.1808 \\ 
  1 & $\mub_2$ & 1.6277 & 0.2100 & 0.2513 & 1.2186 & 0.3079 \\ 
  1 & $\mub_3$ & 1.2004 & 0.1238 & 0.1401 & 0.8618 & 0.1420 \\

2 & $\Sig_1$ & 0.0098 & 0.0104 & 0.0083 & 0.0135 & 0.0372 \\ 
  2 & $\Sig_2$ & 0.0083 & 0.0068 & 0.0076 & 0.0075 & 0.0099 \\ 
  2 & $\Sig_3$ & 0.0103 & 0.0074 & 0.0094 & 0.0088 & 0.0289 \\ 
  2 & $\mub_1$ & 0.2168 & 0.2200 & 0.1853 & 0.3115 & 2.1054 \\ 
  2 & $\mub_2$ & 0.1879 & 0.1379 & 0.1570 & 0.1405 & 0.2337 \\ 
  2 & $\mub_3$ & 0.2063 & 0.1532 & 0.2077 & 0.1895 & 1.0695 \\

3 & $\Sig_1$ & 0.0019 & 0.0025 & 0.0013 & 0.0016 & 0.0011 \\ 
  3 & $\Sig_2$ & 0.0016 & 0.0034 & 0.0014 & 0.0014 & 0.0000 \\ 
  3 & $\Sig_3$ & 0.0022 & 0.0012 & 0.0012 & 0.0011 & 0.0029 \\ 
  3 & $\mub_1$ & 0.5565 & 10.1226 & 0.2967 & 0.3469 & 0.3714 \\ 
  3 & $\mub_2$ & 0.3655 & 6.1910 & 0.3025 & 0.3374 & 4.6289 \\ 
  3 & $\mub_3$ & 0.6081 & 0.2885 & 0.3060 & 0.2781 & 1.7128 \\ 

4 & $\Sig_1$ & 0.0070 & 0.0144 & 0.0060 & 0.0055 & 0.0055 \\ 
  4 & $\Sig_2$ & 0.0085 & 0.0085 & 0.0064 & 0.0055 & 0.0056 \\ 
  4 & $\Sig_3$ & 0.0254 & 0.0076 & 0.0065 & 0.0057 & 0.0056 \\ 
  4 & $\mub_1$ & 0.1876 & 3.1683 & 0.1249 & 0.1048 & 0.1071 \\ 
  4 & $\mub_2$ & 0.7796 & 0.7783 & 0.1382 & 0.1179 & 0.1167 \\ 
  4 & $\mub_3$ & 3.1758 & 0.2386 & 0.1436 & 0.1080 & 0.1053 \\

5 & $\Sig_1$ & 0.0376 & 0.0384 & 0.0134 & 0.0328 & 0.0117 \\ 
  5 & $\Sig_2$ & 0.0481 & 0.0361 & 0.0191 & 0.0322 & 0.0181 \\ 
  5 & $\Sig_3$ & 0.0000 & 0.0091 & 0.0110 & 0.0093 & 0.0095 \\ 
  5 & $\mub_1$ & 3.9425 & 2.0617 & 0.0751 & 1.9759 & 0.1822 \\ 
  5 & $\mub_2$ & 0.6168 & 1.3857 & 0.3063 & 2.2991 & 0.2996 \\ 
  5 & $\mub_3$ & 5.1633 & 0.1457 & 0.1659 & 0.1434 & 0.1424 \\

   \hline
\end{tabular}
\caption{Average of the norm of the error in the estimation of the main parameters in the different setups.} 
\label{tab.error.1}
\end{table}

\begin{table}[ht]
\centering
\begin{tabular}{ccccccc}
  \hline
Setup & Error & GMM-EM & t-EM & F-EM & TCLUST & OTRIMLE \\ 
  \hline
  
1 & AMI & 0.4491 & 0.7095 & 0.6809 & 0.4036 & 0.4197 \\ 
  1 & ARI & 0.4373 & 0.7895 & 0.7513 & 0.4293 & 0.2851 \\

2 & AMI & 0.8784 & 0.8843 & 0.8836  & 0.7414 & 0.5342 \\ 
2 & ARI & 0.9156 & 0.9233 & 0.9208  & 0.8476 & 0.4809 \\ 
  
3 & AMI & 0.7753 & 0.5514 & 0.9597 & 0.8377 & 0.4936 \\ 
  3 & ARI & 0.7056 & 0.5624 & 0.9722 & 0.9120 & 0.4497 \\ 
  
4 & AMI & 0.7373 & 0.6115 & 0.7836 & 0.9551 & 0.9476 \\ 
  4 & ARI & 0.5709 & 0.5603 & 0.8159 & 0.9690 & 0.9661 \\

5 & AMI & 0.1058 & 0.5479 & 0.6711 & 0.5573 & 0.6426 \\ 
  5 & ARI & 0.0187 & 0.5038 & 0.6946 & 0.4947 & 0.7265 \\ 

   \hline
\end{tabular}
\caption{Average clustering metrics in the different setups.} 
\label{tab.metrics.1}
\end{table}

\subsection{Real Data}

The proposed F-EM algorithm has been tested on three different real data sets: MNIST \citep{MNIST}, small NORB \citep{NORB} and \textit{20newsgroup} \citep{20newsgroup}. The MNIST hand-written digits (Figure \ref{picMNIST}) data set has become a standard benchmark for classification/clustering methods. We apply F-EM to discover groups in balanced subsets of similar pairs of digits (3-8 and 1-7) and the set of digits (3-8-6). We additionally contaminate the later subset with a small proportion of noise by randomly adding some of the remaining different digits.    \\

\begin{figure}[!ht]
    \centering
\subfigure{
\includegraphics[width=1.0in]{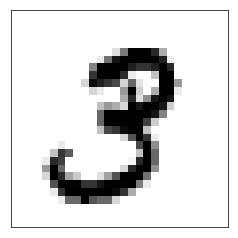}}
\subfigure{
\includegraphics[width=1.0in]{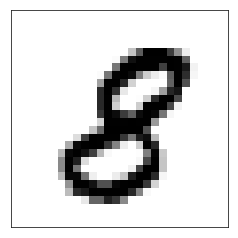}} 
\centering
\caption{Two samples of the pair 3-8 from the hand-written MNIST data set.\vspace*{1cm}}
\label{picMNIST}
\end{figure}

As in many application examples in the literature, we first applied PCA to work with some meaningful features instead of the original data {\citep{tSNE}}. We make a trade-off between explained variance and curse of dimensionality effects. The dimension of the reduced data is shown in Table \ref{table.char} under the column $m$. \textcolor{black}{Because of the stochastic character of the algorithms, we run each of them multiple times ($nrep = 50$) and we report the median value of the metrics. The metrics for the F-EM algorithm are almost always the same and this explains why we do not report the variance.} \\

As can be seen in Tables \ref{ami}, \ref{ar} and \ref{ac}, one obtains, in most cases, better values for all the metrics than those produced by the other partitioning techniques. This can be explained by the increment in flexibility and the smaller impact of outliers in the estimation process. More precisely, the F-EM algorithm does not provide the best results in these scenarios:
\begin{itemize} 
\item MNIST 7-1 scenario for AMI and AR indices, where the $t$-EM performs the best, 
\item MNIST 3-8-6 and its noisy variation for the three criteria where the spectral clustering and TCLUST respectively perform the better,
\end{itemize}
The loss in performance of the F-EM algorithm is, in most cases, around or less than $1\%$ highlighting the robustness of the approach: \textbf{``better or strongly better than existing methods in most cases and comparable in other cases''}. Moreover, those scenarios always correspond to the simpler scenarios, without noise and with well-separated clusters or completely designed to be managed by the best algorithm (MNIST 3-6-8 plus noise for the TCLUST).\\

\begin{table}[!ht]
\centering
\begin{tabular}{ccccc}
\hline
\textbf{Set} & \textbf{Set name} & \textbf{$m$} & \textbf{$n$} & \textbf{$k$} \\ \hline
\textbf{1} & \textbf{MNIST 3-8}   &  30 & 1600 & 2 \\
\textbf{2} & \textbf{MNIST 7-1}    &   30 & 1600 & 2 \\
\textbf{3} & \textbf{MNIST 3-8-6}    &  30 & 1800 & 3\\
\textbf{4} & \textbf{MNIST 3-8-6 + noise} &    30 & 2080 & 3  \\
\textbf{5} & \textbf{NORB}    &   30 & 1600 & 4 \\
\textbf{6} & \textbf{20newsgroup}    &    100 & 2000 & 4 \\
\hline 
\end{tabular}
\caption{Characteristics of the subsets of the data sets that have been used to compare the algorithms. The data sets are variations of the MNIST data set, small NORB and \textit{20newsgroup}.}
\label{table.char}
\end{table}
%\vspace*{2cm}

\begin{table}[ht]
\centering
\begin{tabular}{cccccccc}
\hline
\textbf{Set}  & \textbf{k-means} & \textbf{GMM} & \textbf{$t$-EM} & \textbf{F-EM} & \textbf{spectral} & \textbf{TCLUST} & \textbf{RIMLE}\\ \hline
\textbf{1}    &     0.2203          &       0.4878       &        0.5520     &   \textbf{0.5949} &            0.5839 & 0.5666 & 0.3875\\
\textbf{2}    &      0.7839            &     0.8414          &     \textbf{0.8947} &   0.8811 &    0.8852  & 0.5705 &  0.3875 \\
\textbf{3} &   0.6149               &     0.7159         &     0.7847        &   0.7918  &   \textbf{0.8272} & 0.7818 &  0.6077  \\
\textbf{4}      & 0.3622               &      0.4418         &     0.4596        &   0.4664 &    0.3511    & \textbf{0.6047} & 0.3553 \\
\textbf{5}     &   0.0012          &       0.0476       &    0.4370          &   \textbf{0.5321} &    $\sim 0$      & 0.1516 & 0.2312 \\
\textbf{6} & 0.2637          &        0.3526       &      0.4496         &   \textbf{0.4873}   & 0.1665 & 0.2604 &   0.0686 \\
\hline 
\end{tabular}
\caption{Median AMI index measuring the performance of k-means, GMM-EM, $t$-EM, TCLUST, RIMLE, spectral and our algorithm (F-EM) results for variations of the MNIST data set, small NORB and \textit{20newsgroup}.}
\label{ami}
\end{table}

 \begin{table}[ht]
 \centering
 \begin{tabular}{cccccccc}
 \hline
 \textbf{Set}  & \textbf{k-means} & \textbf{GMM} & \textbf{$t$-EM} & \textbf{F-EM} & \textbf{spectral} & \textbf{TCLUST} & \textbf{RIMLE}\\ \hline
 \textbf{1}   &     0.2884          &       0.5716       &        0.6397     &   \textbf{0.6887} &            0.6866 & 0.6847 & 0.2494\\
 \textbf{2}    & 0.8486 & 0.8905 & \textbf{0.9432}  &  0.9360 &    0.9384 & 0.6885 & 0.2493 \\
 \textbf{3}     & 0.6338 & 0.7332 &     0.8262 &  0.8306 &  \textbf{0.8542} & 0.8366 & 0.4274\\
 \textbf{4}   & 0.4475 &      0.4909 & 0.5296 &  0.5548 & 0.3115 & \textbf{0.6908} & 0.1498 \\
 \textbf{5}     &  0.0015 & 0.0468 & 0.4223 &  \textbf{0.5067}  &      $\sim 0$ & 0.1330 & 0.1472 \\
 \textbf{6}     & 0.1883   &      0.2739         &    0.4426         &   \textbf{0.5114} &   0.0987  & 0.2664 &  0.0026 \\
 \hline 
 \end{tabular}
 \caption{Median AR index measuring the performance of k-means, GMM-EM, $t$-EM, TCLUST, RIMLE, spectral and our algorithm (F-EM) results for variations of the MNIST data set, small NORB and \textit{20newsgroup}.}
 \label{ar}
\end{table}

\begin{table}[ht]
\centering
\begin{tabular}{cccccccc}
\hline
\textbf{Set} & \textbf{k-means} & \textbf{GMM} & \textbf{$t$-EM} & \textbf{F-EM} & \textbf{spectral} & \textbf{TCLUST} & \textbf{RIMLE}\\ \hline
\textbf{1}   &     0.7687          &       0.8781       &        0.9093     &   \textbf{0.9150} &            0.9050 & 0.8881 & 0.5193\\
\textbf{2}   & 0.9606 & 0.9718 & 0.9856  &  \textbf{0.9868} &    0.9844 & 0.8893 & 0.5193 \\
\textbf{3} & 0.8495 & 0.8976 &     0.9366  & 0.9390 &  \textbf{0.9476} & 0.9183 & 0.5157\\
\textbf{4}  & 0.8144 &     0.8700 & 0.8894 &  0.8966 & 0.5444 & \textbf{0.9247} & 0.4988\\
\textbf{5}     &  0.2725 & 0.3487 & 0.6528 &  \textbf{0.6975}  &     0.2600 & 0.4087 & 0.3887 \\
\textbf{6}    &    0.5755   &      0.7100         &     0.6900        &   \textbf{0.8030} &   0.5220 & 0.5740 & 0.2970\\
 \hline 
 \end{tabular}
 \caption{\textcolor{black} {Median accuracy measuring the performance (correct classification rate) of k-means, GMM-EM, $t$-EM, TCLUST, RIMLE, spectral and our algorithm (F-EM) results for variations of the MNIST data set, small NORB and \textit{20newsgroup}.}}
 \label{ac}
 \end{table}

We collected the clustering results from the HDBSCAN algorithms fed with a grid of values for its two main parameters. All the computed metrics comparing the results with the ground truth were poor, close to 0. We show the best clustering result of the 3-8 MNIST subset in Figure \ref{MNIST38}, where a high amount of data points is classified as noise by the algorithm. If the metric is computed only in the non-noise labeled data points then the clustering is almost perfect. This behavior might be explained by the dimension of the data, that seems to be too high for HDBSCAN to deal with.\\

Additionally, we have tested dimensional reduction techniques UMAP and t-SNE prior to the clustering task. All metrics were improved after carefully tuning the parameters. In this scenario, the proposed method performs similarly to the classical GMM-EM because these embedding methods tend to attract outliers and noise to clusters. However, these non-linear visualization approaches are not recommended to extract features before clustering because fictitious effects might appear depending on the parameters choice. \\

\begin{figure}[ht]
    \centering
\subfigure{
\includegraphics[width=1.9in]{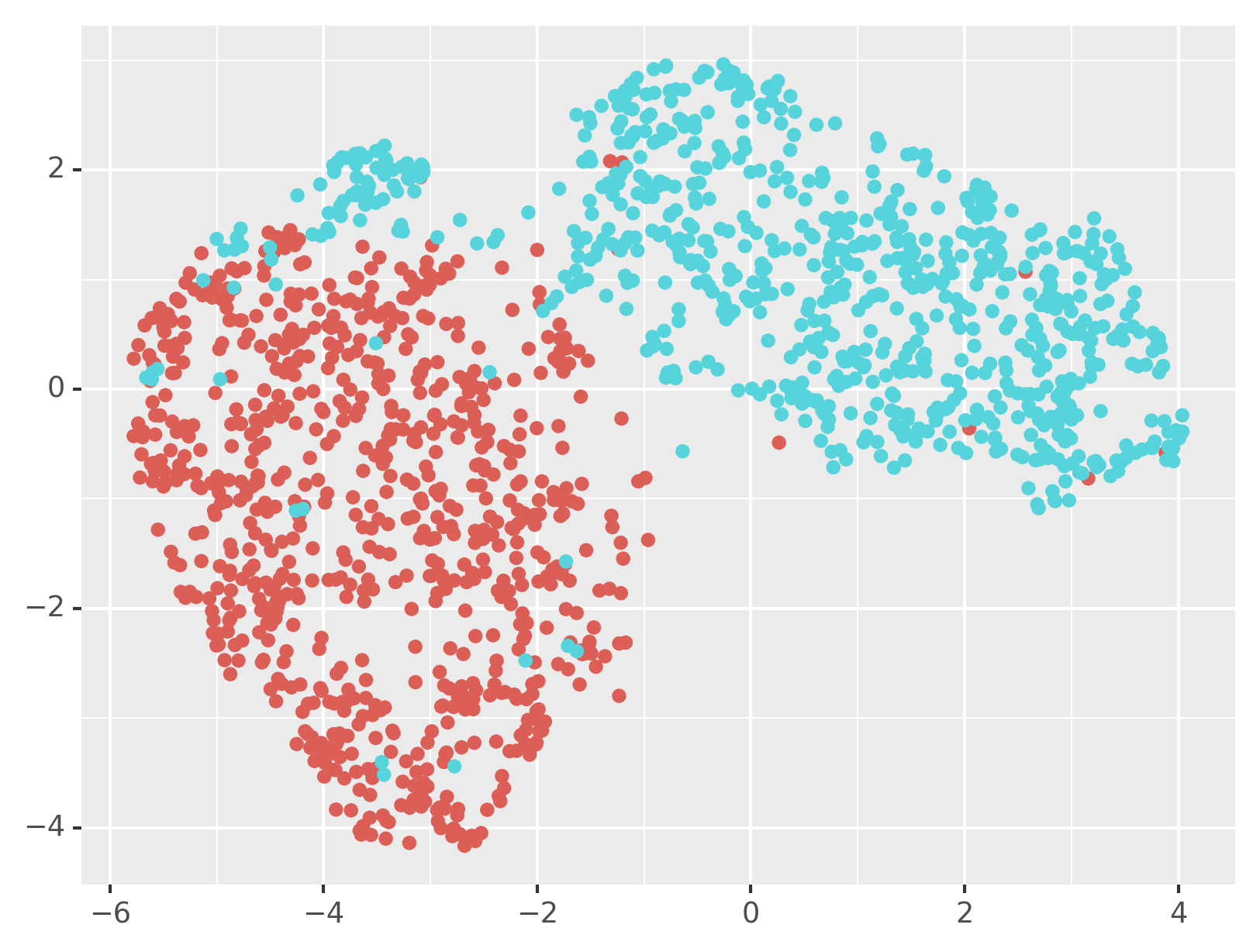}}
\subfigure{
\includegraphics[width=1.9in]{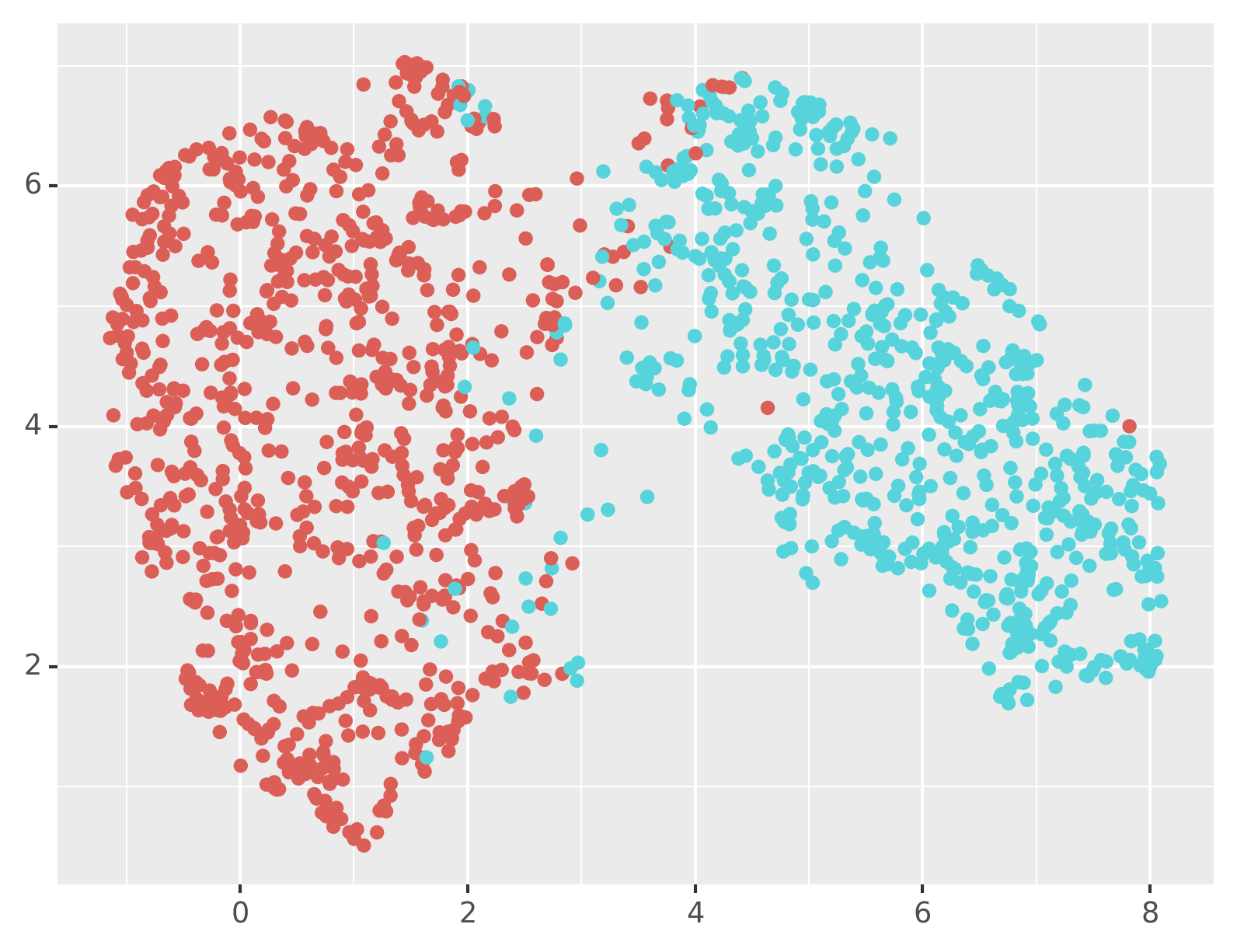}} 
\subfigure{
\includegraphics[width=1.9in]{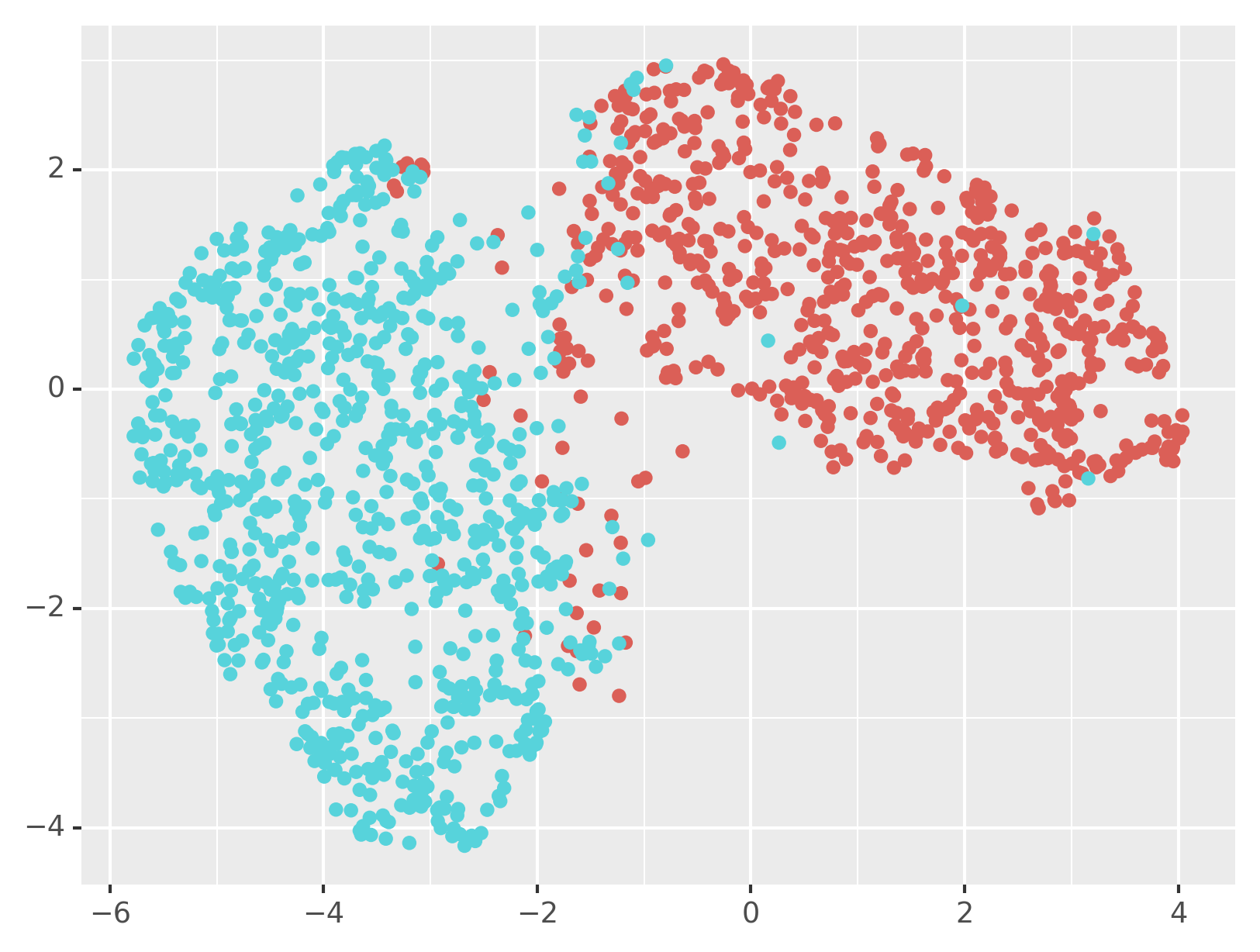}}

\subfigure{
\includegraphics[width=1.9in]{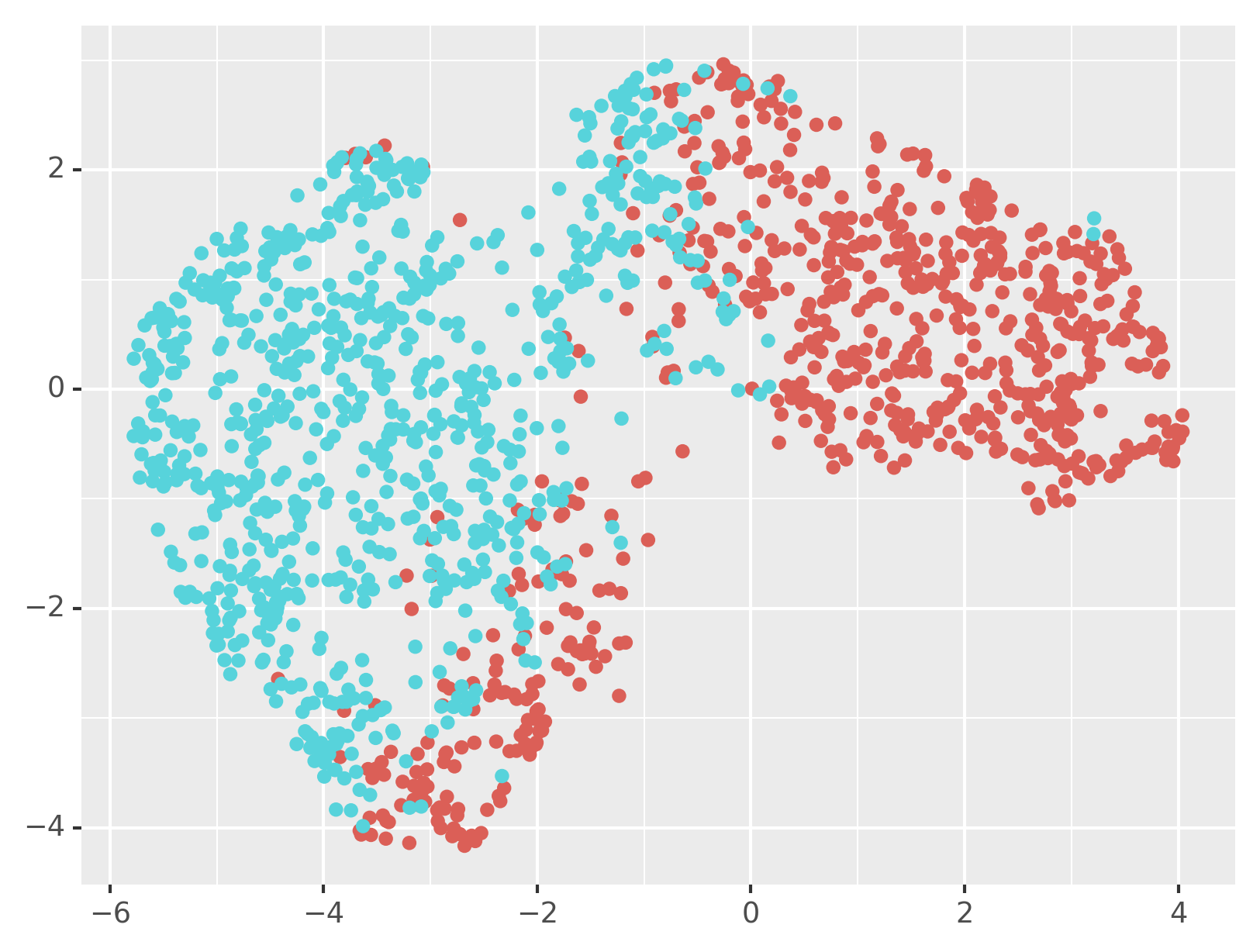}}
\subfigure{
\includegraphics[width=1.9in]{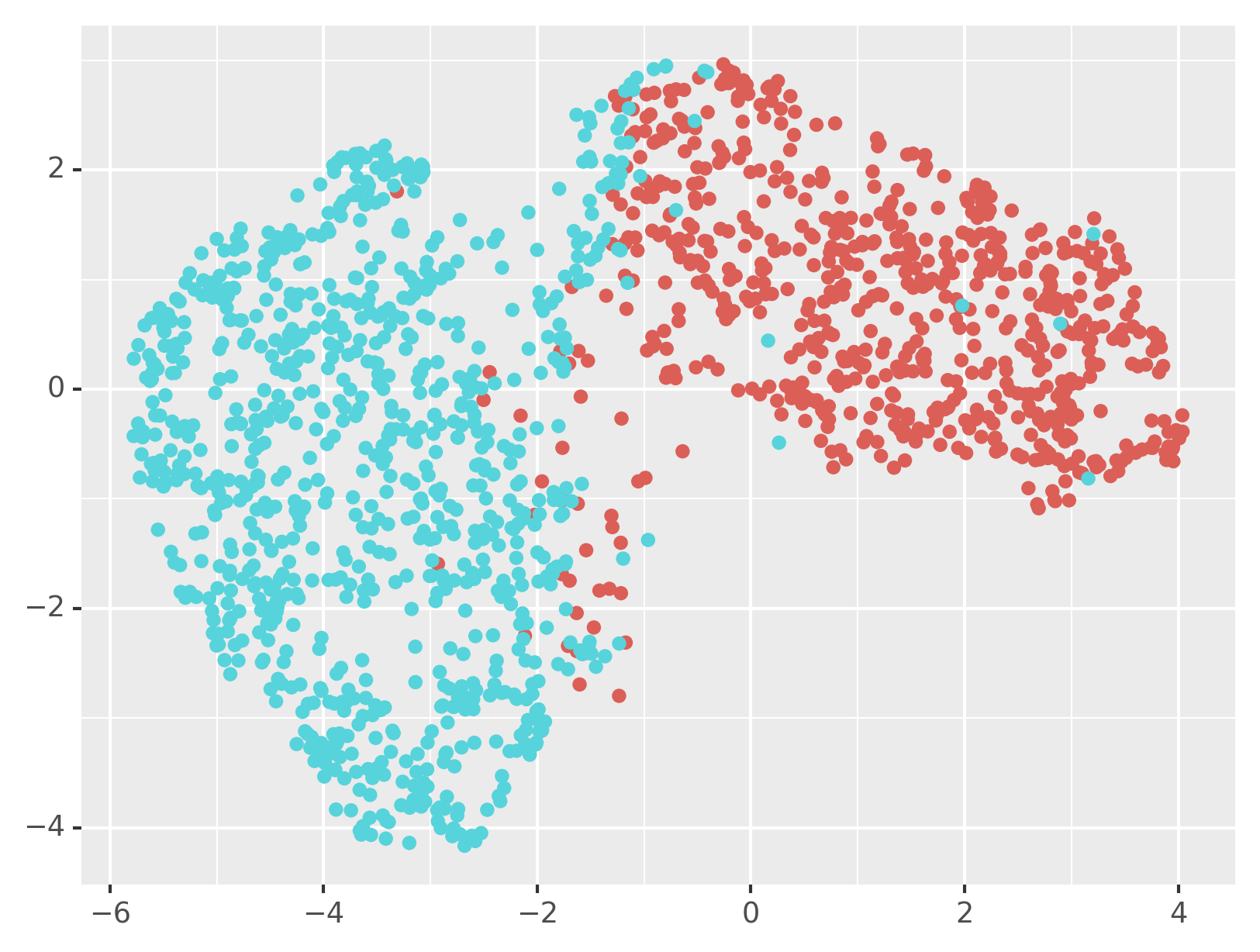}} 
\subfigure{
\includegraphics[width=1.9in]{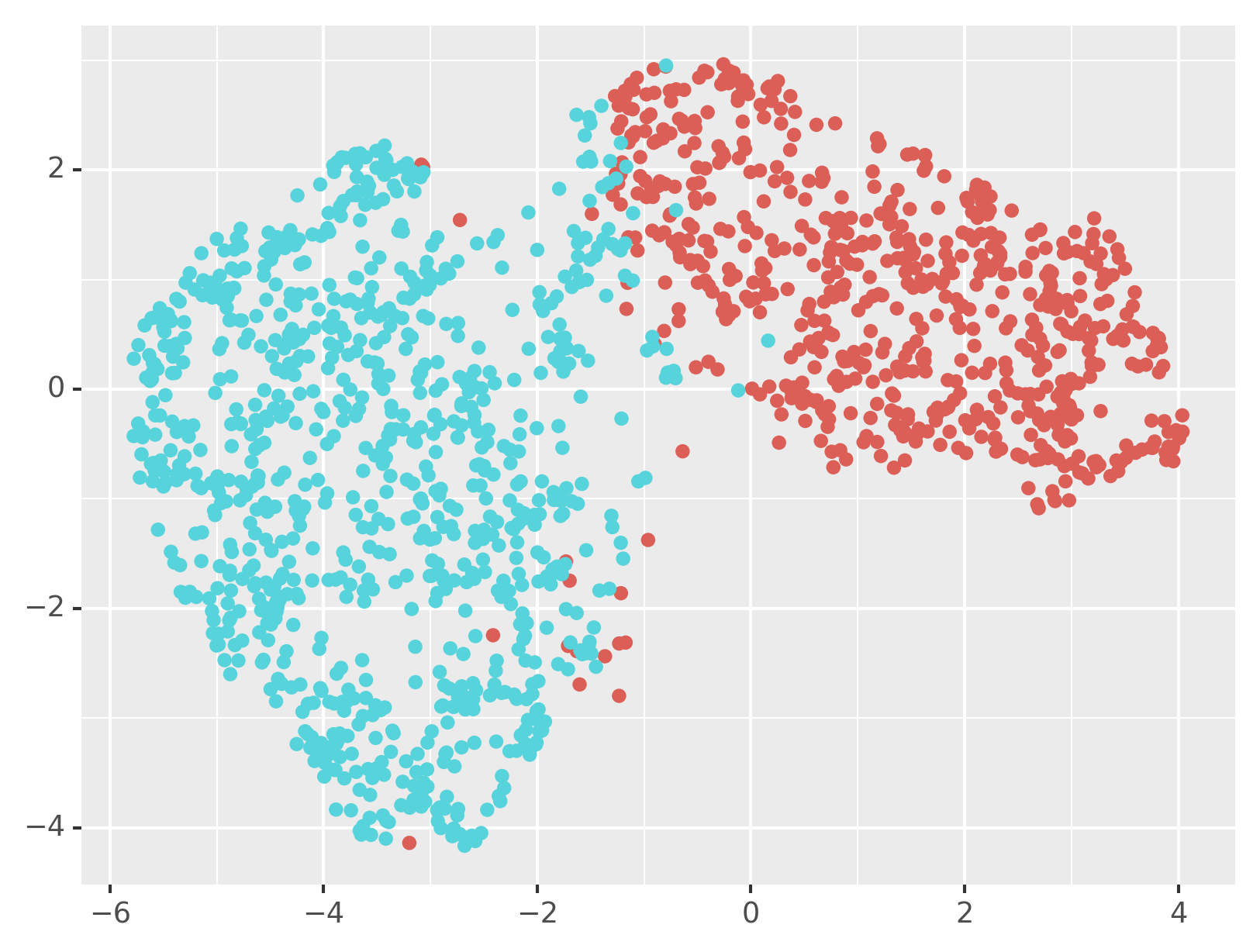}} 

\subfigure{
\includegraphics[width=1.9in]{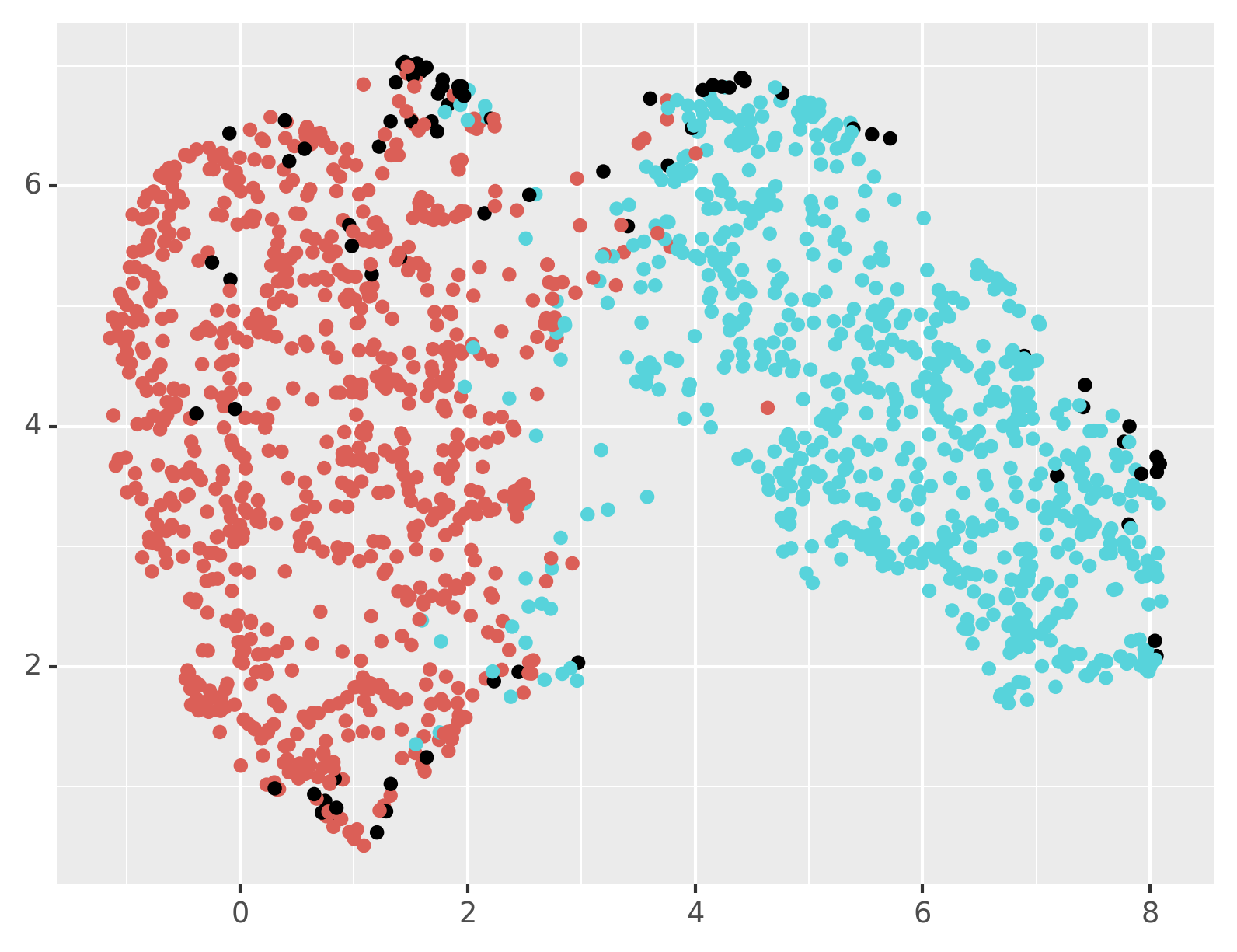}} 
\subfigure{
\includegraphics[width=1.9in]{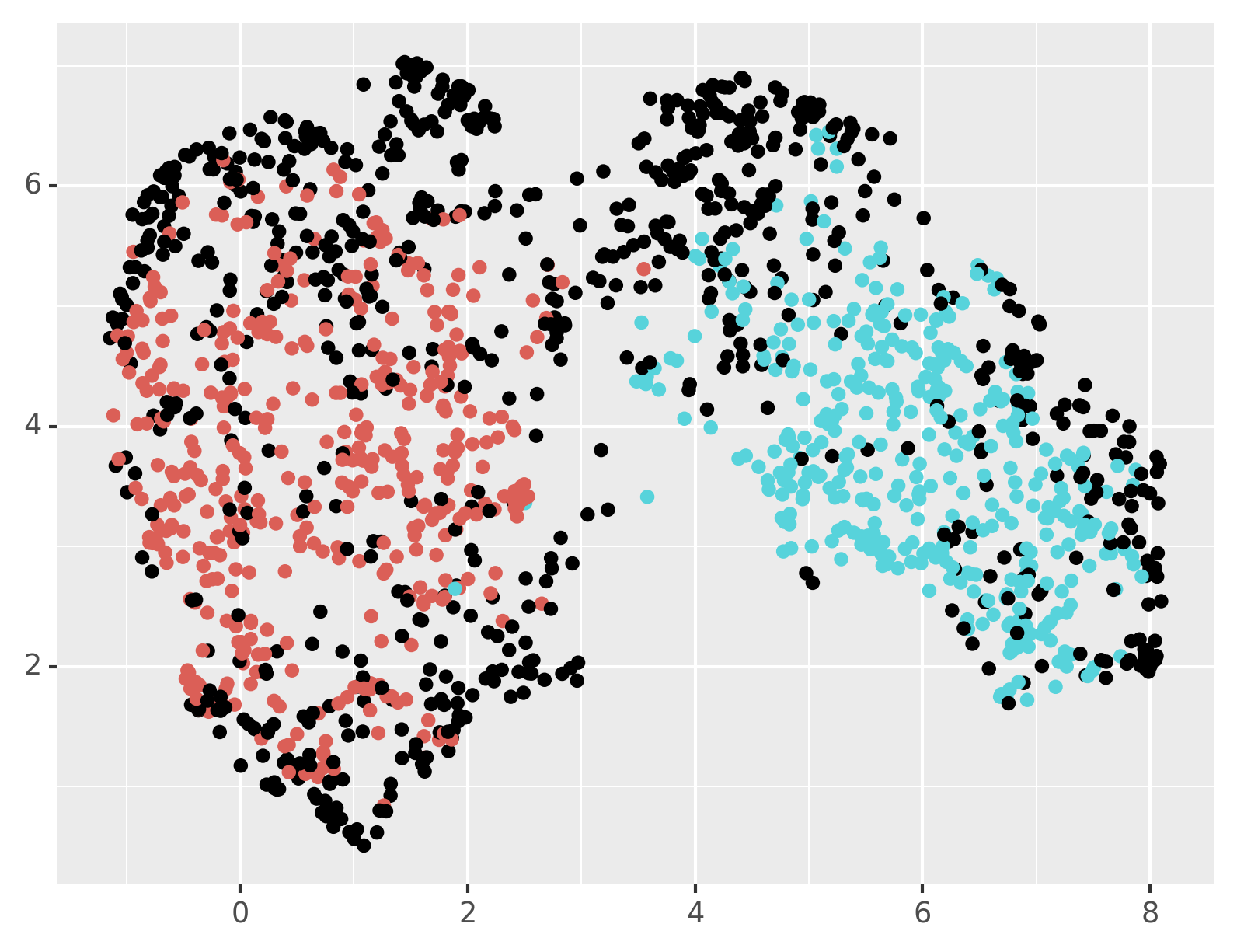}} 
\subfigure{
\includegraphics[width=1.9in]{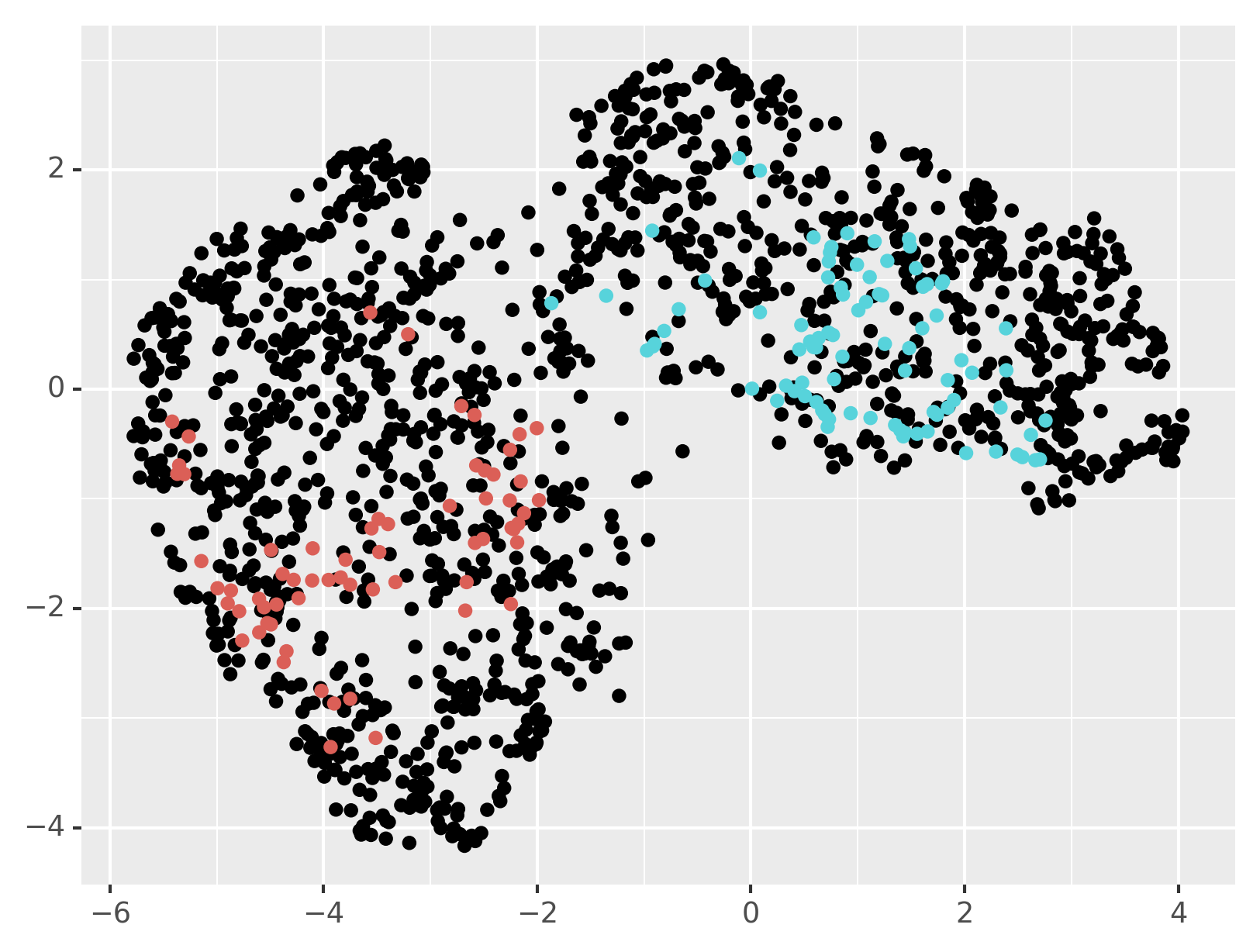}} 
\centering
\caption{UMAP embedding of the 3-8 pair MNIST subset colored with labels. On the first row, from left to right, the real ground truth labels, the F-EM clustering labels and the $t$-EM clustering labels. On the second row, from left to right,  the k-means clustering labels, the GMM-EM labels and the spectral clustering labels. On the bottom from the left to the right, the TCLUST labels, the RIMLE labels and the HDBSCAN labels. Points colored with black are labelled as noise.\vspace{-0.5cm}}
\label{MNIST38}
\end{figure}

For the NORB dataset (some representatives are shown in Figure \ref{picNORB}), k-means, GMM-EM, {spectral clustering and UMAP+HDBSCAN} do not perform in a satisfactory way since they end-up capturing the luminosity as the main classification aspect. In contrast, $t$-EM and the F-EM algorithm highly outperform them, as can be seen in Tables \ref{ami}, \ref{ar} and \ref{ac}. This can be emphasized thanks to results of Figure \ref{NORB1}, where label-colored two-dimensional embeddings of the data based on the classification produced by the different methods are shown. The effect of extreme light values seems to be palliated by the robustness properties of the estimators. \\

%We then study the changes after re-centering each picture so that the mean is equal for all images (normalized NORB). \textcolor{black}{The best performance in this case is the one of the mixture of $t$-distributions model. The results in both original and normalized versions show that a robust approach is a key aspect in this data set to obtain better performances. In this case, the F-EM cannot take advantage of the joint estimation between the mean and the scatter matrix. \textcolor{black}{Poor results could be explained by the bad centers provided by k-means initialization that stuck our algorithm in bad local optima.}}\\

Finally, the \textit{20newsgroup} data set is a bag of words constructed from a corpus of news. Each piece of news is classified by topic modeling into twenty groups. Once again, we compare the performance of our methods with the ones of k-means, EM, $t$-EM, TCLUST , RIMLE and spectral clustering  algorithms after applying PCA. The corresponding results are also presented in Tables \ref{ami}, \ref{ar} and \ref{ac}. One can see that k-means, TCLUST, RIMLE and spectral clustering perform poorly, while GMM-EM and $t$-EM outperform them. Nevertheless, the proposed F-EM algorithm has  strongly better results than the others. It is not clear why spectral clustering is performing so badly on this data set, it could be due to the lack of separation between clusters and/or the presence of noise that breaks the performance. Finally, the very poor capability of the RIMLE algorithm in this dataset is explained by the choice of the parameter that highly over estimates the noise.
\vspace*{0.2cm}

\begin{figure}[!ht]
\centering
\subfigure{
\includegraphics[width=1.2in]{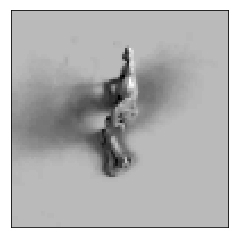}}
\subfigure{
\includegraphics[width=1.2in]{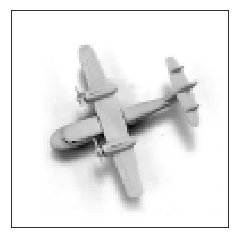}}
\subfigure{
\includegraphics[width=1.2in]{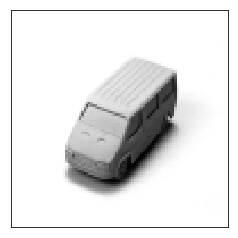}}
\subfigure{
\includegraphics[width=1.2in]{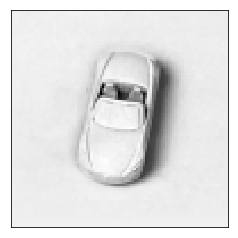}}
\centering
\caption{Four samples of the small NORB data set from the 4 considered categories. Differences in brightness between the pictures can be appreciated.}
\label{picNORB}
\end{figure}

\begin{figure}[h!]
    \centering
\subfigure{
\includegraphics[width=0.35\textwidth]{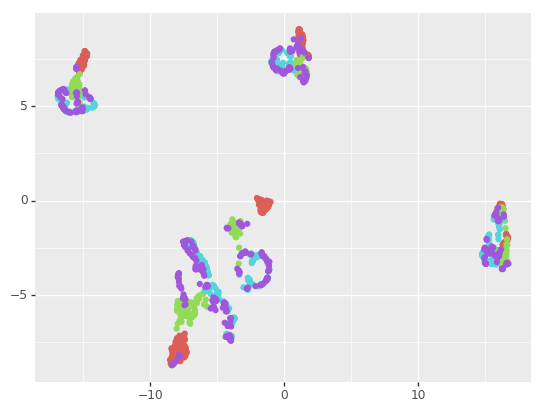}}
\subfigure{
\includegraphics[width=0.35\textwidth]{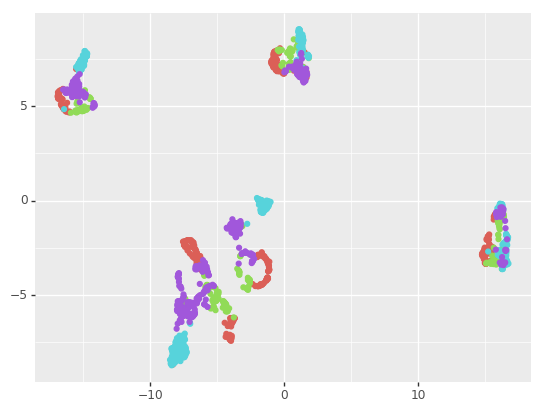}} 

\subfigure{
\includegraphics[width=0.35\textwidth]{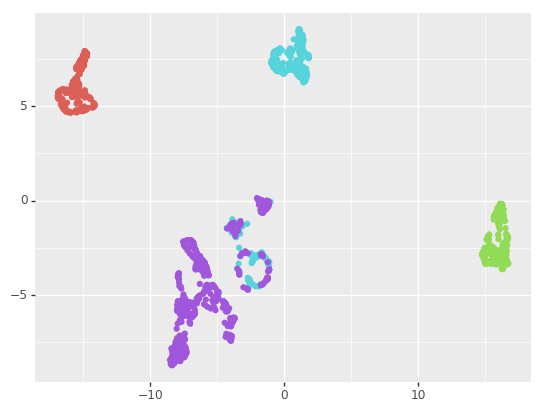}}
\subfigure{
\includegraphics[width=0.35\textwidth]{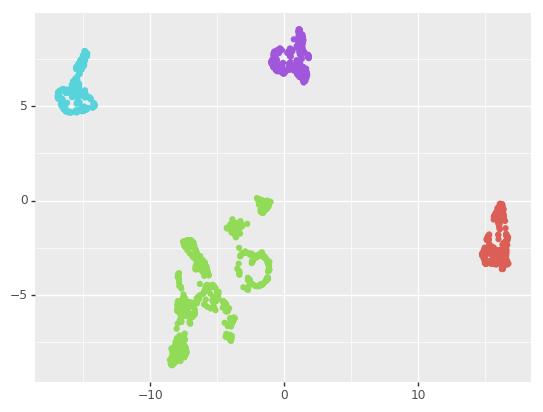}} 
\centering
\caption{NORB's UMAP embedding colored with relative labels. On the top, the real ground truth labels on the left and  the F-EM clustering labels on the right. On the bottom,  the k-means clustering labels on the left and the spectral clustering labels on the right.}
\label{NORB1}
\end{figure}

\section{Concluding Remarks}

In this paper we presented a robust clustering algorithm that outperforms several state of the art algorithms for both synthetic and real diverse data.
Its advantages stem from a general model for the data distribution, where each data point is generated by its own elliptical symmetric distribution. The good theoretical properties of this proposal have been studied and supported by simulations.
%It is also based on an approximation of the random variables $\tau$, which define scale (and are involved in the definition of the mixture distribution) as deterministic parameters, thus  leading to a more general semi-parametric problem. 
The flexibility of this model makes it particularly suitable for analyzing heavy-tailed distributed and/or noise-contaminated data. Interestingly, under mild assumptions on the data, the estimated probabilities of membership do not depend on the data distributions, making the algorithm simpler (no need to re-estimate the likelihood at each step), flexible and robust. Moreover, the original approach of estimating one scale parameter for each data point makes the algorithm competitive in relatively high-dimensional settings.\\
%as we have shown using theoretical considerations.\\ 

%More precisely, we have introduced four different versions of the algorithm that differ only on the iterative resolution of the system of fixed-point equations. 
%The results show no significant differences among them in terms of convergence or accuracy of the estimations.    

%\textcolor{magenta}{We have applied our algorithm to cluster both synthetic and real data. The former are  Gaussian, t-distributed, k-distributed and uniform randomly generated data.} For the latter, we have considered three well-known data sets, namely the so-called MNIST and NORB images sets, and also the \textit{20newsgroups} bag of words data set, in all cases convergence was achieved quickly.\\

On simulated data, we obtained accurate estimations and good classification rates. Of course, the best model is the one that perfectly coincides with the distribution of the data, \textit{e.g.}, when the mixture is actually Gaussian, GMM-EM outperforms all other methods, including ours, but only marginally, and our method performs well on all considered scenarios.\\
For the real data sets that we considered, We have shown that the proposed  method offers better results compared to k-means, GMM-EM and $t$-EM. It is also competitive with spectral clustering, TCLUST and RIMLE and it still delivers very good results in situations where both HDBSCAN and spectral clustering completely break down.\\% in the main text. %\violeta{Compared to spectral clustering, our algorithm can deal better with higher dimension and noise but performs slightly worse in situations of clusters with more general shapes.}

Concerning future works, we consider studying in depth the convergence of the algorithm with a data-driven approach. 
%\matt{Furthermore, deeper results on the mismatch robustness properties of the algorithm is another topic that} has to be tackled. 
Besides, it would be very interesting to study the impact of the $\tau$ parameters in the model when using them for classification and / or outlier rejection. Finally, we consider that including  a sparse regularization in the scatter estimation would be very useful to take advantage of the fact that the $\tau$ parameters are better estimated when the dimension increases with the number of observations.

% Acknowledgements should go at the end, before appendices and references

%\acks{We would like to acknowledge support for this project
%rom the National Science Foundation (NSF grant IIS-9988642)
%and the Multidisciplinary Research Program of the Department
%of Defense (MURI N00014-00-1-0637). }

% Manual newpage inserted to improve layout of sample file - not
% needed in general before appendices/bibliography.

\appendix
\section{Proofs}

% Note: in this sample, the section number is hard-coded in. Following
% proper LaTeX conventions, it should properly be coded as a reference:

This Appendix contains the different proofs of propositions provided in this paper (Section~\ref{sec.themodel}).\\

\subsection{Proof of Proposition \ref{proptaus}}
\label{app-proof-prop1}

%\textit{Given an independent sample $\mathbf{x_i} \sim ES(\mub_j, \tau_{i}\Sig_j, g_{i}(.))$, the derivation of the maximum likelihood estimation of the $\tau_{ij}$ parameters is decoupled from the one of the rest of the estimators. The estimator is} 
%\begin{equation}
%\widehat{\tau}_{ik} = \frac{(\textbf{\textit{x}}_i-\mub_k)^T\Sig_{k}^{-1}(\textbf{\textit{x}}_i-\mub_k)}{a},
%\end{equation}
%\textit{where} $a = \underset{t}{argsup}\{t^{m/2}g(t)\}$.

\begin{proof}
Let us define $s_{ik}=\frac{(\textbf{\textit{x}}_i-\mub_k)^T\Sig_{k}^{-1}(\textbf{\textit{x}}_i-\mub_k)}{\tau_{ik}}$. Then we can rewrite the expression

\begin{equation}
 E_{Z|\textit{\textbf{x}}, \theta^*}[l(Z, \textit{\textbf{x}};\theta)] = \sum_{k = 1}^K l_{0k}\left(\pi_{k}, \mub_k, \Sig_k\right) + \sum_{k = 1}^K \sum_{i = 1}^n l_{ik}\left(\pi_{k}, \mub_k, \Sig_k, \tau_{ik}\right),
 \label{eq.exptwoterms}
\end{equation}
where the terms of the sum are

\begin{equation}
l_{0k}\left(\pi_{k}, \mub_k, \Sig_k\right) =  \sum_{i=1}^n p_{ik} [\log(\pi_k) + \log(A_{ik})
+ \frac{1}{2} \log(|\Sig_{k}^{-1}|) - \frac{m}{2} \log{((\textbf{\textit{x}}_i-\mub_k)^T\Sig_{k}^{-1}(\textbf{\textit{x}}_i-\mub_k))}],
\label{eq.l0}
\end{equation}
and

\begin{equation}
l_{ik}\left(\pi_{k}, \mub_k, 
\Sig_k, \tau_{ik}\right) =  p_{ik}\log(s_{ik}^{m/2} g_{i, k}(s_{ik})).
\end{equation}
%  \begin{eqnarray}
%  E[l(\chi;\theta)] &=& \sum_{i=1}^n \sum_{j=1}^K P_{\theta}(Z_i=j|\mathbf{x}_i=x_i)\log(\pi_j f_{\theta_j}(x_i)) 
%  \nonumber \\
%  &=&  \sum_{i=1}^n\sum_{j=1}^K  p_{ij} [\log(\pi_j) -\frac{m}{2} \log(2\pi)  - \frac{1}{2} \log(|\textbf{C}_{ij}|) 
% -   \frac{1}{2}(x_i-\mub_j)^T\textbf{C}_{ij}^{-1}(x_i-\mub_j)],    
% \nonumber
% \label{likelihood}
%   \end{eqnarray}
% where, $p_{ij}=P_{\theta}(Z_i=j|\mathbf{x}_i=x_i)$ with $\displaystyle \sum_{j=1}^K p_{ij} = 1$ and $\textbf{C}_{ij}=\tau_{i}^{(j)} \Sig_j$.\\
If we fix the parameters $\left(\pi_{k}, \mub_k, \Sig_k\right)$ and we maximize $l_{ik}\left(\pi_{k}, \mub_k,  \Sig_k, \tau_{ik}\right)$ w.r.t. $\tau_{ik}$, one obtains that

\begin{equation}
\widehat{\tau}_{ik} = \frac{(\mathbf{{x}}_i-\mub_k)^T\Sig_{k}^{-1}(\mathbf{{x}}_i-\mub_k)}{a_{ik}},
\end{equation}
where $a_{ik} = \underset{t}{\arg\sup}\{t^{m/2}g_{i,k}(t)\}$. 
As we supposed that  $\int t^{m/2}g_{i,k}(t) dt < \infty$, it implies that $t^{m/2}g_{i,k}(t) \to 0$, when $t \to \infty$ so that
the $a_{ik}$ is finite.
%One needs the function $g_{ik}$ to fulfill the conditions on finite moments so that the function $t^{m/2}g_i(t)$ is bounded. Even if $a_i$ is not well defined, \fred{$$l_{ik}\left(\pi_{k}, \mub_k, \Sig_k, \widehat{\tau}_{ik}\right) = p_{ik}\,a_i^{m/2}g_i(a_i)$$ is well defined. 
Now, observing that 
$$\sum_{k = 1}^K \sum_{i = 1}^n l_{ik}\left(\pi_{k}, \mub_k, \Sig_k, \widehat{\tau}_{ik}\right) = \sum_{i = 1}^n \sum_{k = 1}^K a_{ik}^{m/2}g_{i,k}(a_{ik})  p_{ik},$$ %= \sum_{i = 1}^n a_i^{m/2}g_i(a_i),$$}
%\matt{In the case $g_{i,k}=g_i, \ \forall i,k$, then $a_{ik}^{m/2}g_{i,k}$ does not depend on $k$ and $$\sum_{k = 1}^K \sum_{i = 1}^n l_{ik}\left(\pi_{k}, \mub_k, \Sig_k, \widehat{\tau}_{ik}\right) = \sum_{i = 1}^n a_i^{m/2}g_i(a_i),$$
%since $n,K$ are finite and by definition $\sum_{k = 1}^K p_{ik}=1$.
and since $p_{ik}=P_{i,\theta^*}(Z_i=k|\mathbf{x}_i=\textit{\textbf{x}}_i)$ and $a_{ik}$ do not depend on $\theta$, then  $\displaystyle \sum_{k = 1}^K \sum_{i = 1}^n l_{ik}\left(\pi_{k}, \mub_k, \Sig_k, \widehat{\tau}_{ik}\right)$ does not depend on the parameters $\left(\pi_{k}, \mub_k, \Sig_k\right)$ for any $k \in {1,...,K}$. Thus,  estimating those parameters will only rely on the first term of the expected likelihood, i.e.,

\begin{equation}
S_0 = \sum_{k = 1}^K l_{0k}\left(\pi_{k}, \mub_k, \Sig_k\right).
\label{eq.suml0}
\end{equation}
Note that $S_0$ involves density functions that are proportional to the Angular Gaussian p.d.f. \citep{ollila2012complex}.
\end{proof}

\subsection{Proof of Proposition \ref{propmusigma}}
\label{app-proof-prop2}

%\textit{Given the data points, i.e., the i.i.d. $n$ sample $\chi = \{\mathbf{x}_i\}_{i=1}^n$ and the expected conditional log-likelihood of the model stated before, the maximization w.r.t. each parameter of the model contained in $\theta$ keeping the rest of the parameters fixed leads to}  

% \begin{equation}
% \widehat{\pi}_k = \,\, \underset{\pi_k}{\text{argmax}}(E[l(\chi;\theta)]) = \frac{1}{n}\sum_{i=1}^n p_{ik}
%  \label{pij.app}
% \end{equation}
% \textit{for the proportion of each distribution,} 

% \begin{equation}
% \widehat{\mub}_k = \,\, \underset{\mub_k}{\text{argmax}}(E[l(\chi;\theta)]) = \sum\limits_{i=1}^n  c_{ik} \,\mathbf{x}_i   %\label{muj}
% \ \ \text{with} \ \ 
% \displaystyle c_{ik}=\cfrac{\cfrac{p_{ik}}{(\textbf{\textit{x}}_i-\mub_k)^T\Sig_{k}^{-1}(\textbf{\textit{x}}_i-\mub_k)}}{\displaystyle\sum\limits_{l=1}^n\cfrac{p_{lk}}{(\textbf{\textit{x}}_i-\mub_k)^T\Sig_{k}^{-1}(\textbf{\textit{x}}_i-\mub_k)}},
% \label{cij.app}
% \end{equation}
% \textit{for the mean of each distribution, and}

% \begin{equation}
% \displaystyle\widehat{\Sig}_k = \,\, \underset{\Sig_k}{\text{argmax}}(E[l(\chi;\theta)]) = m \sum_{i=1}^n \frac{w_{ik}(\mathbf{x}_i-{\mub_k})(\mathbf{x}_i-{\mub_k})^T}{(\textbf{\textit{x}}_i-\mub_k)^T\Sig_{k}^{-1}(\textbf{\textit{x}}_i-\mub_k)}     
% \ \ \text{with} \ \ 
% \displaystyle w_{ik}=\cfrac{p_{ik}}{\displaystyle\sum_{l=1}^n p_{lk}},  
% \label{wij.app}
% \end{equation}
% \textit{for the covariance matrices.} 

\noindent
\begin{proof}
Let us maximize $E_{Z|\textbf{\textit{x}},\theta^{*}}[l(Z,\textbf{\textit{x}};\theta)]$ with respect to $\theta_k=\left(\pi_{k}, \mub_k,  \Sig_k\right)$, for $k=1,\hdots, K$. Note that the optimization problem is solved under the constraint on the $\{\pi_k\}_{k=1}^K$, which enforces to use a Lagrange multiplier. Cancelling the gradient of the expected conditional log-likelihood thus leads to the following  system of equations

\begin{equation*}
    \frac{\partial [E_{Z|\textbf{\textit{\textit{x}}},\theta^{*}}[l(Z,\textbf{\textit{x}};\theta)]-\lambda (1-\sum_{j=1}^K \pi_j)]}{\partial \pi_k}=\sum_{i=1}^n \frac{p_{ik}}{\pi_k}+\lambda =  0\,, \ \   \forall 1\leq k \leq K,
\end{equation*}
together with the conditions $\sum_{j=1}^K \pi_j=1$ and $ \sum_{j=1}^K p_{ij} = 1$. This is equivalent to $$\pi_k = -\,\cfrac{1}{\lambda} \sum_{i=1}^n p_{ik}.$$ Taking the summation over $k$, together with the constraints, leads to $\lambda =-n$, proving the expression given in Eq.~\eqref{pij}.\\

Let us now consider the derivative of the expected conditional log-likelihood with respect to $\mub_k$. One obtains, for $k=1, \hdots, K$,
\begin{eqnarray*}
 \frac{\partial [E_{Z|\textbf{\textit{x}},\theta^{*}}[l(Z,\textbf{\textit{x}};\theta)]-\lambda (1-\sum_{j=1}^K \pi_j)]}{\partial \mub_k}&=&
 \frac{\partial l_{0k}\left(\pi_{k}, \mub_k, \Sig_k\right)}{\partial \mub_k} \\&=& \frac{m}{2} \sum_{i=1}^n p_{ik} \frac{\Sig_{k}^{-1} (\textbf{\textit{x}}_i-\mub_k)}{(\textbf{\textit{x}}_i-\mub_k)^T\Sig_{k}^{-1}(\textbf{\textit{x}}_i-\mub_k)},
\end{eqnarray*}
where $l_{0k}$ is given in Eq. \eqref{eq.l0}.\\
Then, setting the previous expression to zero leads to 

\begin{eqnarray*}
\mub_k \sum_{i=1}^n \frac{p_{ik}}{(\textbf{\textit{x}}_i-\mub_k)^T\Sig_{k}^{-1}(\textbf{\textit{x}}_i-\mub_k)} = \sum_{i=1}^n \frac{p_{ik}}{(\textbf{\textit{x}}_i-\mub_k)^T\Sig_{k}^{-1}(\textbf{\textit{x}}_i-\mub_k)} \textbf{\textit{x}}_i,
\end{eqnarray*}
providing the result of Eq.~\eqref{cij}. \\

Now, in order to estimate $\Sig_{k}$, we differentiate the expected conditional log-likelihood w.r.t. $\Sig_{k}^{-1}$. One obtains, for $k=1, \hdots, K$,

\begin{eqnarray*}
 \frac{\partial [E_{Z|\textbf{\textit{x}},\theta^{*}}[l(Z,\textbf{\textit{x}};\theta)]-\lambda (1-\sum_{j=1}^K \pi_j)]}{\partial 
 \Sig_k^{-1}} = \frac{\partial l_{0k}\left(\pi_{k}, \mub_k, 
 \Sig_k^{-1}\right)}{\partial \Sig_k^{-1}} &=& \\
  \frac{\partial \left\{\sum_{i=1}^n p_{ik} 
  \left[\frac{1}{2}\log|\Sig_k^{-1}|-\frac{m}{2}\log 
  \left((\textbf{\textit{x}}_i-\mub_k)(\textbf{\textit{x}}_i-\mub_k)^T\Sig_k^
  {-1}\right)\right]\right\}}{\partial \Sig_k^{-1}} &=& \\
 \sum_{i=1}^n p_{ik} \left[\Sig_k - m 
 \frac{(\textbf{\textit{x}}_i-\mub_k)(\textbf{\textit{x}}_i-\mub_k)^T}{(\textbf{\textit{x}}_i-\mub_k)^T\Sig_{k}^{-1}(\textbf{\textit{x}}_i-\mub_k)}\right] &.&  
\end{eqnarray*}
Equating the latter expression to zero leads to Eq.~\eqref{wij} and concludes the proof.
\end{proof}

\subsection{Proof of Proposition \ref{prop.mono}}\label{app-proof-mono}

\begin{proof}
%This proof is a standard EM proof and strongly relies on the estimators derived in proposition \ref{proptaus} and \ref{propmusigma} derivations. 
Similarly to the proof given by \cite{EMalgo}, we can decompose $E_{Z|\textbf{\textit{x}},\theta^*}[l(Z,\textbf{\textit{x}};\theta)]$ as follows:

%$\textbf{\textit{x}}=(\textbf{\textit{x}}_1^T,...,\textbf{\textit{x}}_n^T)^T$ $Z=(Z_1,...,Z_n)^T$

%\matt{Writing the log likelihood of the model with latent variables} 
% \begin{eqnarray}
%  && E_{Z|\textbf{x},\theta^*}[l(Z,\textbf{x};\theta)]  \nonumber \\
% &&  = \sum_{i=1}^n \log \sum_{k=1}^K \pi_k f_{i,\theta_k}(\textbf{\textit{x}}_i) + \sum_{i=1}^n \sum_{k=1}^K P_{i, \theta^*_k}(Z_i=k|\mathbf{x}_i=\textbf{\textit{x}}_i)\log{P_{i,\theta}(Z_i=k|\mathbf{x}_i=\textbf{\textit{x}}_i)}.\nonumber \\
% && = l(\textbf{\textit{x}};\theta) + \sum_{i=1}^n \sum_{k=1}^K P_{i,\theta^*}(Z_i=k|\mathbf{x}_i=\textbf{\textit{x}}_i)\log{P_{i,\theta}(Z_i=k|\mathbf{x}_i=\textbf{\textit{x}}_i)}.
% \label{eq.decompexplike} 
% \end{eqnarray}

\begin{equation}
E_{Z|\textbf{\textit{x}},\theta^*}[l(Z,\textbf{\textit{x}};\theta)] = l(\textbf{\textit{x}};\theta) + H(\theta, \theta^{*}),
\label{sum.decomp}    
\end{equation}
where we define 

$$H(\theta, \theta^{*}) = 
E_{Z|\textbf{\textit{x}},\theta^*}[l(Z,\textbf{\textit{x}};\theta)] - l(\textbf{\textit{x}};\theta).$$
We can re-write this expression as
\begin{eqnarray}
H(\theta, \theta^{*}) &=&
E_{Z|\textbf{\textit{x}},\theta^*}[l(Z,\textbf{\textit{x}};\theta)] - l(\textbf{\textit{x}};\theta) \nonumber \\ 
%&=& \sum_{i=1}^n E_{Z_i|\textbf{\textit{x}},\theta^*}[l(Z_i,\textbf{\textit{x}};\theta) - l(\textbf{\textit{x}};\theta)]  \nonumber \\ 
&=& \sum_{i=1}^n E_{Z_i|\textbf{\textit{x}}_i,\theta^*}\left[\log\left(\frac{f_{i, \theta}(Z_i,\textbf{\textit{x}}_i)}{f_{i}(\textbf{\textit{x}}_i)}\right)\right] \nonumber \\
&=& \sum_{i=1}^n E_{Z_i|\textbf{\textit{x}}_i,\theta^*}[\log(P_{i, \theta}(Z_i|\textbf{\textit{x}}_i)],\label{eq.hdef}
\end{eqnarray}
where $f_{i,\theta}(Z_i,\textbf{\textit{x}}_i) = \displaystyle \sum_{k=1}^K \Ind_{Z_i=k} f_{i, \theta_k} (\textbf{\textit{x}}_i)$, $\Ind_{\Omega}$ is the indicator function equal to 1 on $\Omega$ and 0 elsewhere. Moreover,  $P_{i,\theta}(a|b)= \frac{f_{i,\theta}(a,b)}{f_{i,\theta}(b)}$.
%where the variable I denotes that the probability depends on the $i$ index of the data.
At this point, we use the fact that $\theta^{(t+1)}$, the set of estimations computed in iteration $t+1$ and derived in Proposition \ref{propmusigma}, fulfills the following equality:

\begin{equation}
  E_{Z|\textbf{\textit{x}},\theta^{(t)]}}[l(Z,\textbf{\textit{x}};{\theta^{(t+1)}})] = \underset{\theta}{\text{max}}  \ E_{Z|\textbf{\textit{x}},\theta^{(t)}}[l(Z,\textbf{\textit{x}};\theta)].  
  \label{eq.maxexp}
\end{equation}
Of course, we need to assume the convergence of the fixed-point equation system.
 Thus, using equation \eqref{sum.decomp} and the fact that $E_{Z|\textbf{\textit{x}},\theta^{(t)}}[l(Z,\textbf{\textit{x}};\theta^{(t+1)})] \geq E_{Z|\textbf{\textit{x}},\theta^{(t)}}[l(Z,\textbf{\textit{x}};\theta^{(t)})]$ from \eqref{eq.maxexp}, we derive the following inequality:

\begin{eqnarray}
l(\textbf{\textit{x}};\theta^{(t+1)}) - l(\textbf{\textit{x}};\theta^{(t)}) \geq H(\theta^{(t)}, \theta^{(t)}) - H(\theta^{(t+1)}, \theta^{(t)}) &=& \nonumber \\ \sum_{i = 1}^n E_{Z_i|\textbf{\textit{x}},\theta^{(t)}}\Bigg[\log{\left(\frac{P_{i, \theta^{(t)}}(Z_i|\textbf{\textit{x}}_i)}{P_{i, \theta^{(t+1)}}(Z_i|\textbf{\textit{x}}_i)}\right)}\Bigg] &\geq& \nonumber\\
-\sum_{i = 1}^n \log\Bigg[{E_{Z|\textbf{\textit{x}},\theta^{(t)}}\left[\frac{P_{i, \theta^{(t+1)}}(Z_i|\textbf{\textit{x}}_i)}{P_{i, \theta^{(t)}}(Z_i|\textbf{\textit{x}}_i)}\right]}\Bigg], \nonumber
\end{eqnarray}
where in the inequality we applied the Jensen inequality for the $-\log$ function. As the expectation is one and in consequence the sum is zero, then $l(\textbf{\textit{x}};\theta^{(t+1)})\geq l(\textbf{\textit{x}};\theta^{(t)})$ and that concludes the proof.
\end{proof}

\subsection{Proof of Proposition \ref{prop.pik}}
\label{app-proof-prop-pik}
\begin{proof}
By definition, one has $p_{ik} = P_{\theta}(Z_i = k | \mathbf{x}_i  =  \textbf{\textit{x}}_i)$. Using the Bayes theorem, one obtains, for $i=1, \hdots, n$ and $k=1, \hdots, K$:
\begin{equation*}
p_{ik} = \frac{{\pi}_k  f_{i,\theta_k}(\textbf{\textit{x}}_i) }{\sum\limits_{j = 1}^K \pi_j  f_{i,\theta_j}(\textbf{\textit{x}}_i)} \end{equation*}
Now, the estimated conditional probability can be written by replacing unknown parameters by their previously derived estimators as

\begin{eqnarray}
\widehat{p}_{ik} &=& \frac{\widehat{\pi}_k  f_{i,\widehat{\theta}_k}(\mathbf{{x}}_i) }{\sum\limits_{j = 1}^K \widehat{\pi}_j  f_{i,\widehat{\theta}_j}(\mathbf{{x}}_i)} \label{eq.bayes} \\ &=&  \frac{\widehat{\pi}_k  A_{ik} \widehat{\tau}_{ik}^{-m/2}|\widehat{\Sig}_k |^{-1/2}\,g_i\left(\frac{(\mathbf{{x}}_i-\widehat{\mub}_k)^T\widehat{\Sig}_k^{-1}(\mathbf{{x}}_i-\widehat{\mub}_k)}{\widehat{\tau}_{ik}}\right)}
{\sum\limits_{j = 1}^K \widehat{\pi}_j  A_{ik} \widehat{\tau}_{ij}^{-m/2} |\widehat{\Sig}_j |^{-1/2}\,g_i\left(\frac{(\mathbf{{x}}_i-\widehat{\mub}_j)^T\widehat{\Sig}_j^{-1}(\mathbf{{x}}_i-\widehat{\mub}_j)}{\widehat{\tau}_{ij}}\right)}.\label{eq.generalpij} \end{eqnarray}

Finally, using the expression of $\widehat{\tau}_{ik}$ given by Eq. \eqref{eq.taua} obtained in Proposition \ref{proptaus}, one obtains
\begin{equation*}
    \widehat{p}_{ik} = \frac{\widehat{\pi}_k \left((\mathbf{{x}}_i-\widehat{\mub}_k)^T\widehat{\Sig}_k^{-1}(\mathbf{{x}}_i-\widehat{\mub}_k)\right)^{-m/2}|\widehat{\Sig}_k |^{-1/2}\,\underset{t}{\max}\left(a_{i}\,t^{m/2}\,g_i(t)\right)}
{\sum_{j = 1}^K \widehat{\pi}_j \left((\mathbf{{x}}_i-\widehat{\mub}_j)^T\widehat{\Sig}_j^{-1}(\mathbf{{x}}_i-\widehat{\mub}_j)\right)^{-m/2}|\widehat{\Sig}_j |^{-1/2}\,\underset{t}{\max}\left(a_{i}\,t^{m/2}\,g_i(t)\right)},
\end{equation*}
where we use that $a_i^{m/2}\,g_i(a_i) = \underset{t}{\max}\left(\,t^{m/2}\,g_i(t)\right)$, by definition of the $a_i = \underset{t}{\arg\sup}\{t^{m/2}g_i(t)\}$. Thus one finally obtains
\begin{equation}
    \widehat{p}_{ik} = \frac{\widehat{\pi}_k \left((\mathbf{{x}}_i-\widehat{\mub}_k)^T\widehat{\Sig}_k^{-1}(\mathbf{{x}}_i-\widehat{\mub}_k)\right)^{-m/2}|\widehat{\Sig}_k |^{-1/2}} {\sum_{j = 1}^K \widehat{\pi}_j \left((\mathbf{{x}}_i-\widehat{\mub}_j)^T\widehat{\Sig}_j^{-1}(\mathbf{{x}}_i-\widehat{\mub}_j)\right)^{-m/2}|\widehat{\Sig}_j |^{-1/2}}.
\end{equation}

\end{proof}

\subsection{Proof of Proposition \ref{prop.student}}
\label{app-proof-prop4}

%\textit{Given an independent sample of a mixture of $K$ $t-$distributions, with $\mathbf{x}_i \sim t_{\nu_k}$ being $\nu_k$ the degrees of freedom. If $\frac{\nu_k}{\nu_k+m} \approx (1-\varepsilon_k)^2 \forall k$, then $$\widehat{p}_{ik}= \frac{\Sig_k^{-1/2}[(\xb_i-\mub_k)^T\Sig_k^{-1}(\xb_i-\mub_k)]^{-m/2}}{\sum_{j=1}^K \Sig_j^{-1/2}[(\xb_i-\mub_j)^T\Sig_j^{-1}(\xb_i-\mub_j)]^{-m/2}} +V \varepsilon_k$$ }

\noindent

\begin{proof}
We re-write  the $t-$distribution density as
$$f_k(\textbf{\textit{x}}_i) = A_k L_{0ik} s_{ik}^{m/2} g_k(s_{ik}),$$
with $s_{ik}=\frac{(\textbf{\textit{x}}_i-\mub_k)
^T\Sig_{k}^{-1}(\textbf{\textit{x}}_i-\mub_k
)}{\tau_{ik}}$, $A_k=\pi^{-m/2}$, 

\begin{equation}
g_k(t) = \frac{\Gamma(\frac{\nu_k+m}{2})}{\Gamma(\frac{\nu_k}{2})} \nu_k^{-m/2} \left[1 + \frac{t}{\nu_k}\right]^{-(\nu_k+m)/2},    
\end{equation}
and the distribution-free factor

\begin{equation}
L_{0ik} = |\Sig_k|^{-1/2}  [(\textbf{\textit{x}}_i - \mub_k)^T\Sig_k^{-1}(\textbf{\textit{x}}_i - \mub_k)]^{-m/2}.
\end{equation}
With this factorization of the density function, we can work on the two decoupled factors that let us write 
\begin{equation}
\widehat{p}_{ik} = \frac{ \widehat{\pi}_k A_k \widehat{L}_{0ik} \underset{t}{\sup}\{t^{m/2}g_k(t)\}}{\sum\limits_{j=1}^K \widehat{\pi}_j A_j \widehat{L}_{0ij} \underset{t}{\sup}\{t^{m/2}g_j(t)\}}.    
\label{pijhatt}
\end{equation}
First, we  compute the derivative of $t^{m/2}g_k(t)$ to get $\underset{t \geq 0}{\sup}\{t^{m/2}g_j(t)\}$

\begin{eqnarray}
\frac{d}{dt}\left(\frac{\Gamma(\frac{\nu_k+m}{2})}{\Gamma(\frac{\nu_k}{2})} \nu_k^{-m/2} t^{m/2} \left[1 + \frac{t}{\nu_k}\right]^{-(\nu_k+m)/2}\right) &=& \\ 
\left(\frac{m}{2} t^{m/2-1} \left[1+\frac{t}{\nu_k}\right]^{(\nu_k+m)/2} - \frac{t^{m/2}}{\nu_k} \frac{\nu_k+m}{2}\left(1+\frac{t}{\nu_k}\right)^{-(\nu_k+m)/2-1}\right) &=& \\
 t^{m/2-1} \left(1+\frac{t}{\nu_k}\right)^{-(\nu_k+m)/2-1} \left[\frac{m}{2}\left(1+\frac{t}{\nu_k}\right)-\frac{\nu_k+m}{2} \frac{t}{\nu_k} \right].
\end{eqnarray}
Equating the latter to 0, we get two possible solutions: $t = 0$ and $t = m$. Then, $m$ maximizes $t^{m/2}g_j(t)$ and the maximum is reached and is 

$$\frac{\Gamma(\frac{\nu_k+m}{2})}{\Gamma(\frac{\nu_k}{2})} \left(\frac{\nu_k}{m}\right)^{-m/2}  \left[1 + \frac{m}{\nu_k}\right]^{-(\nu_k+m)/2}.$$

\underline{Case large $\nu_k$'s and fixed $m$:}\\
For $\nu_k$ tending to infinity and fixed $m$, one retrieves the Gaussian case as follows

\begin{eqnarray}
\frac{\Gamma(\frac{\nu_k+m}{2})}{\Gamma(\frac{\nu_k}{2})} \left(\frac{\nu_k}{m}\right)^{-m/2}  \left[1 + \frac{m}{\nu_k}\right]^{-(\nu_k+m)/2} &\approx&\\
\frac{\Gamma(\frac{\nu_k}{2})(\frac{\nu_k}{2})^{-m/2}}{\Gamma(\frac{\nu_k}{2})} \left(\frac{\nu_k}{m}\right)^{-m/2}  \left[\left[1 + \frac{1}{\nu_k/m}\right]^{\nu_k/m}\right]^{-m(\nu_k+m)/(2\nu_k)} &\approx&\\
m^{m/2}(2e)^{-m/2},
\end{eqnarray}
where the $2^{-m/2}$ factor corresponds to the normalizing constant of the Gaussian case together with $A_k=\pi^{-m/2}$.\\

\underline{Case large $\nu_k$'s and $m$:}\\
If $\nu_k$ are fixed and $m$ tends to infinity, as in Gaussian case, one notice that this part of the likelihood diverges. Consequently, we assume $\nu_k$ and $m$ are large and of the same rate $\frac{\nu_k}{m} \rightarrow c_k$.

Then, using the Stirling approximation of the Gamma function
$$\Gamma(z) = \sqrt{\frac{2\pi}{z}}\left(\frac{z}{e}\right)^z \left(1 + O\left(\frac{1}{z}\right)\right),$$ 
we derive the following approximations.

\begin{eqnarray}
\frac{\Gamma(\frac{\nu_k+m}{2})}{\Gamma(\frac{\nu_k}{2})} \left(\frac{\nu_k}{m}\right)^{-m/2}  \left[1 + \frac{m}{\nu_k}\right]^{-(\nu_k+m)/2} &=& \nonumber\\
\frac{\Gamma\left(\frac{(1+c_k)m}{2}\right)}{\Gamma\left(\frac{c_km}{2}\right)} c_k^{-m/2}  \left[1 + \frac{1}{c_k}\right]^{-(1+c_k)m/2} &=& \nonumber \\
\frac{\sqrt{\frac{4\pi}{(1+c_k)m}} \left(\frac{(1+c_k)m}{2e}\right)^{(1+c_k)m/2} \left(1 + O\left(\frac{1}{m}\right)\right)}{\sqrt{\frac{4\pi}{c_km}} \left(\frac{c_k m}{2e}\right)^{c_km/2} \left(1 + O\left(\frac{1}{m}\right)\right)} c_k^{-m/2}  \left[1 + \frac{1}{c_k}\right]^{-(1+c_k)m/2}   &=& \nonumber \\
\sqrt{\frac{c_k}{1+c_k}} (2e)^{-\frac{m}{2}} \left(\frac{1+c_k}{c_k}\right)^{(1+c_k)m/2} m^{\frac{m}{2}} \frac{\left(1 + O\left(\frac{1}{m}\right)\right)}{ \left(1+O\left(\frac{1}{m}\right)\right)}   \left[ \frac{1+c_k}{c_k}\right]^{-(1+c_k)m/2}   &=& \nonumber \\
\sqrt{\frac{c_k}{1+c_k}} (2e)^{-\frac{m}{2}} m^{\frac{m}{2}} \frac{\left(1 + O\left(\frac{1}{m}\right)\right)}{ \left(1+O\left(\frac{1}{m}\right)\right)}.     && \nonumber
% \frac{1}{\Gamma(\frac{\nu_k}{2})}\sqrt{\frac{4\pi}{\nu_k+m}} \left(\frac{\nu_k+m}{2e}\right)^{-(\nu_k+m)/2} \left(\frac{\nu_k}{m}\right)^{-m/2}  \left[ \frac{\nu_k+m}{\nu_k}\right]^{-(\nu_k+m)/2} &=& \\ \nonumber
% \frac{1}{\Gamma(\frac{\nu_k}{2})}\sqrt{\frac{4\pi}{\nu_k+m}} \left(2e\right)^{-(\nu_k+m)/2} \nu_k^{-m/2} m^{m/2}  \nu_k^{(\nu_k+m)/2} &=& \\ \nonumber
% \frac{1}{\Gamma(\frac{\nu_k}{2})}\sqrt{\frac{4\pi}{\nu_k+m}} \left(2e\right)^{-(\nu_k+m)/2} m^{m/2}  \nu_k^{\nu_k/2} &&
\end{eqnarray}
When replacing this expression in \eqref{pijhatt} we derived the following approximation,

\begin{eqnarray}
\widehat{p}_{ik} &=& \frac{ \widehat{\pi}_k \widehat{L}_{0ik} \sqrt{\frac{c_k}{1+c_k}} (2e)^{-\frac{m}{2}} m^{\frac{m}{2}} \frac{\left(1 + O\left(\frac{1}{m}\right)\right)}{ \left(1+O\left(\frac{1}{m}\right)\right)}}{\sum\limits_{j=1}^K \left(\widehat{\pi}_j \widehat{L}_{0ij} \sqrt{\frac{c_j}{1+c_j}} (2e)^{-\frac{m}{2}} m^{\frac{m}{2}} \frac{\left(1 + O\left(\frac{1}{m}\right)\right)}{ \left(1+O\left(\frac{1}{m}\right)\right)}\right)} \nonumber \\
% &=& \frac{ \widehat{\pi}_k \widehat{L}_{0ik} \sqrt{\frac{c_k}{1+c_k}} \frac{\left(1 + O\left(\frac{1}{m}\right)\right)}{ \left(1+O\left(\frac{1}{m}\right)\right)}}{\sum\limits_{j=1}^K \left(\widehat{\pi}_j \widehat{L}_{0ij} \sqrt{\frac{c_j}{1+c_j}} \frac{\left(1 + O\left(\frac{1}{m}\right)\right)}{ \left(1+O\left(\frac{1}{m}\right)\right)}\right)} \nonumber \\
&=& \frac{ \widehat{\pi}_k \widehat{L}_{0ik} \sqrt{\frac{c_k}{1+c_k}} \left(1 + O\left(\frac{1}{m}\right)\right)}{\sum\limits_{j=1}^K \left( \widehat{\pi}_j \widehat{L}_{0ij} \sqrt{\frac{c_j}{1+c_j}} \left(1 + O\left(\frac{1}{m}\right)\right)\right)} \nonumber \\
% &=& \frac{ \widehat{\pi}_k \widehat{L}_{0ik} \sqrt{\frac{c_k}{1+c_k}} \left(1 + O\left(\frac{1}{m}\right)\right)}{\sum\limits_{j=1}^K \left( \widehat{\pi}_j \widehat{L}_{0ij} \sqrt{\frac{c_j}{1+c_j}} + O\left(\frac{1}{m}\right)\right)} \nonumber \\
&=& \frac{ \widehat{\pi}_k \widehat{L}_{0ik} \sqrt{\frac{c_k}{1+c_k}} + O\left(\frac{1}{m}\right)}{\sum\limits_{j=1}^K \left( \widehat{\pi}_j \widehat{L}_{0ij} \sqrt{\frac{c_j}{1+c_j}} \right) + O\left(\frac{1}{m}\right)}. \nonumber \\
\end{eqnarray}
Finally, one obtains for $i=1, \hdots, n$ and $k=1, \hdots, K$
$$\widehat p_{ik} = \frac{ \widehat{\pi}_k \widehat{L}_{0ik} \sqrt{\frac{c_k}{1+c_k}} }{\sum\limits_{j=1}^K \left( \widehat{\pi}_j \widehat{L}_{0ij} \sqrt{\frac{c_j}{1+c_j}} \right)} + O\left(\frac{1}{m}\right). $$

If $c_k=c$ for all $k$, we retrieve the same result as the particular case developed in Section \ref{general-pik-case}. 
% Finally, we can bound the expression with $\varepsilon^*=max\{\varepsilon_1,...,\varepsilon_K\}$ 

% $$\widehat{p}_{ik} = \frac{\widehat{\pi}_k \widehat{L}_{0ik} }{\sum\limits_{j=1}^K \widehat{\pi}_j \widehat{L}_{0ij}}   + O\left(\varepsilon^*\right)+ O\left(\frac{1}{m}\right) $$

\end{proof}

\subsection{Proof of Proposition \ref{prop.hd}}
\label{app-proof-prop5}

\begin{proof}
\textcolor{black}{By Tyler's Theorem that applies to elliptical distributions under the assumptions included in the proposition, we have the convergence of the scatter matrix $\widehat{\Sig}_k$ to the true $\Sig_k$ in probability \citep[][Theorem 4.1]{tyler1987distribution}.}

Then, by applying the continuous mapping theorem, it follows that $\widehat{\Sig}_k^{-1} \overset{\mathcal{P}}{\longrightarrow} {\Sig_k}^{-1}$. Given that $\mathbf x_i$, one has

 \begin{eqnarray*}
\widehat{\tau}_{ik} &=& \frac{(\mathbf{x}_i-{\widehat{\mub}_k})^T\widehat{\Sig}_k^{-1}(\mathbf{x}_i-{\widehat{\mub}_k})}{m} \\
&=& \frac{(\sqrt{\tau_{ik}}\Ab_k\mathbf{q}_i+{\mub_k}-\widehat{\mub}_k)^T\widehat{\Sig}_k^{-1}(\sqrt{\tau_{ik}}\Ab_k\mathbf{q}_i+{\mub_k}-\widehat{\mub}_k)}{m}.
\end{eqnarray*}

Combining $\widehat{\Sig}_k^{-1} \overset{\mathcal{P}}{\longrightarrow} {\Sig_k}^{-1}$ and $\widehat{\mub}_k \overset{\mathcal{P}}{\longrightarrow} \mub_k $ and the Slutsky theorem leads to 

$$\widehat{\tau}_{ik} \overset{\mathcal{P}}{\longrightarrow} \frac{\sqrt{\tau_{ik}}\mathbf{q}_i^T\Ab_k^T \Sig^{-1}\sqrt{\tau_{ik}}\Ab_k\mathbf{q}}{m} = \frac{\tau_{ik} \mathbf{q}_i^T\mathbf{q}_i}{m}.$$
% By the continuous mapping theorem, we have that $\widehat{\Sig}_k^{-1/2} \overset{\mathcal{P}}{\longrightarrow} {\Sig_k}^{-1/2} = \Ab_k^{-1}$. Given that $x_i$ can be written as in \eqref{student-model} we have}
%  \begin{eqnarray*}
% \widehat{\tau}_i &=& \frac{(\mathbf{x}_i-{\mub_k})^T\widehat{\Sig}_k^{-1}(\mathbf{x}_i-{\mub_k})}{m} \\
% &=& \frac{(\sqrt{\tau_{ik}}\Ab_k\gb_i+\widehat{\mub}_k-{\mub_k})^T\widehat{\Sig}_k^{-1}(\sqrt{\tau_{ik}}\Ab_k\gb_i+\widehat{\mub}_k-{\mub_k})}{m}.
% \end{eqnarray*}
% Because of convergence properties and Slutsky's theorem, we have that 
% $$\widehat{\Sig}_k^{-1/2}(\sqrt{\tau_{ik}}\Ab_k\gb_i+\widehat{\mub}_k-{\mub_k}) \overset{\mathcal{D}}{\to} {\Ab_k}^{-1}\sqrt{\tau_{ik}}\Ab_k\gb_i=\sqrt{\tau_{ik}}\gb_i.$$
% Using the continuous mapping theorem with the function $f(\mathbf{u})=\frac{\mathbf{u}^T\mathbf{u}}{m}$ we get that 
% $$\frac{(\sqrt{\tau_{ik}}\Ab_k\gb_i+\widehat{\mub}_k-{\mub_k})^T\widehat{\Sig}_k^{-1}(\sqrt{\tau_{ik}}\Ab_k\gb_i+\widehat{\mub}_k-{\mub_k})}{m} \overset{\mathcal{D}}{\longrightarrow} \frac{\tau_{ik} \gb_i^T \gb_i}{m}.$$
% We have that $\widehat{\tau}_i \overset{\mathcal{D}}{\longrightarrow} \frac{\tau_{ik} \gb_i^T\gb_i}{m}$ when $n \to \infty$: for every fixed $m$ value, $\widehat{\tau}_i \underset{approx}{\sim} \frac{\tau_{ik} \gb_i^T\gb_i}{m}$ if $n$ is big enough. 
Furthermore, 

$$\frac{\tau_{ik} \mathbf{q}_i^T\mathbf{q}_i}{m}=\frac{\tau_{ik} \sum_{l=1}^m(\mathbf{q}_i)^2_l}{m},$$
with the components $(\mathbf{q}_i)_1^2,..., (\mathbf{q}_i)_m^2$ i.i.d. distributed as $\chi^2(1)$ because $\mathbf{q}_i\sim \mathcal{N}(0, \Iden_m)$.  Thus, 
$\widehat{\tau}_{ik}$ tends to $\tau_{ik} \frac{\chi^2(m)}{m}$. 
%In consequence, as $E[(\gb_i)_1^2]=1$ and $V[(\gb_i)_1^2]=2$, we have the almost sure convergence of $\sum_{l=1}^m(\gb_i)^2_l/m$ to 1 given by the Law of Large Numbers when $m$ tends to infinity. 

Now, to assess the behavior when m tends to infinity, one has thanks to the Central Limit Theorem that, since $E[(\mathbf{q}_i)_1^2]=1$ and $V[(\mathbf{q}_i)_1^2]=2$, for $m$ large enough
 
 $$\frac{\tau_{ik} \sum_{l=1}^m(\mathbf{q}_i)^2_l}{m} \underset{}{\sim} \mathcal{N}(\tau_{ik}, 2\tau_{ik}^2/m),$$ 

Finally, sequentially combining the approximations and imposing the condition $n>m(2m-1)$ to ensure the existence and uniqueness of the estimator, one obtains the limiting distribution for $(\widehat{\tau}_{ik} - \tau_{ik})$.
 %Finally, if $n$ and $m$ are large enough and grow in a proportion so that we ensure the existence of the solutions then with high probability $\widehat{\tau}_i$ is close to $\tau_{ik}$.} 
\end{proof}

\vskip 0.2in
\bibliographystyle{plainnat}
\bibliography{biblio}

\end{document}